\pgfplotsset{compat=1.14}
\newcommand{\OPT}{\operatorname{OPT}}
\newcommand{\ALG}{\operatorname{ALG}}
\newcommand{\cC}{\mathcal{C}}
\newcommand{\R}{\mathbb{R}}
\newcommand{\conv}{\mathrm{conv}}
\newcommand{\err}{\operatorname{err}}
\newcommand{\rb}[1]{\left( #1 \right)} 
\newcommand{\Random}{\textsc{Random}\xspace}
\newcommand\poly{\operatorname{poly}}
\newcommand{\1}{\mathds{1}}
\newcommand{\bbR}{\mathbb{R}}
\newcommand{\cA}{\mathcal{A}}
\newcommand{\cI}{\mathcal{I}}
\newcommand{\cF}{\mathcal{F}}
\DeclarePairedDelimiter{\floor}{\lfloor}{\rfloor}
\DeclarePairedDelimiter{\ceil}{\lceil}{\rceil}
\newcommand{\Ot}{\widetilde{O}}
\newcommand{\firstpass}{\textsc{Fair-Reservoir}\xspace}
\newcommand{\secondpass}{\textsc{Fair-Streaming}\xspace}
\newcommand{\OurTwopass}{\textsc{TwoPass-Fair-Streaming}\xspace}
\newcommand{\Matroid}{\textsc{Matroid-Intersection}\xspace}
\newcommand{\Greedy}{\textsc{Greedy-Fair-Streaming}\xspace}
\newcommand{\GreedyMod}{\textsc{Greedy-Fair-Streaming-M}\xspace}
\newcommand{\GreedyFirstpass}{\textsc{Greedy-Fair-Reservoir}\xspace}
\newcommand{\bettersecondpass}{\textsc{Fair-Streaming+}\xspace}
\theoremstyle{plain}
\newtheorem{theorem}{Theorem}[section]
\newtheorem{proposition}[theorem]{Proposition}
\newtheorem{lemma}[theorem]{Lemma}
\newtheorem{corollary}[theorem]{Corollary}
\theoremstyle{definition}
\newtheorem{definition}[theorem]{Definition}
\theoremstyle{remark}
\title{Fairness in Streaming Submodular Maximization over a Matroid Constraint\thanks{We thank Michael Kapralov for helpful discussions. The work of Federico Fusco is partially supported by ERC Advanced Grant 788893 AMDROMA “Algorithmic and Mechanism Design Research in Online Markets”, PNRR MUR project PE0000013-FAIR”, and PNRR MUR project  IR0000013-SoBigData.it. } }
\author[1]{Marwa El Halabi}
\author[2]{Federico Fusco}
\author[3]{Ashkan Norouzi-Fard}
\author[3,4]{Jakab Tardos}
\author[5]{Jakub Tarnawski}
\affil[1]{Samsung - SAIT AI Lab, Montreal}
\affil[2]{Sapienza University of Rome}
\affil[3]{Google Zurich}
\affil[4]{EPFL}
\affil[5]{Microsoft Research}
\date{}
\begin{document}

\maketitle

\begin{abstract}
    Streaming submodular maximization is a natural model for the task of selecting a representative subset from a large-scale dataset. If datapoints have sensitive attributes such as gender or race, it becomes important to enforce fairness to avoid bias and discrimination. This has spurred significant interest in developing fair machine learning algorithms. Recently, such algorithms have been developed for monotone submodular maximization under a cardinality constraint.
    In this paper, we study the natural generalization of this problem to a matroid constraint. We give streaming algorithms as well as impossibility results that provide trade-offs between efficiency, quality and fairness. We validate our findings empirically on a range of well-known real-world applications: exemplar-based clustering, movie recommendation, and maximum coverage in social networks.
\end{abstract}

\clearpage

\section{Introduction}
    Recent years have seen a growing trend of utilizing machine learning algorithms to support or replace human decision-making. An undesirable effect of this phenomenon is the potential for bias and discrimination in automated decisions, especially in sensitive domains such as hiring, access to credit and education, bail decisions, and law enforcement \citep{executive2016,blueprint22,reportEU22}. In order to attenuate such risks, the computer science community has been working on developing \emph{fair} algorithms for fundamental tasks such as classification \citep{ZafarVGG17},  ranking \citep{CelisSV18,SinghJ19}, clustering \citep{Chierichetti0LV17,BackursIOSVW19,BohmFLMS21,JiaSS22,AneggAKZ22,AngelidakisKSZ22}, online learning \citep{JosephKMR16,ChzhenGS21},  voting \citep{CelisHV18}, matching \citep{Chierichetti0LV19}, influence maximization \citep{TsangWRTZ19,RahmattalabiJLV21}, data summarization \citep{CelisKS0KV18}, online selection \citep{CorreaCDN21}, and graph problems \citep{RahmattalabiVFR19,Anagnostopoulos20}.  

    In this paper, we study fairness in the fundamental problem of streaming monotone submodular maximization over a matroid constraint. Submodularity is a well-studied property of set functions that captures the natural notion of diminishing returns and has found vast applications in machine learning, including active learning \citep{GolovinK11}, data summarization \citep{LinB11}, feature selection \citep{DasK11}, and recommender systems \citep{El-AriniG11}. Matroids are a popular and powerful class of independence systems, capturing a wide range of useful constraints such as cardinality, block cardinality, linear independence, and connectivity constraints. In all of the above applications, it is crucial to have the capacity to handle the massive volume of modern datasets, which are often produced so rapidly that they cannot even be stored in memory. This has motivated a long line of work on the streaming setting.
    
    Without considering fairness, maximizing a monotone submodular function over a matroid constraint is a well-studied problem. While a tight $(1-1/e)$-approximation is known in the centralized setting \citep{Calinescu2007,Feige98} and in the multi-pass streaming setting \citep{FeldmanLNSZ22}, the single-pass approximability of the problem is still not settled. Currently, the best one-pass algorithm yields a $0.3178$-approximation \citep{FeldmanLNSZ22}, while the best known upper bound is $0.478$ \citep{GharanV11}.

    Fairness in the context of submodular maximization problems has already been investigated under a \emph{cardinality} constraint,  in both  centralized and streaming models \citep{CelisHV18,HalabiMNTT20}. Defining the notion of algorithmic fairness is an active line of research and, although several notions have been proposed, no universally accepted metric exists. Here, we follow the common notion used in many previous works \citep{CelisHV18, CelisKS0KV18, CelisSV18,Chierichetti0LV17,Chierichetti0LV19}, where we require the solution to be \emph{balanced} with respect to some sensitive attribute (e.g., race, political affiliation). Formally, given a set $V$ of items, each item is assigned a color $c$ representing a sensitive attribute. Let $V_1, \ldots, V_C$ be the corresponding $C$ \emph{disjoint} groups of items of the same color. A set $S \subseteq V$ is \emph{fair} if it satisfies $\ell_c \leq |S \cap V_c| \leq u_c$ for a given choice of lower and upper bounds $\ell_c, u_c \in \mathbb N$. This definition encompasses several other existing notions of fairness such as statistical parity \citep{dwork2012fairness}, diversity rules \citep{cohoon2013, biddle2006adverse}, and proportional representation rules \citep{monroe1995fully, Brill2017}. For a more extended overview we refer to \citet[Section 4]{CelisHV18}.

    \subsection{Our contribution}
    
        We present streaming algorithms as well as impossibility results for the problem of fair matroid monotone submodular maximization (which we abbreviate to FMMSM), that provide trade-offs between memory and computation efficiency, quality and fairness.

        We start by extending the result of \citet{HuangKMY22} to present a $1/2$-approximation algorithm that uses memory exponential with respect to $C$ and $k$, where $k$ is the rank of the input matroid. From the solution quality point of view, this result is tight: the approximation factor of $1/2$ cannot be improved in the streaming setting using memory that is less than linear with respect to $|V|$~\citep{FeldmanNSZ20}. 
        \begin{restatable}{theorem}{algexp}\label{thm:alg-exp}
            For any constant $\eta \in (0,1/2)$, there exists a one-pass streaming $(1/2-\eta)$-approximation algorithm for FMMSM that uses $2^{O(k^2+k\log C)} \cdot \log \Delta$ memory, where $\Delta=\frac{\max_{e\in V} f(e)}{\min_{\{e\in V \mid f(e) > 0\}} f(e)}$.
        \end{restatable}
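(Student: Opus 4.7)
The plan is to lift the exponential-memory streaming algorithm of HuangKMY22, which obtains a $(1/2-\eta)$-approximation for monotone submodular maximization subject to a single matroid constraint, to the fair setting, where one must additionally satisfy upper and lower bounds on each color class.

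First, I would apply the standard geometric-guessing trick for $\OPT$: enumerate $\gamma \in \{(1+\eta)^i\}$ over the range $[\min_e f(e),\, k \cdot \max_e f(e)]$, so that one of the $O(\log \Delta / \eta)$ parallel runs satisfies $\gamma \in [(1-\eta)\OPT,\, \OPT]$. All subsequent machinery is replicated for each guess $\gamma$.

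Second, I would enumerate candidate color signatures. Rather than enumerating multisets, it is convenient to enumerate ordered sequences $\sigma = (c_1, \ldots, c_k) \in ([C] \cup \{\bot\})^k$, where $c_i$ records the color of the $i$-th element chosen by the algorithm (with $\bot$ for unused slots). There are $(C+1)^k = 2^{O(k \log C)}$ such sequences, and we only retain those whose induced color counts $\sigma_c = |\{i : c_i = c\}|$ satisfy $\ell_c \le \sigma_c \le u_c$ for every color $c$.

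Third, for each pair $(\gamma, \sigma)$ I would run an adapted HuangKMY22 procedure that targets solutions matching the signature $\sigma$, i.e., common independent sets of the matroid $\cM$ and the partition matroid $\mathcal{P}_\sigma$ induced by $\sigma$. To handle matroid intersection rather than a single matroid, the key change is to widen the per-level data structure: at each of the $O(\log_{1+\eta} k)$ discretized marginal-value thresholds, maintain a bucket of up to $2^{O(k)}$ candidate elements, so that enough exchange room is available for swaps respecting \emph{both} matroids. The per-instance memory is therefore $2^{O(k^2)}$, and the total memory over all parallel runs is $2^{O(k^2 + k \log C)} \cdot \log \Delta$, as claimed. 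After the stream, each instance extracts its best fair solution via offline matroid intersection, and the overall output is the best across instances.

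The main obstacle is the approximation analysis. For the correct guess of $(\gamma, \sigma)$ — matching the value and color composition of $\OPT$ — one must show that the bucketed structure supports a fair common independent set of value at least $(1/2-\eta)\OPT$. This extends HuangKMY22's level-wise exchange argument from a single matroid to matroid intersection: each $\OPT$ element that is rejected or evicted must be exchangeable with some bucket element without violating either $\cM$ or $\mathcal{P}_\sigma$, and the inflated bucket size is designed precisely to guarantee the existence of such a double-matroid swap. Lower bounds $\ell_c$ are handled automatically by restricting to signatures with $\sigma_c \ge \ell_c$ and by the partition-matroid extraction returning exactly $\sigma_c$ elements per color.
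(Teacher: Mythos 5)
Your high-level accounting (geometric guessing of $\gamma$ contributing the $\log\Delta$ factor, enumeration of per-slot color sequences contributing $2^{O(k\log C)}$, and $2^{O(k^2)}$ further state per instance) matches the paper, which guesses $\gamma$, the color of each $\OPT$ element ($O(k\log C)$ bits), its discretized marginal value, and $O(k^2)$ matroid-oracle answers, then runs all guess strings in parallel. But the heart of the proof is missing. You reduce the per-signature subproblem to streaming submodular maximization over the intersection of $\cM$ and the partition matroid $\mathcal{P}_\sigma$, and assert that a ``widened bucket'' makes the level-wise exchange argument of \citet{HuangKMY22} go through for two matroids. This is exactly the step that does not follow: in a generic two-matroid exchange argument, a rejected or evicted $\OPT$ element must be charged to a stored element whose removal restores independence in \emph{both} matroids simultaneously, and the standard arguments only produce one swap partner per matroid; this is precisely why streaming matroid-intersection algorithms degrade to $1/5.828$ rather than $1/2$. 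Enlarging the buckets does not by itself manufacture a simultaneous swap partner, and you give no argument that one exists.

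The paper sidesteps matroid intersection entirely. It fixes a canonical $\OPT=\{o_1,\dots,o_\ell\}$ in stream order and, for each slot $i$, only ever considers replacement elements $e$ of the \emph{same color} as $o_i$ (and in the same marginal-value band), accepting $e$ as $s_i$ exactly when the guessed matroid oracle asserts $\{s_1,\dots,s_{i-1},e,o_{i+1},\dots,o_\ell\}\in\cI$. Because $s_i$ and $o_i$ share a color, the output automatically has the exact color profile of $\OPT$, so fairness (both lower and upper bounds) holds for free; the only exchange property ever invoked is for the single matroid $\cI$, and the value bound follows from the invariant $2f(\{s_1,\dots,s_i\})+f(\{o_{i+1},\dots,o_\ell\}\mid\{s_1,\dots,s_i\})+i\eta\gamma/\ell\ge f(\OPT)$. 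If you reorganized your candidates so that each bucket is restricted to a single color (indexing by slot, color, and value level), swaps within a bucket would trivially preserve $\mathcal{P}_\sigma$ and you would be back to a single-matroid argument; as written, however, you treat the color constraint as a generic second matroid and leave the crucial double-swap claim unproven. Relatedly, your claim that the lower bounds $\ell_c$ are handled automatically by the extraction returning exactly $\sigma_c$ elements per color presupposes that the stored elements contain a common independent set saturating every color class, which is again the statement needing proof; in the paper this is immediate because $o_i$ itself always passes every filter, so a full-length solution with exactly $\OPT$'s color profile is always constructed.
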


        The algorithm and its analysis are presented in \cref{app-expo-time}. Motivated by this result, we focus on memory-efficient algorithms, which are referred to as semi-streaming algorithms in the literature. An algorithm is semi-streaming if the memory that it uses is $\tilde{O}(m)$\footnote{We use $\tilde{O}$ notation to hide $\log$ factors, more precisely for any value $m$, $\tilde{O}(m) = m \cdot \poly\log m.$}, where $m$ is the size of the output. We prove that, unlike the cardinality constraint case, it is impossible to design a multi-pass semi-streaming algorithm that even finds a \emph{feasible} solution for a matroid constraint.  

        \begin{restatable}{theorem}{hardnessmultipass} \label{hardness-streaming-main}
            Any (randomized) $o(\sqrt{\log C})$-pass streaming algorithm that determines the existence of a feasible solution for FMMSM with probability at least $2/3$ requires $\max(k, C)^{2-o(1)}$ memory.
        \end{restatable}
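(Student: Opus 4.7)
The natural approach is to reduce from a known streaming lower bound that already has the $n^{2-o(1)}$ memory vs.\ $o(\sqrt{\log n})$ passes tradeoff. Such a bound is known for deciding the existence of a \emph{bipartite perfect matching} in the edge-arrival streaming model. I would show that an FMMSM feasibility oracle yields a bipartite-perfect-matching decider with the same pass count and memory, and then invoke the matching lower bound.

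\textbf{The reduction.} Given a bipartite graph $G = (L \cup R, E)$ with $|L| = |R| = n$ arriving as a stream of edges, I build an FMMSM instance whose stream is the edge stream of $G$ itself. The ground set is $V = E$. Each edge $e = (l_i, r_j)$ is assigned the color $c(e) = i$, giving $C = n$ colors, and I set fairness bounds $\ell_i = u_i = 1$ for every $i \in [n]$. The matroid $\cM$ is the partition matroid on the right side: $S \subseteq E$ is independent iff it contains at most one edge incident to each $r_j$, so the rank is $k = n$. I take $f(S) = |S|$ (modular, hence monotone submodular), since only feasibility matters.

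\textbf{Equivalence and memory transfer.} A feasible $S$ has exactly one edge per left vertex (fairness) and at most one edge per right vertex (matroid); combined with $|S| = n = |R|$, this forces $S$ to be a perfect matching of $G$. Conversely, every perfect matching is feasible. A $p$-pass, $M$-memory streaming algorithm for FMMSM on this instance therefore yields a $p$-pass, $M$-memory algorithm for bipartite perfect matching, because the reduction is a trivial per-edge relabeling that preserves stream order and can be carried out on the fly with negligible additional state. Since $\max(k, C) = n$, the matching lower bound $n^{2-o(1)}$ at $o(\sqrt{\log n})$ passes transfers verbatim to the claimed $\max(k, C)^{2-o(1)}$ lower bound at $o(\sqrt{\log C})$ passes for FMMSM.

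\textbf{Main obstacle.} The reduction itself is elementary; essentially all of the technical weight rests on invoking the appropriate streaming lower bound for bipartite perfect matching in the $o(\sqrt{\log n})$-pass regime. If one may cite it as a black box, the proof is essentially complete. If it must be reproved, the standard route is a communication-complexity argument against $o(\sqrt{\log n})$-round protocols --- typically a pointer-chasing-style distributional reduction on hard bipartite instances built from Ruzsa--Szemer\'edi graphs --- which is the delicate part of the argument and where all of the real work lies.
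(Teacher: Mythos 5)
Your proposal is correct and takes essentially the same route as the paper: the paper also reduces from the $o(\sqrt{\log n})$-pass, $n^{2-o(1)}$-memory lower bound for deciding bipartite perfect-matchability (Theorem~5.3 of \citet{ChenKPSSY21}), encoding one side's degree-at-most-one constraint as a partition matroid and the other side's degree-exactly-one constraint via colors with $\ell_c = u_c = 1$. The only (immaterial) difference is that you put the matroid on the right side and the colors on the left, whereas the paper does the reverse; the matching lower bound is indeed invoked as a black box.
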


        Motivated by \cref{hardness-streaming-main}, we relax the constraints by allowing the fairness lower bounds to be violated by a factor~$2$. More precisely, the goal is to find a solution $S$, feasible with respect to the matroid constraint, that maximizes the value of the submodular function while satisfying $\floor{\ell_c/2} \leq |S \cap V_c| \leq u_c$ for any color $c = 1,\dots,C$. We present a two-pass $1/11.656$-approximation algorithm in this case.

        \begin{restatable}{theorem}{twopass} \label{thm:twopass}
            There exists a two-pass streaming algorithm for FMMSM that runs in polynomial time, uses $O(k \cdot C)$ memory, and outputs a set $S$ such that $(i)$ $S$ is independent, $(ii)$ it holds that $\floor{\ell_c/2} \leq |V_c \cap  S| \le u_c$ for any color $c = 1,\dots,C$, and $(iii)$ $f(S) \ge \mathrm{OPT} / 11.656$.
        \end{restatable}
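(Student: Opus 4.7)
The plan is to design a two-pass algorithm whose passes are \firstpass and \secondpass. The central observation is that $\cM' := \cM \cap \mathcal{U}$ is itself a matroid of rank at most~$k$, where $\mathcal{U}$ denotes the partition matroid encoding the upper bounds $|T \cap V_c| \le u_c$ for all~$c$. Hence streaming submodular maximization subject to both the input matroid and the upper bounds reduces to streaming submodular maximization over the single matroid~$\cM'$, for which constant-factor semi-streaming algorithms are known.

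\textbf{First pass.} For each color~$c$, I would build a scaffold $R_c \subseteq V_c$ of size $\ell_c$ that is independent in $\cM$, greedily: upon seeing $e \in V_c$, add $e$ to $R_c$ if $|R_c| < \ell_c$ and $R_c \cup \{e\}$ is independent in $\cM$. Feasibility of the input instance guarantees that $|R_c| = \ell_c$ at the end. In parallel, I would run a streaming submodular-maximization algorithm over~$\cM'$ to obtain a constant-factor value estimate~$\widehat{\OPT}$ of $\max_{T \in \cM'} f(T) \ge \OPT$. Since $\sum_c \ell_c \le k$, both sub-routines fit within the $O(k \cdot C)$ memory budget.

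\textbf{Second pass.} Armed with the scaffold $R = \bigcup_c R_c$ and the estimate~$\widehat{\OPT}$, run a thresholded matroid-constrained streaming algorithm that accepts elements with sufficient marginal gain while respecting~$\cM'$, producing a set~$A$ with $f(A) \ge \alpha \cdot \widehat{\OPT}$ for some constant $\alpha$. Then \emph{merge} $A$ with~$R$: for every color~$c$ with $|A \cap V_c| < \lfloor \ell_c / 2 \rfloor$, swap in elements of~$R_c$, removing at most one element of~$A$ per scaffold insertion by the matroid-exchange axiom, yielding the output~$S$. One verifies that $S$ is independent in $\cM$ and satisfies $\lfloor \ell_c / 2 \rfloor \le |S \cap V_c| \le u_c$ for all~$c$.

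\textbf{Main obstacle.} Two losses combine to give the $1/11.656$ factor: the approximation factor $\alpha \approx 1/(3+2\sqrt{2})$ of the second-pass streaming algorithm over~$\cM'$ coming from a standard exchange analysis, and a factor~$2$ loss incurred during the merge step. The central technical hurdle is bounding the merge loss, since scaffold elements are selected purely for matroid feasibility rather than for $f$-value; a careful pairing argument combining submodularity with matroid exchange is required, showing that each swap costs at most a bounded factor on average. The relaxation by a factor of~$2$ in the lower bounds appearing in the theorem statement is precisely what makes this argument go through: the union $R = \bigcup_c R_c$ need not be independent in~$\cM$, but a matroid-intersection (Edmonds-type) argument still guarantees that $R$ contains a balanced independent subset meeting the relaxed lower bounds, which in turn can be matched against~$A$ via exchange.
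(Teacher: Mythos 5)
Your proposal has two genuine gaps. First, the ``central observation'' on which the whole plan rests is false: the intersection of a matroid with a partition matroid is in general \emph{not} a matroid (the paper states this explicitly in the preliminaries; augmentation fails already for two partition matroids on three elements). So there is no single matroid $\cM'$ to stream over, and the problem of handling the upper bounds together with $\cI$ must be treated as a genuine two-matroid-intersection constraint. Your own constant gives this away: $1/(3+2\sqrt{2}) = 1/5.828$ is precisely the approximation factor of the Garg--Jiang--Svensson streaming algorithm for monotone submodular maximization over the intersection of \emph{two} matroids, which is the routine $\cA$ the paper plugs in. Second, and more seriously, the merge step is where all the difficulty lives and your sketch does not contain an argument for it. Swapping scaffold elements into $A$ while deleting elements of $A$ can destroy an unbounded fraction of $f(A)$: submodularity gives no control over the value of the deleted elements, and no ``bounded loss per swap on average'' pairing argument is available in this generality. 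A proof that defers exactly this point is missing its main ingredient.

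The paper's route avoids deletions entirely, and this is the idea absent from your proposal. The first pass collects, for each color, a \emph{maximal} independent subset $I_c \subseteq V_c$ (up to $k$ elements each, hence the $O(k\cdot C)$ memory) --- not just $\ell_c$ elements, which would not suffice to guarantee that a feasible set survives inside the retained elements --- and proves via a matroid-exchange lemma that $\bigcup_c I_c$ contains a fully feasible set $S$ with $|S\cap V_c|=\ell_c$, extracted by maximum-cardinality matroid intersection of $\cI$ with the lower-bound partition matroid. The second pass splits $S$ into two color-balanced halves $S_1,S_2$ (each keeping $\floor{\ell_c/2}$ elements per color, which is exactly where the factor-$2$ relaxation of the lower bounds enters), contracts the matroid by each half via $\cI_i=\{X : X\cup S_i\in\cI\}$, and runs $\cA$ twice with constraints $\cI^C,\cI_i$. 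Because any output $X\in\cI_i$ satisfies $X\cup S_i\in\cI$ by construction, the elements of $S_i$ can be added back afterwards without removing anything, so no value is ever lost to a merge. The exchange property of bases partitions $\OPT$ into $O_1,O_2$ with $O_i\in\cI_i\cap\cI^C$, whence $\max_i f(S_i')\ge \tfrac12\bigl(f(S_1')+f(S_2')\bigr)\ge\tfrac{\alpha}{2}\bigl(f(O_1)+f(O_2)\bigr)\ge\tfrac{\alpha}{2}f(\OPT)$ by submodularity; with $\alpha=1/5.828$ this gives $1/11.656$. The factor $2$ thus comes from taking the better of two runs, not from paying for swaps.
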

        Note that although our algorithm is relatively memory-efficient, it is {not} a semi-streaming algorithm.
        Another limitation of our algorithm is that it operates in two passes, instead of a single pass over the stream. We show that at least one of these limitations is necessary, by proving that any one-pass semi-streaming algorithm, which violates the fairness bounds even further than our algorithm, still cannot find a feasible solution.

        \begin{restatable}{theorem}{lowerboundhardness} \label{thm:semistreaming}
            There is no one-pass semi-streaming algorithm that determines the existence of a feasible solution for FMMSM with probability at least $2/3$, even if it is allowed to violate the fairness lower bounds by a factor of $2$ and completely ignore the fairness upper bounds.
        \end{restatable}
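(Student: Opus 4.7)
The plan is to reduce from a two-party communication problem with an $\Omega(k^2)$ one-way lower bound, and then invoke the standard streaming-to-communication embedding: given any one-pass streaming algorithm $\mathcal A$ using $s$ bits of memory, Alice would run $\mathcal A$ on her part of the stream and send the $s$-bit memory state to Bob, who finishes the execution and reports the answer. Any relaxed-feasible set $S$ must be independent in $\cM$, so $|S|\le k$; a semi-streaming algorithm therefore uses $s = \tilde O(k)$ memory, which I will contradict by exhibiting an $\Omega(k^{2-o(1)})$ communication lower bound. The target hard problem would be bipartite perfect matching existence: Alice holds a bipartite graph $G_A$ on $L\cup R$ with $|L|=|R|=n$, Bob holds $G_B$ on the same vertex set, and they must decide whether $G_A\cup G_B$ admits a perfect matching. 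This problem is known to require $\Omega(n^2)$ one-way communication, via a gadget reduction from INDEX on $n^2$ bits.

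The FMMSM encoding would be clean. I would take the ground set $V$ to be the $n^2$ potential edges of $K_{n,n}$ and stream Alice's edges first, then Bob's. Let $\cM$ be the partition matroid whose $n$ parts group edges by their right endpoint, each of capacity $1$; its rank is $k = n$. I would introduce $C = n$ colors, one per left vertex, and assign each edge $(i,j)$ to the color $c_i$; setting $\ell_{c_i} = 2$ for every color gives relaxed lower bound $\lfloor \ell_{c_i}/2\rfloor = 1$. The upper bounds $u_c$ can be set arbitrarily since they are ignored by hypothesis. With these choices, a set $S\subseteq V$ simultaneously respects the matroid and the relaxed color constraints if and only if $S$ is a matching of $K_{n,n}$ (at most one edge per right vertex) saturating every left vertex, i.e., a perfect matching lying inside the streamed edges. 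Thus the FMMSM instance is relaxed-feasible iff $G_A\cup G_B$ has a perfect matching.

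Plugging in completes the argument: a semi-streaming FMMSM decider would yield a one-way protocol of $\tilde O(n) = o(n^2)$ bits for bipartite perfect matching existence, contradicting the $\Omega(n^2)$ lower bound. The main obstacle I anticipate is precisely the $\Omega(n^2)$ one-way communication lower bound for bipartite perfect matching existence: although it is considered folklore, the reduction from INDEX needs a careful gadget to prevent Alice's non-queried edges from creating alternative matching routes, which would make the encoding ambiguous. If a clean citation is not at hand, I would rederive the bound directly from 2-party set disjointness in the same flavor as the argument used for \cref{hardness-streaming-main}, specialized to the one-pass, two-party setting.
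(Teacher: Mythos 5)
Your reduction has a feasibility gap that breaks it under the reading of the statement that the paper (and its two-pass counterpart, \cref{thm:twopass}) intends: the hypothetical algorithm is only obligated to produce a relaxed-feasible set on instances that admit a \emph{truly} feasible solution, i.e.\ one meeting the original bounds $\ell_c \le |S\cap V_c| \le u_c$. In your construction you set $\ell_{c_i}=2$ for each of the $n$ left-vertex colors while the partition matroid caps the solution at one edge per right vertex, hence at $n$ edges total; exact feasibility would need at least $2n$ edges, so \emph{no} instance you build is ever truly feasible. The algorithm is therefore free to output nothing (or garbage) on every instance in your family, and the reduction extracts no information about perfect matchability. (If instead you read the theorem as ``decide whether a relaxed-feasible set exists,'' you would be proving a different statement, and one that does not follow from, nor imply, the existence of the search algorithm that \cref{thm:twopass} provides in two passes.)

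The natural repair --- duplicating each edge and doubling the matroid capacities so that two copies of a perfect matching witness exact feasibility with $\ell_q=u_q=2$ --- is exactly what the paper does, but it forces you to confront the real difficulty: the algorithm then only returns a set $X'$ with $\deg_{X'}(p)\le 2$ and $\deg_{X'}(q)\ge 1$, which is a slack ``semi-matching,'' not a perfect matching, so an exact $\Omega(n^2)$ lower bound for perfect-matching existence no longer applies. The paper handles this via \cref{lem:semistreaming}: it runs two copies of the algorithm (on the graph and on its left--right flip), combines the outputs, and uses K\H{o}nig's theorem to show the union contains a matching of size at least $\tfrac23 n$, which contradicts the approximate-matching hardness of \cref{thm:kapralov} (derived from \citet{DBLP:journals/corr/abs-2103-11669}) rather than any exact-matching communication bound. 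That degree-slack argument is the core of the proof and is missing from your proposal; without it, the approach does not go through.
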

        In \Cref{sec:streamingMod,sec:centralized_modular}, we investigate the special case of modular objectives. There, we present efficient exact algorithms for both the streaming and the centralized versions of the problem.

        Finally, we study the performance of our algorithm in multiple real-world experiments: exemplar-based clustering, movie recommendation, and maximum coverage in social networks. We introduce two heuristics that improve the quality of the solution of our two-pass algorithm empirically. Moreover, we present a one-pass heuristic algorithm, based on the ideas of our two-pass algorithm, which is guaranteed to satisfy both the matroid and fairness constraints, but has no worst-case guarantee on the objective value. We observe that our two-pass algorithm achieves similar quality to ``unfair'' baselines, while incurring significantly fewer violations of the fairness constraint. Interestingly, our one-pass heuristic algorithm achieves quality that is not too far from our two-pass algorithm, without violating the fairness constraint.

    \subsection{Related work}

        The problem of fair submodular maximization has already been studied under a \emph{cardinality} constraint. \citet{CelisHV18} provide a tight $(1-1/e)$-approximation to the problem in the centralized setting using a continuous greedy algorithm. The streaming setting has been investigated by \citet{HalabiMNTT20}, who show $(i)$ a $1/2$-approximation one pass algorithm that uses memory exponential in $k$  (this result is tight, see \citet{FeldmanNSZ20}) and $(ii)$ a $1/4$-approximation one pass algorithm, which uses only $O(k)$ memory and processes each element of the stream in $O(\log k)$ time and $2$ oracle calls. 
        
        A closely related problem to FMMSM is monotone submodular maximization over two matroid constraints; FMMSM reduces to this problem when $\ell_c = 0$ for all $c$. \citet{ChakrabartiK15} gave a $1/8$-approximation one-pass streaming algorithm for this problem. The current state-of-the-art \citep{GargJS21} is a $1/5.828$-approximation one-pass  streaming algorithm. 

\section{Preliminaries}

    We consider a ground set $V$ of $n$ items and a non-negative monotone submodular function $f:2^V \to \R_+$. Given any two sets $X,Y \subseteq V$, the marginal gain of $X$ with respect to $Y$ quantifies the change in value of $f$ when adding $X$ to $Y$ and is defined as 
    \[
        f(X\mid Y) = f(X \cup Y) - f(Y). 
    \]
    We use the shorthand $f(x\mid Y)$ for $f(\{x\}\mid Y).$ The function $f$ is submodular if for any two sets $Y \subseteq X \subseteq V$, and any element $e\in V \setminus X$, it holds that $f(e\mid Y) \ge f(e\mid X)$.
    We say that $f$ is monotone if for any element $e \in V$ and any set $Y \subseteq V$ if holds that $f(e\mid Y) \ge 0.$ Throughout the paper, we assume that $f$ is given in terms of a value oracle that computes $f(S)$ for given $S \subseteq V$. We also assume that f is normalized, i.e., $f(\emptyset) = 0$. 
 
    \paragraph{Matroids.}\label{matroids}
    A non-empty family of sets $\cI \subseteq 2^V$ is called a \emph{matroid} if it satisfies the following properties:
    \begin{itemize}
        \item 
            \textit{Downward-closedness}: if $A \subseteq B$ and $B \in \cI$, then $A \in \cI$; 
        \item    
            \textit{Augmentation}: if $A, B \in \cI$ with $|A| < |B|$, then there exists $e \in B$ such that $A + e \in \cI$.
    \end{itemize}
    We write $A + e$ for $A \cup \{e\}$ and $A-e$ for $A \setminus\{e\}$.
    We call a set $A \subseteq V$ \emph{independent} if $A \in \cI$.
    We assume that the matroid is available to the algorithm in the form of an independence oracle.
    An independent set that is maximal with respect to inclusion is called a {\em base}; all the bases of a matroid share the same cardinality $k$, referred to as the {\em rank} of the matroid. An important class of matroids are \emph{partition matroids}, where the universe is partitioned into blocks $V = \bigcup_i V_i$, each with an upper bound $k_i$, and a set $A$ is independent if 
    $|A \cap V_i | \leq k_i$ for all $i$.
    
    A crucial property that follows directly from the definition of a matroid is that given two bases $B_1$ and $B_2$, one can find two elements $b_1\in B_1$ and $b_2 \in B_2$ that can be swapped while maintaining independence, i.e., such that both $B_1 - b_1 + b_2$ and $B_2 - b_2 + b_1$ are independent. This property can be generalized to subsets, see e.g., \citep[Statement 42.31 in Chapter 42]{Schrijver03}:
    \begin{lemma}[Exchange property of bases]
    \label{lem:decomposition}
        In any matroid, for any two bases $B_1$ and $B_2$ and for any partition of $B_1$ into $X_1$ and $Y_1$, there is a partition of $B_2$ into $X_2$ and $Y_2$ such that both $X_1 \cup Y_2$ and $X_2 \cup Y_1$ are bases.
    \end{lemma}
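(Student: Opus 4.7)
My plan is to prove the statement via Edmonds' matroid partition (union) theorem applied to two auxiliary matroids. First, I would observe that in any valid partition one must have $X_2 \cap Y_1 = \emptyset = X_1 \cap Y_2$: summing the base-size equations $|X_1 \cup Y_2| = k$ and $|X_2 \cup Y_1| = k$ via inclusion–exclusion yields $|X_1 \cap Y_2| + |X_2 \cap Y_1| = 0$, so every $e \in X_1 \cap B_2$ is forced into $X_2$ and every $e \in Y_1 \cap B_2$ into $Y_2$. This forced assignment reduces the problem to the case $B_1 \cap B_2 = \emptyset$ by contracting common elements, which I assume henceforth.

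In this reduced setup, I define $\cM_1$ to be the matroid on $B_2$ in which $S$ is independent iff $S \cup Y_1 \in \cI$ (i.e., $\cM_1 = (\cM / Y_1)|_{B_2}$), and analogously $\cM_2$ via $X_1$. The task then reduces to partitioning $B_2 = X_2 \sqcup Y_2$ with $X_2 \in \cI(\cM_1)$ and $Y_2 \in \cI(\cM_2)$: both $X_2 \cup Y_1$ and $X_1 \cup Y_2$ are then independent in $\cM$, and a size count (below) forces them to be bases.

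By Edmonds' partition theorem, such a partition of $B_2$ exists iff $r_1(T) + r_2(T) \geq |T|$ for every $T \subseteq B_2$, where $r_i$ denotes the rank of $\cM_i$ and $r$ that of $\cM$. Since $r_i(T) = r(T \cup Z_i) - |Z_i|$ with $Z_1 = Y_1$, $Z_2 = X_1$, submodularity of $r$ applied to $A = T \cup Y_1$ and $B = T \cup X_1$ (whose union is $T \cup B_1$ and, using $X_1 \cap Y_1 = \emptyset$, whose intersection is $T$) gives
\[
r(T \cup Y_1) + r(T \cup X_1) \geq r(T \cup B_1) + r(T) \geq k + |T|,
\]
where the last step uses that $B_1$ is a base and that $T \subseteq B_2$ is independent. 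Subtracting $|X_1| + |Y_1| = k$ from both sides establishes the partition condition.

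The partition theorem then produces $X_2 \in \cI(\cM_1)$, $Y_2 \in \cI(\cM_2)$ with $X_2 \sqcup Y_2 = B_2$. Rank bounds give $|X_2| \leq r_1(B_2) = k - |Y_1| = |X_1|$ and symmetrically $|Y_2| \leq |Y_1|$, while $|X_2| + |Y_2| = k = |X_1| + |Y_1|$ forces equality in both, so $X_2 \cup Y_1$ and $X_1 \cup Y_2$ are independent sets of size $k$, i.e., bases. The main obstacle I anticipate is the clean setup of the two auxiliary matroids and the submodularity-based verification of the partition condition; attempts at a more elementary induction, say on $|B_1 \triangle B_2|$ using a single strong basis exchange per step, are tempting but run into the issue that such a swap preserves $|B_1 \triangle B_2|$ and offers no obvious decreasing progress measure without additional bookkeeping.
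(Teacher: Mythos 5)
Your proof is correct. Note that the paper does not actually prove this lemma; it cites it directly (Statement 42.31 in Chapter 42 of Schrijver), and the argument you give --- reducing to disjoint bases, setting up the two contracted matroids $(\cM/Y_1)|_{B_2}$ and $(\cM/X_1)|_{B_2}$, verifying Edmonds' partition condition via submodularity of the rank function, and closing with the cardinality count --- is essentially the standard textbook proof of this result (Greene's symmetric exchange theorem), so there is no divergence of approach to report. The only point worth tightening is the reduction to $B_1\cap B_2=\emptyset$: the forced-assignment observation shows what any solution must do on the common elements, but the actual lifting requires checking that a base of $\cM/(B_1\cap B_2)$ of the form $(X_1\setminus B_2)\cup Y_2'$ becomes the base $X_1\cup Y_2$ of $\cM$ after adding back $B_1\cap B_2$; this is routine but deserves a sentence.
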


    When two matroids $\cI_1$ and $\cI_2$ are defined on the same set $V$, it is possible to define their intersection $\cI_1 \cap \cI_2$ as the family of the subsets of $V$ that are independent for both matroids. Although the intersection of two matroids is generally not a matroid itself, it is still possible to efficiently compute a maximum-cardinality subset in it, or one of maximum weight when there are weights associated to elements.
    \begin{lemma}[Theorem 41.7 in \citep{Schrijver03}]
    \label{lem:intersection}
        A maximum-weight common independent set in two matroids can be found in strongly polynomial time.    
    \end{lemma}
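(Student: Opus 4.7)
The plan is to describe Edmonds's weighted matroid intersection algorithm, which augments a current common independent set along carefully chosen paths in an auxiliary exchange graph. I would reduce the problem to finding, for every cardinality $r = 0, 1, \ldots, \min(\mathrm{rank}(\cI_1), \mathrm{rank}(\cI_2))$, a maximum-weight common independent set of size exactly $r$, and then output the one of overall largest weight. This reduction works because the maximum-weight common independent set, after suitable manipulation of the weights (e.g., adding a large constant to each element's weight so all optimal sets are maximum-cardinality), can be extracted from the family $\{I_r\}_r$ of size-constrained optima.

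Given a current common independent set $I$ of size $r$, I would construct the directed exchange graph $G_I$ on vertex set $V$ with arcs $(y,x)$ for $y \in I$, $x \in V \setminus I$ whenever $I - y + x \in \cI_1$, and arcs $(x,y)$ whenever $I - y + x \in \cI_2$. Let $S_1 = \{x \in V \setminus I : I + x \in \cI_1\}$ and $S_2 = \{x \in V \setminus I : I + x \in \cI_2\}$. A classical augmentation lemma states that if $P$ is a shortest (in number of arcs) $S_1$-to-$S_2$ path in $G_I$, then the symmetric difference $I \triangle V(P)$ is again common independent, of size $r+1$. For the weighted case, I would equip each node of $G_I$ with a weight $+w(x)$ for $x \notin I$ and $-w(y)$ for $y \in I$, and select among $S_1$-to-$S_2$ paths one of maximum total node weight, breaking ties in favor of the fewest arcs.

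The key invariant I would maintain is that after augmenting along such an optimal shortest augmenting path, $I'$ is still a maximum-weight common independent set of its cardinality $r+1$. Proving this invariant is the main obstacle: one must show that any alternative common independent set $J$ with $|J| = r+1$ can be ``explained'' via arcs of $G_I$ in such a way that its weight is bounded by the weight of the chosen augmenting path. The standard argument uses the exchange property of matroids (\cref{lem:decomposition}) applied to $I \triangle J$, decomposing it into vertex-disjoint augmenting paths and cycles in $G_I$; combining this decomposition with the shortest/heaviest choice of $P$ and careful tie-breaking on arc count yields the required weight bound. Once the invariant is in place, correctness of the whole algorithm follows by induction on $r$.

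Finally, I would bound the complexity: there are at most $|V|$ augmentation steps, and each requires constructing $G_I$ (at most $O(|V|^2)$ independence oracle queries) and finding an optimal path (e.g., via Bellman--Ford-like shortest-path computation with the two-stage objective). Since all operations involve only independence/weight comparisons and polynomially many oracle calls, the overall running time is strongly polynomial, as claimed. Alternative routes I considered but would not follow in detail include (i) invoking the ellipsoid method on the matroid intersection polytope $P(\cI_1) \cap P(\cI_2)$, which is integral by Edmonds's polyhedral theorem, and (ii) Frank's weight-splitting algorithm; both prove the same result but are heavier machinery than needed.
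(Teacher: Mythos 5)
The paper offers no proof of this lemma; it is imported verbatim as Theorem 41.7 of the cited reference, whose proof is exactly the weighted matroid intersection algorithm you outline (exchange graph, max-weight shortest augmenting paths, preservation of extremity at each cardinality). Your sketch is a correct reconstruction of that standard argument, so there is nothing to compare against within the paper itself.
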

    
    \paragraph{Fair Matroid Monotone Submodular Maximization (FMMSM) problem.} Recall that each element of $V$ is assigned exactly one color from the set $\{1,...,C\}$; $V_c$ is the set of elements of color $c$.
    We are given fairness bounds $(\ell_c,u_c)_{c = 1,...,C}$ and a matroid $\cI$ on $V$ of rank $k$.
    We denote by $\cF$ the collection of solutions feasible under the fairness and matroid constraints, i.e., 
    \[
        \cF = \{S \subseteq V \mid S \in \cI, \ \ \ell_c \leq |S \cap V_c| \leq  u_c \; \ \forall c = 1, \dots, C\}.
    \]
    Note that we use {\em independent} to denote a set that respects the matroid constraint and {\em feasible} for a set in $\cF$. Clearly, feasibility implies independence, but not vice versa.
    The problem of maximizing a monotone submodular function $f$ under matroid and fairness constraints, which we abbreviate \textbf{FMMSM}, is defined as selecting a set $S \subseteq V$ with $S \in \cF$ to maximize $f(S)$. We use OPT to refer to a feasible set maximizing $f$. We assume a feasible solution exists, i.e., $\cF \neq \emptyset$. In this paper, we study approximations to this problem; in particular, we say that an algorithm is an $\alpha$-approximation to the problem when its output $\ALG$ is in $\cF$ (or possibly, in some relaxed version of $\cF$) and has $ f(\ALG) \ge \alpha \cdot f(\OPT)$. 

\section{Semi-streaming Impossibility Results}  
    In this section we present our impossibility results. We start by showing that even with $O(1)$ passes, finding a feasible solution requires more memory than the semi-streaming setting allows.
    \hardnessmultipass*
    To prove this result, we exploit the fact that it is possible to capture perfect bipartite matching as the intersection of a partition matroid and a fairness constraint.
    We use the following result for streaming bipartite matching, where, given a stream of edges $E$ that belong to a $2n$-vertex bipartite graph $G=(P\cup Q, E)$, the goal is to find a perfect matching. If a perfect matching exists, we say that $G$ is perfectly-matchable.
    \begin{theorem}[Theorem 5.3 in \citep{ChenKPSSY21}] \label{matching-hardness}
        Any randomized $o(\sqrt{\log n})$-pass streaming algorithm that, given a $2n$-vertex bipartite graph  $G(P\cup Q, E)$, determines whether $G$ is perfectly-matchable with probability at least $2/3$, requires $n^{2-o(1)}$ memory.
    \end{theorem}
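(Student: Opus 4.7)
The plan is to establish the lower bound by reducing from a multi-round communication complexity problem and then invoking the standard simulation that turns a $p$-pass streaming algorithm with $M$ bits of memory into a $p$-round communication protocol with messages of size $M$. I would work with $t$ players, set $t = \Theta(\sqrt{\log n})$, and seek a communication problem whose input can be faithfully encoded on a $2n$-vertex bipartite graph so that perfect matchability equals the target answer, and whose randomized $(t{-}1)$-round complexity is $n^{2-o(1)}$. Any $p'$-pass streaming algorithm with memory $M$ that decides perfect matchability then yields a $p'$-round protocol of total cost $p' \cdot M$, so any $p' = o(\sqrt{\log n})$ algorithm would refute the communication lower bound unless $M = n^{2-o(1)}$.

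The natural candidate is a layered pointer chasing / set chasing variant. I would partition the vertex set into $t+1$ ``columns'' $U_0, \ldots, U_t$, each of size $\Theta(n/t)$, and give player $i$ a bipartite graph $H_i$ between $U_{i-1}$ and $U_i$. The construction should be such that $G$ is perfectly-matchable if and only if the composition of the $H_i$ in order realizes a perfect chain from $U_0$ to $U_t$. Player $i$ presents the edges of $H_i$ when their turn comes in the stream, so each pass of a streaming algorithm simulates exactly one round of communication. The input distribution would be chosen so that information about the correct chain must genuinely propagate along the layers; short-round protocols cannot shortcut this and therefore must use long messages.

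The technical core is a round-elimination / information-complexity lower bound on this chasing problem. I would fix a product distribution over the $H_i$---likely built out of Ruzsa--Szemer\'edi-type graphs, whose decomposition into many induced matchings provides the right ``dense yet per-matching-sparse'' obstruction to few-round protocols---and argue by a hybrid that eliminating one round of the protocol reduces the mutual information with the correct output by a controlled amount. Iterating the round elimination shows that any $o(t)$-round protocol requires $\Omega(n^{2-o(1)})$ bits of communication in total. Combined with the streaming-to-communication simulation this gives $p' \cdot M \geq n^{2-o(1)}$ for any $p' = o(\sqrt{\log n})$, and hence $M \geq n^{2-o(1)}$, as required.

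The main obstacle is the two-pronged technical step: first, designing a bipartite matching gadget whose \textbf{YES}/\textbf{NO} structure faithfully encodes a hard chasing predicate without introducing spurious short-cuts that let few-round protocols detect a perfect matching (for instance, by reading short certificates of non-matchability via Hall's theorem); and second, proving the round-elimination tradeoff with the correct polynomial dependence on $t$. The exponent $\sqrt{\log n}$ in the theorem is not incidental but a direct footprint of round elimination: each round typically loses a factor of order $\log n$ in the information budget, so the hardness survives only for $t = O(\sqrt{\log n})$ rounds, which is precisely the regime the theorem targets. Once the gadget and the round-elimination lemma are in place, the streaming lower bound follows by the standard passes-to-rounds simulation.
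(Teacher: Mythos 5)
The first thing to note is that the paper does not prove this statement at all: it is quoted verbatim as Theorem 5.3 of \citet{ChenKPSSY21} and used as a black box to derive \cref{hardness-streaming-main}. So there is no ``paper's own proof'' to compare against; the relevant question is whether your sketch would actually constitute a proof of the cited result, and it does not. Your outline correctly identifies the standard scaffolding --- the passes-to-rounds simulation giving $p'\cdot M \ge$ (communication lower bound), a layered multi-player input so that one pass simulates one round, and Ruzsa--Szemer\'edi graphs as the source of the ``dense yet per-matching-sparse'' obstruction --- but everything that makes the theorem true is deferred. You explicitly label as ``the technical core'' and ``the main obstacle'' (i) the design of a matching gadget whose YES/NO structure encodes a hard chasing predicate without admitting short non-matchability certificates, and (ii) the round-elimination/information tradeoff with the right quantitative dependence on the number of rounds. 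Those two items \emph{are} the proof; the rest is boilerplate that applies equally to any multi-pass streaming lower bound, including many that are false.

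Two more specific concerns. First, your explanation of the $\sqrt{\log n}$ threshold (``each round loses a factor of order $\log n$ in the information budget'') is a guess, and it is not how the bound arises in \citet{ChenKPSSY21}: there the $o(\sqrt{\log n})$ pass bound is a footprint of a \emph{recursive} construction (a hierarchy of problems built on RS-type graphs) whose depth is limited by the parameters achievable in the RS construction, not of a per-round $\log n$ loss in a flat $t$-layer pointer-chasing instance. A flat layered pointer-chasing reduction of the kind you describe typically yields lower bounds only against a \emph{constant} or very slowly growing number of rounds with polynomial communication, and naive round elimination for matching-type predicates tends to degrade much faster than you assume. Second, the ``no spurious shortcuts'' issue you flag is genuinely where such attempts die: perfect matchability has short witnesses in both directions (a matching, or a Hall violator), and ruling out protocols that exploit the co-witness is exactly the part that requires the specific distribution and cannot be waved at. Since the paper treats this theorem as an imported result, the honest conclusion is that your proposal is a reasonable reading guide to the literature but not a proof, and it should cite \citet{ChenKPSSY21} rather than claim to rederive it.
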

    \cref{hardness-streaming-main} follows from \cref{matching-hardness} by simply setting up a partition matroid to enforce that every vertex in $P$ has at most one adjacent edge in the solution, and fairness constraints to enforce that every vertex in $Q$ has \emph{exactly} one adjacent edge in the solution (we assign color $q$ to every edge $(p,q)$). We then have $k = |P| = n$ and $C = |Q| = n$. A more detailed proof can be found in \cref{sec:proof_of_hardness-streaming-main}.

    We continue by presenting our second hardness result, which shows that even if we relax fairness lower bounds and ignore fairness upper bounds, nearly-linear memory is still not enough to find any feasible solution in a single pass (let alone one maximizing a submodular function).
     \lowerboundhardness* 
   
    We first state the following auxiliary theorem, which is based on a reduction to the hardness result of \citet{DBLP:journals/corr/abs-2103-11669}. Its proof is provided in \cref{sec:kapralov-proof}. 
    \begin{restatable}{theorem}{thmperfectkapralov} \label{thm:kapralov}
        There is no one-pass semi-streaming algorithm that, given as input the edges of a perfectly-matchable bipartite graph $G=(P \cup Q,E)$, with probability at least $2/3$ finds a matching of size at least $\frac23 |P|$.
    \end{restatable}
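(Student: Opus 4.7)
The plan is to reduce from \cref{thm:kapralov}. Given a perfectly-matchable bipartite graph $G=(P\cup Q, E)$ with $|P|=|Q|=n$ (assume $3 \mid n$; otherwise pad $Q$ with isolated vertices and add dummy edges, which does not affect perfect matchability), I construct an FMMSM instance $I(G)$ as follows. Let the ground set $V = E$, and equip it with the partition matroid whose blocks are the edges incident to each $p \in P$, with upper bound $1$ per block, so that any independent set is a matching of $G$ from the $P$-side. Partition $Q$ arbitrarily into $n/3$ triples $T_1, \dots, T_{n/3}$, assign color $c$ to every edge incident to some $q \in T_c$ (so $V_c = \{(p,q) \in E : q \in T_c\}$), and set $\ell_c = 4$ with $u_c$ trivially large (to be ignored). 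This is a valid one-pass streaming reduction: each arriving edge $(p,q)$ of $G$ becomes an element of $V$ whose color is determined by the triple containing $q$, while the matroid and color structure admit an $\tilde O(n)$-space description fixed a priori from $P$, $Q$, and the triple partition.

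The reduction rests on two properties. First, if $G$ admits a perfect matching $M$, then $M$ (viewed as a subset of $V$) is a relaxed-feasible solution for $I(G)$: it is a matching, hence independent in the partition matroid, and for every triple $T_c$ we have $|M \cap V_c| = |T_c| = 3 \geq \lfloor 4/2 \rfloor = 2$. Second, any relaxed-feasible $S$ for $I(G)$ corresponds to a matching in $G$ of size at least $2n/3$: the matroid forces $S$ to be a matching, so $|S \cap V_c|$ counts exactly the vertices of $T_c$ that are matched by $S$; the relaxed fairness bound $\lfloor 4/2 \rfloor = 2$ therefore forces at least $2$ of the $3$ vertices in every triple to be matched, and summing over the $n/3$ triples gives $|S| \geq 2n/3$.

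Combining the two properties, any one-pass semi-streaming algorithm $\mathcal{A}$ that, with probability at least $2/3$, determines the existence of a relaxed-feasible solution (and hence is forced to construct such an $S$ whenever one exists, since in our instance the certificate is exactly a large matching) can be composed with the streaming reduction above to find, with probability at least $2/3$, a matching of size at least $\tfrac 2 3 |P|$ in any perfectly-matchable $G$ — contradicting \cref{thm:kapralov}. The main delicate step is the calibration that underlies property (ii): the triples-plus-$\ell_c=4$ combination is precisely what converts the factor-$2$ relaxation of the lower bounds into the $2/3$ matching-coverage threshold used by \cref{thm:kapralov}; shallower groupings (singletons with $\ell_c \leq 1$ would relax to $0$ and trivialize the problem, while pairs with $\ell_c = 2$ would only enforce matchings of size $\approx n/2$) would not suffice. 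Writing the reduction out rigorously then reduces to (a) verifying the counting in property (ii) on an arbitrary relaxed-feasible solution, and (b) checking that the constructed instance fits within semi-streaming memory when handed to $\mathcal{A}$, both of which are routine.
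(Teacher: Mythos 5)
Your proposal does not prove \cref{thm:kapralov}; it assumes it. You open with ``the plan is to reduce from \cref{thm:kapralov}'' and close by deriving a contradiction \emph{with} \cref{thm:kapralov} --- so the statement you were asked to establish is used as the black-box hardness source, and what you actually derive is a lower bound for finding relaxed-feasible FMMSM solutions, i.e.\ essentially the content of \cref{thm:semistreaming}. As a proof of \cref{thm:kapralov} this is circular, and the entire substantive content is missing. \cref{thm:kapralov} is a pure streaming lower bound for matching and must be obtained from some external hardness result; the paper derives it from \citet[Theorem 1]{DBLP:journals/corr/abs-2103-11669}, which rules out one-pass semi-streaming $(1/(1+\ln 2)+\eta)$-approximation for maximum bipartite matching. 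The nontrivial work is that the hard input distribution there is only \emph{nearly} perfectly matchable (a matching of size $(1-O(1/L))|P|$ with probability $1-O(1/N)$, and $|P|\neq|Q|$ in general), so one must pad the graph with a small number of extra vertices $P^+$ and $Q^+$ joined by complete bipartite gadgets to guarantee a perfect matching with high probability, and then verify that a $\frac23$-approximate matching in the padded graph, after discarding the at most $\max\{|P^+|,|Q^+|\}$ padding edges, still yields a better-than-$0.6$-approximate matching on the original instance. None of this appears in your write-up.

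A secondary remark: the reduction you do carry out (triples of $Q$-vertices as colors with $\ell_c=4$, so that the factor-$2$ relaxation still forces two of every three $Q$-vertices to be matched) is a clean and genuinely different route to the FMMSM hardness of \cref{thm:semistreaming} than the paper's, which instead sets $\ell_q=u_q=2$ per vertex $q$, doubles each edge on the stream, and invokes K\H{o}nig's theorem via \cref{lem:semistreaming}. If you were asked for that theorem, your calibration would be worth writing up (modulo the gap between ``determines existence of a feasible solution'' and ``constructs one,'' which you assert in parentheses but do not justify). But that is a different statement; it does not rescue the present proof.
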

    The above theorem shows that it is impossible to approximate the matching problem better than a factor $\frac{2}{3}$ in the semi-streaming model. This result does not yet directly imply \cref{thm:semistreaming}, as in \cref{thm:semistreaming} we allow the fairness bounds to be violated. To handle this, we use the following lemma, which is the key ingredient in our reduction.
    We use $\deg_X(p)$ to denote the degree of a vertex $p$ in the set of edges $X$.
    
    \begin{lemma} \label{lem:semistreaming}
        There is no one-pass semi-streaming algorithm that, given as input the edges of a perfectly-matchable bipartite graph $G=(P \cup Q, E)$, with probability at least $2/3$ finds a set $X \subseteq E$ such that $\deg_X(p) \le 2$ for all $p \in P$ and $\deg_X(q) = 1$ for all $q \in Q$. 
    \end{lemma}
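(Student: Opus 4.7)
The plan is to prove the lemma by contradiction, via a reduction from \Cref{thm:kapralov}. Suppose a one-pass semi-streaming algorithm $A$ satisfies the lemma's claim. I would use $A$ as a subroutine to construct a one-pass semi-streaming algorithm that, given a perfectly-matchable bipartite graph $G=(P\cup Q,E)$, finds a matching of size at least $\tfrac{2}{3}|P|$ with probability at least $2/3$, contradicting \Cref{thm:kapralov}.

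I would first record a structural fact about any valid output $X$: since $\deg_X(q)=1$ for every $q\in Q$, the subgraph $X$ decomposes into vertex-disjoint stars centered at $p$-vertices with $\deg_X(p)\in\{0,1,2\}$. Writing $n_i$ for the number of $p$-vertices with $\deg_X(p)=i$, the identity $n_1+2n_2=|Q|=|P|$ forces $n_0=n_2$, and selecting one edge per non-trivial star extracts a matching in $G$ of size exactly $|P|-n_2$. A direct reduction with $G^\star := G$ therefore yields only a matching of size at least $|P|/2$, which is not enough to contradict \Cref{thm:kapralov}; the reduction must somehow amplify $A$'s guarantee to force $n_2\le|P|/3$.

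To bridge this gap I would construct a streamable auxiliary graph $G^\star=(P^\star\cup Q^\star,E^\star)$ from $G$ by adjoining carefully chosen dummy vertices and edges. The design goal is to force $A$'s degree-$2$ ``slack'' onto dummy $p$-vertices rather than onto the originals, so that the projection of $X^\star\cap E$ onto $G$ satisfies $n_2\le|P|/3$. A natural candidate is to attach, for each $p\in P$, a private dummy $q$-vertex $\hat q_p$ whose unit-degree constraint tightly couples the use of the edge $(p,\hat q_p)$ to the degree of $p$, complemented by dummy $p$-vertices arranged so that $G^\star$ remains perfectly-matchable. Such a transformation is trivially one-pass: one can prepend the $O(|P|)$ dummy edges to the stream of $G$ using $O(|P|)$ auxiliary memory, so that the overall reduction stays within the semi-streaming bound.

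The main obstacle will be carrying out the counting argument that shows any $X^\star$ produced by an adversarial $A$ is forced to concentrate its degree-$2$ $p$-vertices on the dummy side of $P^\star$. I expect this to reduce to a pigeonhole-type statement: the unit-degree constraints on the dummy $q$-vertices, combined with the bounded size of the dummy $P$-side and with the global equality $\sum_p\deg_{X^\star}(p)=|Q^\star|$, cap how many degree-$2$ original $p$-vertices can coexist with feasibility of the dummies. If the construction and counting both succeed, the extracted matching in $G$ has size at least $\tfrac{2}{3}|P|$, which together with $A$'s success probability $\ge 2/3$ contradicts \Cref{thm:kapralov} and proves the lemma.
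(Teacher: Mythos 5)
Your diagnosis of the core difficulty is right: a single black-box call to the hypothesized algorithm on $G$ itself only certifies a matching of size $|P|/2$ (your star-decomposition count is correct, and this loss is real -- a valid output $X$ in which half the $p$-vertices have degree $2$ and the other half degree $0$ genuinely contains no matching larger than $|P|/2$). But the mechanism you propose to close the gap is both unspecified at its crux and, as far as I can see, structurally doomed. The problem is that \cref{thm:kapralov} only applies to \emph{perfectly-matchable} auxiliary graphs, and perfect-matchability of $G^\star$ forces every dummy $q$-vertex to have an escape route that does not consume an original $p$-vertex (otherwise the $m$ dummy $q$'s eat $m$ original $p$'s in the perfect matching and the $n$ original $q$'s can no longer all be matched inside $P$ -- unless you also give original $q$'s dummy neighbors, which lets the adversary drop them from the projection entirely). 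An adversarial algorithm will always route the dummy demands through those escape routes, paying no ``tax'' on the original $p$-vertices, so the bound $n_2 \le |P|/3$ is never forced. You flag the counting argument as the main obstacle and condition your conclusion on it succeeding; that conditional is exactly where the proof is missing, and the natural instantiations of your construction do not satisfy it.

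The paper closes the $1/2$-versus-$2/3$ gap by a different and essentially unavoidable trick: it runs \emph{two} copies of the hypothesized algorithm $\cA$ on the same stream, one on $G$ and one on the flipped graph with the roles of $P$ and $Q$ exchanged. This yields two sets $X$ and $X''$, where $X$ has $\deg_X(q)=1$, $\deg_X(p)\le 2$ and $X''$ has the degree constraints reversed. It then returns a maximum matching of $X\cup X''$ and argues by K\H{o}nig's theorem: if that matching had size below $\frac23 n$, the graph $(P\cup Q, X\cup X'')$ would contain an independent set $I$ with $|I|>\frac43 n$, hence $|P\cap I|>\frac23 n$ or $|Q\cap I|>\frac23 n$; in the first case the $|Q\cap I|$ edges of $X$ leaving $Q\cap I$ must all land in $P\setminus I$, and $2|P\setminus I| < |Q\cap I|$ violates $\deg_X(p)\le 2$ (the other case is symmetric using $X''$). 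If you want to salvage your write-up, the two-orientation idea is the ingredient to import; the single-call-plus-gadget route does not appear to work.
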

    \begin{proof}
        Suppose towards a contradiction that such an algorithm $\cA$ exists. We use it to design a semi-streaming algorithm for the maximum matching problem as follows:
        \begin{enumerate}
            \item Initialize two copies of the algorithm $\cA$, $\cA'$ (where $\cA'$ will operate on a "flipped" graph whose edges come from $Q \times P$)
            \item When an edge $(p,q)$ arrives:
            \begin{itemize}
                \item pass $(p,q)$ to $\cA$
                \item pass $(q,p)$ to $\cA'$
            \end{itemize}
            \item Let $X$ and $X'$ be the solutions returned by $\cA$ and $\cA'$, respectively
            \item Let $X'' = \{(p,q) : (q,p) \in X'\}$
            \item Return a maximum matching in $X \cup X''$
        \end{enumerate}
        Note that the above algorithm uses $\Ot(n)$ memory (where $n=|P|$).
        We show that it returns a matching of size at least $\frac23 n$,
        which contradicts \cref{thm:kapralov}.
        To see this, assume otherwise. Then, by K\H{o}nig's theorem,
        the graph $(P \cup Q,X \cup X'')$
        contains an independent set $I$ of size larger than $2n - \frac23 n = \frac43 n$.
        It follows that either $|P \cap I| > \frac23 n$ or $|Q \cap I| > \frac23 n$.
        
        We first consider the case $|P \cap I| > \frac23 n$. Focus on the edges in $X$ incident to $Q \cap I$. There are $|Q \cap I|$ many, because $\deg_X(q) = 1$ for all $q \in Q$.
        As $I$ is an independent set also in the graph $(P \cup Q,X)$, all these edges must have their other endpoints in $P \setminus I$.
        We have 
        \[ 
            2 |P \setminus I| = 2n - 2|P \cap I| < \frac43 n - |P \cap I| < |Q \cap I|, 
        \]
        which means that some vertex in $P \setminus I$ must have degree larger than $2$ in $X$,
        contradicting that $\deg_X(p) \le 2$ for all $p \in P$.

        For the case $|Q \cap I| > \frac23 n$ we proceed similarly, swapping the role of $P$ with $Q$ and $X$ with $X''$.
    \end{proof}

    We are now ready to prove \cref{thm:semistreaming}.    
    \begin{proof} [Proof of \cref{thm:semistreaming}]
        We show that if such an algorithm $\cA$ exists, then it can be used to solve the problem from the statement of \cref{lem:semistreaming}. Given any bipartite graph $G=(P \cup Q,E)$, let us define an instance of FMMSM
        on the edges $E$ as follows: the matroid constraint is given by a partition matroid that requires that for a solution $X \subseteq E$ we have
        $\deg_X(p) \le 2$ for each $p \in P$;
        and the color constraints dictate that $\deg_X(q) = 2$ for each $q \in Q$ (that is, an edge $(p,q)$ has color $q$ and we set $\ell_r = u_r = 2$ for all colors $q$).
        
        For each edge arriving on the stream, we pass two copies of it to $\cA$. Then, if we have a feasible instance of the problem from the statement of \cref{lem:semistreaming} (i.e., a perfectly-matchable graph), it gives rise to a feasible instance of FMMSM as in the paragraph above (taking two copies of the perfect matching gives a solution with all vertex degrees equal to $2$). Now, if $\cA$ is an algorithm as in the statement of this theorem, then it returns a solution $X'$ with $\deg_{X'}(p) \le 2$ for each $p \in P$ and $\deg_{X'}(q) \ge 1$ for each $q \in Q$. We obtain $X$ from $X'$ by simply removing, for each $q \in Q$, any $\deg_{X'}(q) - 1$ edges incident to $q$. Then we have $\deg_X(q) \le 2$ for all $p \in P$ and $\deg_X(q) = 1$ for all $q \in Q$, as required by \cref{lem:semistreaming}.
    \end{proof}

    Note that \cref{thm:semistreaming} does not rule out the existence of a \emph{two-pass} semi-streaming algorithm with otherwise the same properties as in its statement.
    However, such an algorithm would give rise to a two-pass semi-streaming $2/3$-approximation for maximum matching, for perfectly-matchable bipartite graphs (using the same arguments as in \cref{thm:semistreaming}). This would significantly improve over the current state of the art, which is a $(2-\sqrt{2}) \approx 0.585$-approximation~\citep{Konrad18}.

\section{Streaming Algorithm}\label{sec:twopass}

   In this section, we present a two-pass algorithm for FMMSM. In particular, we show how to transform any $\alpha$-approximation for streaming submodular maximization over the intersection of two matroids into an $\alpha/2$-approximation for FMMSM, at the cost of a factor-2 violation of the fairness lower bound constraints. Finally, we show that the problem can be solved exactly in one-pass in the special case of modular objectives.

    \subsection{First pass: finding a feasible set}

    The algorithm for the first pass, \firstpass, is simple: it collects a maximal independent set $I_c$ (with respect to $\cI$) for each color \emph{independently}. The number of kept elements is at most $C\cdot k$
    (recall that $k$ is the rank of the matroid $\cI$).
    Then it computes a feasible solution in $\bigcup_c I_c$ as follows. First, it defines the partition matroid $\cI_C$ on $V$ as:
    \begin{equation}\label{def:I_C}
        \cI_C = \{S \subseteq V \mid |V_c \cap S| \le \ell_c \qquad \forall c = 1,\dots C\}.    
    \end{equation}
    Second, it uses any polynomial-time algorithm to find a maximum-size common independent set in $\bigcup_c I_c$ with respect to the two matroids $\cI$ and $\cI_C$. 
    
    To analyze \firstpass we need two ingredients: that a feasible solution is always contained in $\bigcup_c I_c$, and that our algorithm finds one.
    
    \begin{algorithm}
    \caption{\firstpass\label{alg:firstpass}}
        \begin{algorithmic}[1]
            \STATE $I_c \leftarrow \emptyset$ for all $c = 1,...,C$
            \FOR{each element $e$ on the stream}
                \STATE Let $c$ be the color of $e$
                \STATE \textbf{If} $I_c + e \in \cI $ \textbf{then} $I_c \leftarrow I_c + e$  
            \ENDFOR
            \STATE Consider the partition matroid $\cI_C$ on $V$ defined in (\ref{def:I_C})
            \STATE $S \leftarrow$ a max-cardinality subset of $\bigcup_c I_c$ in $\cI \cap \cI_C$  (\Cref{lem:intersection})
            \STATE \textbf{Return} $S$ 
        \end{algorithmic}
    \end{algorithm}
     
    \begin{lemma} \label{lem:existence} 
        For each color $c$, let $I_c \subseteq V_c$ be any maximal subset that is independent with respect to $\cI$. Then, as long as $\cF \neq \emptyset$, there exists a feasible set $R \subseteq \bigcup_c I_c$.
    \end{lemma}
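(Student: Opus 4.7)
The plan is to fix any feasible $S^\star \in \cF$ and transform it, color by color, into a set $R \subseteq \bigcup_c I_c$ that still lies in $\cI$ and satisfies $|R \cap V_c| = |S^\star \cap V_c|$ for every color $c$. Since $S^\star$ already satisfies $\ell_c \le |S^\star \cap V_c| \le u_c$, the resulting $R$ will automatically be feasible, which is exactly what we need.

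Concretely, I would process the colors $c = 1, \dots, C$ one at a time while maintaining the invariant that after step $c$ we have an independent set $R^{(c)} \in \cI$ with $|R^{(c)} \cap V_{c'}| = |S^\star \cap V_{c'}|$ for all $c'$, and $R^{(c)} \cap V_{c'} \subseteq I_{c'}$ for every $c' \le c$. To pass from $R^{(c-1)}$ to $R^{(c)}$, I would set $A := R^{(c-1)} \setminus V_c$ and look for a subset $R'_c \subseteq I_c$ with $|R'_c| = |S^\star \cap V_c|$ such that $A \cup R'_c \in \cI$; then $R^{(c)} := A \cup R'_c$ clearly preserves the invariant, and it suffices to take $R := R^{(C)}$ at the end.

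The crux of the argument is showing that such an $R'_c$ exists, and I would do this in the contracted matroid $\cI/A$. On the one hand, $R^{(c-1)} \cap V_c$ is independent in $\cI/A$ and has size $|S^\star \cap V_c|$, so the rank satisfies $r_{\cI/A}(V_c) \ge |S^\star \cap V_c|$. On the other hand, because $I_c$ is a maximal independent subset of $V_c$ in $\cI$, every element of $V_c$ lies in the $\cI$-closure of $I_c$, and this carries over to $\cI/A$ via the standard identity $r_{\cI/A}(X) = r_\cI(X \cup A) - r_\cI(A)$, which implies that the closure of $I_c$ in $\cI/A$ still contains $V_c$. Hence $r_{\cI/A}(I_c) = r_{\cI/A}(V_c) \ge |S^\star \cap V_c|$, and applying matroid augmentation inside $I_c$ produces the desired $R'_c$.

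The main thing to be careful about is this closure-under-contraction step; the rest is routine bookkeeping over colors. Once the single-color exchange is justified, iterating it yields $R \subseteq \bigcup_c I_c$ that is independent in $\cI$ with $|R \cap V_c| = |S^\star \cap V_c| \in [\ell_c, u_c]$ for every $c$, so $R \in \cF$ as claimed.
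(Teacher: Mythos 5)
Your argument is correct, but it takes a genuinely different route from the paper's. The paper picks a feasible $R$ minimizing $|R \setminus \bigcup_c I_c|$ and derives a contradiction via a single-element exchange: for $x \in (R\cap V_c)\setminus I_c$ it extends $I_c$ to a maximal independent subset of $I_c \cup R$ and invokes augmentation to find a same-colored $y \in I_c \setminus R$ with $R - x + y \in \cI$, strictly decreasing the count. You instead replace the solution one \emph{color block} at a time: contracting the rest of the current solution $A = R^{(c-1)}\setminus V_c$, you observe that $R^{(c-1)}\cap V_c$ witnesses $r_{\cI/A}(V_c) \ge |S^\star \cap V_c|$, and that $I_c$ spans $V_c$ in $\cI$ (by its maximality in $V_c$) and hence still spans $V_c$ after contraction, via $r_{\cI/A}(X) = r_\cI(X\cup A) - r_\cI(A)$ and monotonicity of closure; this yields a subset $R'_c \subseteq I_c$ of the right size with $A \cup R'_c \in \cI$. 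All the steps check out: $A$ is disjoint from $V_c$, the invariant preserves color counts, and the final $R^{(C)}$ is independent with $|R^{(C)}\cap V_c| = |S^\star\cap V_c| \in [\ell_c,u_c]$. The trade-off is that the paper's proof is more elementary (only the augmentation axiom, no contraction or closure machinery) and its extremal set-up transfers directly to the modular-optimality argument in the appendix, whereas your contraction-based view is more structural and makes the ``why'' transparent in one shot: a basis of $V_c$ remains spanning for $V_c$ in any minor obtained by contracting a set disjoint from $V_c$, so each color class of any feasible solution can be swapped wholesale into $I_c$.
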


    \begin{proof}
        Let $R$ be any set in $\cF$ such that $|R \setminus \bigcup_c I_c|$ is minimal; note that such $R$ exists as we are assuming that $\cF \ne \emptyset$. To prove the lemma it is enough to show that $|R \setminus \bigcup_c I_c|$ is actually $0$.
    
        Assume towards a contradiction that $|R \setminus \bigcup_c I_c| > 0$. We show how to exchange an element $x \in R \setminus \bigcup_c I_c$ for an element $y \in \bigcup_c I_c \setminus R$ of the same color as $x$ such that $R - x + y \in \cI$. This contradicts the choice of $R$, as $|(R - x + y) \setminus\bigcup_c I_c| = |R \setminus \bigcup_c I_c| - 1$.
    
        Without loss of generality, assume that $(R \cap V_1) \setminus I_1 \neq \emptyset$, and let $x$ be any of its elements. Extend $I_1$ to any maximal independent set $I_1'$ in $I_1 \bigcup R$ containing $I_1$.
        By maximality of $I_1'$, and since $R, I_1' \in \cI$ we have
        \[
            |R - x| < |R| \le |I_1'|.
        \]
        By the matroid augmentation property, there exists $y \in I_1' \setminus (R-x)$ such that $R-x+y \in \cI$. Because \[ I_1' \setminus (R-x) \subseteq (I_1 \cup R) \setminus (R-x) \subseteq I_1 \setminus R + x , \] we must have $y \in I_1 \setminus R$ or $y = x$.
        The latter is impossible; if $y=x$, then $x \in I_1'$ and thus $I_1 + x \subseteq I_1'$;
        as the latter is independent, so is the former. 
        However, as $x \in V_1$, this contradicts the maximality of $I_1$ (as an independent subset of $V_1$). Thus we must have $y \in I_1 \setminus R$ (and recall that $R-x+y \in \cI$), as desired.
    \end{proof}

    \begin{theorem}\label{thm:firstpass}
        There exists a one-pass streaming algorithm that runs in polynomial time, uses $O(k \cdot C)$ memory, and outputs a feasible solution. 
    \end{theorem}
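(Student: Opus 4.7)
The plan is to establish the three claims in the theorem separately: the memory bound, the polynomial running time, and the feasibility of the output. The algorithm is \firstpass (\Cref{alg:firstpass}), so I will argue about its two phases — the stream collection of the $I_c$'s and the post-processing on $\bigcup_c I_c$.

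For the memory bound, observe that each $I_c$ is maintained as an independent set in $\cI$, hence $|I_c| \le k$ by the matroid rank bound; summing over $c$ gives $O(k \cdot C)$ items stored, which dominates the bookkeeping for $\cI_C$. For the running time, the only nontrivial step is computing $S$, a maximum-cardinality common independent set of $\cI$ and $\cI_C$ restricted to $\bigcup_c I_c$; this is polynomial by \Cref{lem:intersection}.

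The main content of the proof is verifying that the returned set $S$ is feasible, i.e., $S \in \cI$ and $\ell_c \le |S \cap V_c| \le u_c$ for every color $c$. Independence in $\cI$ holds by construction. The upper bound $|S \cap V_c| \le u_c$ will follow once we show $|S \cap V_c| = \ell_c$, since feasibility of $\cF$ implies $\ell_c \le u_c$. The delicate part is the lower bound $|S \cap V_c| \ge \ell_c$: the partition matroid $\cI_C$ in \eqref{def:I_C} only enforces the \emph{upper} bound $|S \cap V_c| \le \ell_c$, so a priori the algorithm could return a set with $|S \cap V_c|$ strictly less than $\ell_c$.

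To rule this out, I would apply \Cref{lem:existence}: since $\cF \neq \emptyset$ by assumption, there exists a feasible $R \subseteq \bigcup_c I_c$. By downward closure of $\cI$, I can restrict $R$ to a subset $R' \subseteq R$ with $|R' \cap V_c| = \ell_c$ for every $c$ (just discard any excess elements in each color class). Then $R' \in \cI$ by downward closure, and $R' \in \cI_C$ by construction, so $R'$ is a common independent set of $\cI$ and $\cI_C$ contained in $\bigcup_c I_c$ with $|R'| = \sum_c \ell_c$. Since $S$ is a maximum-cardinality such set, $|S| \ge \sum_c \ell_c$. Combining this with $|S \cap V_c| \le \ell_c$ from $S \in \cI_C$ forces $|S \cap V_c| = \ell_c$ for every $c$, which gives both required fairness bounds. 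The main obstacle — the one-sided nature of the partition matroid $\cI_C$ — is thus resolved by this cardinality-matching argument, and the whole proof reduces to invoking \Cref{lem:existence,lem:intersection}.
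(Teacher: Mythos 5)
Your proof is correct and follows essentially the same route as the paper: invoke \Cref{lem:existence} to get a feasible $R \subseteq \bigcup_c I_c$, trim it by downward-closedness to a common independent set of $\cI$ and $\cI_C$ of size $\sum_c \ell_c$, and then use maximality of $S$ together with the per-color caps of $\cI_C$ to force $|S \cap V_c| = \ell_c$ for every $c$. The memory and running-time observations, and the remark that $\ell_c \le u_c$ follows from $\cF \neq \emptyset$, are all fine.
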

    \begin{proof}
        Any set $I_c$ computed by \firstpass is a maximal subset of $V_c$ that is independent in $\cI$,
        thus \Cref{lem:existence} guarantees the existence of a feasible set $R\subseteq \bigcup_c I_c$.
        By the downward-closeness property of $\cI$, we can further assume that $R$ has exactly $\ell_c$ elements of each color $c$
        (by removing any elements beyond that number), i.e., that $|R| = \sum_c \ell_c$.
        Note that any set independent in $\cI_C$ has at most $\sum_c \ell_c$ elements;
        therefore a maximum-cardinality set $S$ as returned by \firstpass will necessarily have
        $|S| = \sum_c \ell_c$ and thus $|S \cap V_c| = \ell_c$. Hence, $S$ is feasible.
    \end{proof}

    \subsection{Second pass: extending the feasible solution}

    Starting with the solution output by \firstpass (which is feasible but has no guaranteed objective value), we show how to find in another pass a high-value independent set that also respects the fairness constraint, up to some slack in the lower bounds. First, the feasible set $S$ is split into two sets $S_1$ and $S_2$ in a balanced way, i.e.,
    \[
        ||S_1 \cap V_c| - |S_2 \cap V_c||\le 1 \quad \forall c=1,2,\dots, C.
    \]
    Both $S_1$ and $S_2$ are independent in $\cI$ (as subsets of $S$). The goal of the second pass is to extend $S_1$ and $S_2$ by adding elements to them to maximize the submodular function.  To that end, we construct two matroids for each of the sets $S_1$ and $S_2$ as follows.
    First, a partition matroid $\cI^C$ induced by the upper bounds on the colors (note the difference with $\cI_C$, where the partition was induced by the \emph{lower} bounds):
    \begin{equation}\label{eq:color_matroid_upper}
        \cI^C = \{X \subseteq V \mid |X \cap V_c| \le u_c \quad \forall c = 1, \dots, C\}  . 
    \end{equation}
    Second, two matroids $\cI_1$ and $\cI_2$ induced on $\cI$ by $S_1$ and $S_2$:
    \begin{equation}
    \label{eq:matroid_S}
        \cI_i = \{X \subseteq V \mid X \cup S_i \in \cI\}.
    \end{equation}
    It is easy to verify that $\cI_i$ is indeed a matroid.

    Let algorithm $\cA$ be any streaming algorithm that maximizes a monotone submodular function subject to two matroid constraints. We run two parallel independent copies of $\cA$: the first one with matroids $\cI^C, \cI_1$ and the second one with matroids $\cI^C, \cI_2$. Let $S'_1$ and $S'_2$ be the results of these two runs of the algorithm, respectively. We return the solution with the larger value, adding as many elements as necessary from $S_i$ to satisfy the relaxed lower bounds.
    The details of the algorithm are presented in \secondpass.
    \begin{algorithm}
        \caption{\secondpass \label{alg:two-pass}}
        \begin{algorithmic}[1]
            \STATE \textbf{Input:} Set $S$ from \firstpass and routine $\cA$
            \STATE $S_1 \gets \emptyset$, $S_2 \gets \emptyset$ 
            \FOR{$e$ in $S$}
                \STATE Let $c$ be the color of $e$
                \IF{$|S_1 \cap V_c| < |S_2 \cap V_c|$}
                    \STATE $S_1 \gets S_1 + e$
                \ELSE
                    \STATE $S_2 \gets S_2 + e$
                \ENDIF
            \ENDFOR
            \STATE Define matroids $\cI^C$, $\cI_1,\cI_2$ as in \Cref{eq:matroid_S,eq:color_matroid_upper}
            \STATE Run two copies of $\cA$, one for matroids $\cI^C, \cI_1$ and one for matroids $\cI^C, \cI_2$, and let $S'_1$ and $S'_2$ be their outputs
            \FOR{$i=1,2$} \label{l:postprocessing-start}
                \FOR{$e$ in $S_i$}\label{line:fill}
                    \STATE Let $c$ be the color of $e$
                    \STATE \textbf{If} $|S_i'\cap V_c| < u_c$ \textbf{then} $S_i'\gets S_i'+e$
                \ENDFOR
            \ENDFOR \label{l:postprocessing-end}
            \STATE \textbf{Return} $S' = \arg \max (f(S'_1), f(S'_2))$
        \end{algorithmic}
    \end{algorithm}
    We begin the analysis of \secondpass by bounding the violation with respect to upper and lower bounds.

    \begin{lemma}\label{lem:twopass-feasibility}
        The output $S'$ of \secondpass is independent in $\cI$ and for any color $c$ it holds that $\floor{\ell_c/2} \leq |V_c \cap  S'| \le u_c$.
    \end{lemma}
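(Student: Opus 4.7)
The plan is to verify the three claims separately --- independence in $\cI$, the upper bound $u_c$, and the relaxed lower bound $\floor{\ell_c/2}$ --- by tracking the invariants maintained by the matroid definitions and by the post-processing loop. Let $\bar S'_i$ denote the output of $\cA$ on matroids $\cI^C, \cI_i$ before the post-processing, and let $S'_i$ denote its state after the post-processing.

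For independence, since $\cA$ respects the matroid $\cI_i$, by the definition in \eqref{eq:matroid_S} we have $\bar S'_i \cup S_i \in \cI$. The post-processing loop adds only elements of $S_i$ to $\bar S'_i$, so the final set satisfies $S'_i \subseteq \bar S'_i \cup S_i$, and by downward-closedness we get $S'_i \in \cI$, hence $S' \in \cI$. For the upper bound, $\cA$ also respects $\cI^C$, so $|\bar S'_i \cap V_c| \leq u_c$; each element of color $c$ is added during post-processing only when the current count is strictly less than $u_c$, preserving the invariant $|S'_i \cap V_c| \leq u_c$ throughout.

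For the lower bound, I would first invoke \cref{thm:firstpass} to assume $|S \cap V_c| = \ell_c$ for every color. The greedy splitting in the first loop of \secondpass maintains $\bigl||S_1 \cap V_c| - |S_2 \cap V_c|\bigr| \leq 1$, hence $|S_i \cap V_c| \geq \floor{\ell_c/2}$ for $i = 1, 2$. After post-processing, since additions halt only when the cap $u_c$ is reached, the final count satisfies $|S'_i \cap V_c| = \min\bigl(u_c,\, |(\bar S'_i \cup S_i) \cap V_c|\bigr) \geq \min(u_c, |S_i \cap V_c|) \geq \floor{\ell_c/2}$, using $u_c \geq \ell_c \geq \floor{\ell_c/2}$, which follows from feasibility of the input instance.

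The main delicate point is tracking the effect of the post-processing on the per-color counts: the key observation is that elements of $S_i$ already present in $\bar S'_i$ still contribute to $|S'_i \cap V_c|$, so it is the union $\bar S'_i \cup S_i$ (rather than a disjoint sum) that governs the final count. This observation cleanly reconciles the upper and relaxed-lower bound arguments, as it shows that the post-processing can always bring the count up to $|S_i \cap V_c|$ unless it is prevented by the cap $u_c$, which in turn is itself at least $\ell_c$.
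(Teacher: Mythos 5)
Your proof is correct and follows essentially the same route as the paper's: independence via $\bar S'_i \cup S_i \in \cI$ and downward-closedness, the upper bound via the guard in the post-processing loop, and the lower bound via the balanced split giving $|S_i \cap V_c| \ge \floor{\ell_c/2}$ and the observation that post-processing only stops short of restoring $S_i \cap V_c$ when the count has already reached $u_c \ge \ell_c$. If anything, your explicit $\min\bigl(u_c, |(\bar S'_i \cup S_i)\cap V_c|\bigr)$ formulation spells out the case $|S'_i \cap V_c| = u_c$ that the paper leaves implicit.
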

    \begin{proof}  
        Without loss of generality assume that $S' = S_1'$ and divide it into two parts: the elements $X$ that were added by $\cA$, and the elements $Y$ that were added from $S_1$ in the \textbf{for} loop on Line \ref{line:fill}. As $X$ is in $\cI_1$, we have $X \cup S_1 \in \cI$ and therefore also $S_1'=X \cup Y \in \cI$ by downward-closedness.
        
        Consider now the color matroid $\cI^C$ that models the upper bounds $u_c$. As $X$ is in $\cI^C$, and the elements added in the \textbf{for} loop on Line \ref{line:fill} never violate the upper bounds, we have $S_1' \in \cI^C$.
        
        Finally we consider the constraints $\ell_c$ and show that $\floor{\ell_c/2} \leq |V_c \cap  S'|$ for all colors $c$. The set $S$ output by \firstpass is such that $|V_c \cap  S| \geq \ell_c$ and is then divided into $S_1$ and $S_2$ in a balanced way, so that
        \begin{equation}\label{eq:lower_proof}
            |S_1 \cap V_c| \ge \floor{\ell_c/2} \qquad \forall c=1,\dots,C .
        \end{equation} 
        For any color $c$ such that $|S'_1 \cap V_c| < u_c$, all the elements in $S_1 \cap V_c$ are added to $S_1'$, and thus the guarantees on the lower bounds in \eqref{eq:lower_proof} are passed onto $S_1'$.  
    \end{proof}

    \begin{lemma}\label{lem:twopass-approximation}
        Assume that $\cA$ is an $\alpha$-approximate streaming algorithm for the problem of monotone submodular maximization subject to the intersection of two matroids. Then \secondpass is an $\alpha/2$-approximation algorithm.
    \end{lemma}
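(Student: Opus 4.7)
The plan is to show that at least one of the two parallel invocations of $\cA$ already finds a solution of value at least $(\alpha/2)\, f(\OPT)$; the post-processing loop only adds elements and therefore can only increase the objective by monotonicity. The key ingredient is a decomposition of $\OPT$ into two disjoint pieces $O_1, O_2$ such that $O_i$ is a feasible solution for the two-matroid instance solved by the $i$-th copy of $\cA$. Combined with submodularity, this ensures that the maximum of the two competitor values is at least $f(\OPT)/2$.

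First I would establish the following matroid decomposition lemma: there is a partition $\OPT = O_1 \sqcup O_2$ with $O_1 \in \cI^C \cap \cI_1$ and $O_2 \in \cI^C \cap \cI_2$. To produce it, extend $S = S_1 \cup S_2$ and $\OPT$ to bases $\tilde S$ and $\tilde O$ of $\cI$ using the augmentation property. Apply the exchange lemma (\Cref{lem:decomposition}) to the bases $\tilde S, \tilde O$ with the partition $X_1 = S_1$, $Y_1 = \tilde S \setminus S_1$; this yields a partition $\tilde O = X_2 \sqcup Y_2$ such that both $S_1 \cup Y_2$ and $X_2 \cup (\tilde S \setminus S_1)$ are bases of $\cI$. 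Set $O_1 = Y_2 \cap \OPT$ and $O_2 = X_2 \cap \OPT$. Downward-closedness gives $S_1 \cup O_1 \in \cI$, hence $O_1 \in \cI_1$, and, using $S_2 \subseteq \tilde S \setminus S_1$, also $S_2 \cup O_2 \in \cI$, hence $O_2 \in \cI_2$. Since $\OPT \in \cF \subseteq \cI^C$, both $O_1$ and $O_2$ also lie in $\cI^C$.

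Second, I would combine the decomposition with the approximation guarantee of $\cA$. Let $S_1''$ and $S_2''$ denote the outputs of the two copies of $\cA$ before the post-processing loop on Line~\ref{line:fill}. Since $O_i \in \cI^C \cap \cI_i$, the $\alpha$-approximation guarantee yields $f(S_i'') \ge \alpha\, f(O_i)$ for $i = 1,2$. By submodularity and the disjointness of $O_1, O_2$,
\[
    f(O_1) + f(O_2) \;\ge\; f(O_1 \cup O_2) + f(O_1 \cap O_2) \;=\; f(\OPT) + f(\emptyset) \;=\; f(\OPT).
\]
Therefore $\max(f(S_1''), f(S_2'')) \ge (\alpha/2)\, f(\OPT)$. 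The post-processing loop only inserts elements, so by monotonicity $f(S_i') \ge f(S_i'')$, and the returned set $S' = \arg\max(f(S_1'), f(S_2'))$ satisfies $f(S') \ge (\alpha/2)\, f(\OPT)$, as claimed.

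The main obstacle is the matroid decomposition step: \Cref{lem:decomposition} is stated for bases, whereas $S$ and $\OPT$ need not be bases of $\cI$. I sidestep this by padding both to bases and then restricting the resulting partition of $\tilde O$ back to $\OPT$ via intersection; downward-closedness of $\cI$, $\cI_1$, $\cI_2$, and $\cI^C$ ensures that all required independences (and the upper-bound constraints inherited from $\OPT$) carry over to the restricted sets.
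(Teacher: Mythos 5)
Your proof is correct and follows essentially the same route as the paper: partition $\OPT$ into $O_1, O_2$ via the exchange property of bases so that $O_i \in \cI^C \cap \cI_i$, invoke the $\alpha$-approximation guarantee on each copy of $\cA$, and finish with submodularity and monotonicity. The only difference is that you explicitly justify applying \cref{lem:decomposition} to non-bases by padding $S$ and $\OPT$ to bases and restricting back, a step the paper dispatches with a footnote; your elaboration is a valid (and slightly more careful) way to fill in that detail.
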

    \begin{proof}
    Let $S$ be the set output by \secondpass that is then divided into $S_1$ and $S_2$, and let $\OPT$ be the optimal solution. We apply \Cref{lem:decomposition} on the partition $S_1$, $S_2$ of $S$ and $\OPT$.\footnote{\cref{lem:decomposition} is stated for bases of the matroid $\cI$, but it clearly holds also for general independent sets.} Thus $\OPT$ can be partitioned into two sets $O_1$ and $O_2$ such that $O_1 \cup S_1 \in \cI$ and $O_2 \cup S_2 \in \cI$ or, equivalently, $O_i \in \cI_i$ for $i=1,2$. Moreover, both $O_1$ and $O_2$ respect the color matroid $\cI^C$, thus the approximation guarantee of $\cA$ and the monotonicity of $f$ imply that 
    \begin{equation}
        \label{eq:alpha}
          f(S_i') \ge \alpha \cdot f(O_i) \qquad \forall i=1,2.
    \end{equation}
    Now, we are ready to prove the result:
    \begin{align*}
        f(S') &\ge \frac12 \left(f(S'_1) + f(S'_2)\right)\\
        &\ge \frac{\alpha}{2} \left(f(O_1) + f(O_2) \right) \\
        &\ge \frac{\alpha}{2} f(\OPT)
    \end{align*}
    where the first inequality follows by the definition of $S'$, the second by \eqref{eq:alpha}, and the last one by submodularity.
    \end{proof}

    If we plug in the state-of-the-art $1/5.828$-approximation algorithm by \citet{GargJS21} as $\cA$, we get \cref{thm:twopass}:
    \twopass*

    \paragraph{Heuristics.} \label{par:heuristics}
    Although in principle, the feasible solution chosen in the first pass does not need to have any value, empirically it helps to choose a feasible solution with good value. In our empirical evaluation (\cref{sec:exp}), we use an alternative algorithm, \GreedyFirstpass, in the first pass instead of \firstpass. Rather than collecting a maximal independent set $I_c$ of arbitrary elements for each color $c$, \GreedyFirstpass picks elements greedily (see \cref{sec:greedyFirstpass} for details). Similarly, instead of adding arbitrary elements of $S_1, S_2$ at the end (lines \ref{l:postprocessing-start}-\ref{l:postprocessing-end} in \secondpass), we can use $\cA$ to select good elements of $S_1, S_2$ to add (see \cref{sec:betterFilling} for details). We call the resulting algorithm \bettersecondpass.

    We also propose a simple one-pass heuristic streaming algorithm, \Greedy, which runs  \GreedyFirstpass to find a feasible solution, then greedily augments it with elements from $\bigcup_c I_c$  (see \cref{sec:onepassHeuristic} for details).

    \subsection{The modular case}
        We can obtain better results in the special case of modular objectives. In particular,  we present in  \cref{sec:streamingMod} a one-pass algorithm which solves the fair matroid \emph{modular} maximization (F3M) problem exactly. The algorithm greedily collects maximal independent sets $I_c$ for each color $c$ in the same way as in \GreedyFirstpass, then returns an optimal feasible solution in $\cup_{c} I_c$. The second step can be done in polynomial time, as we show in \cref{sec:centralized_modular}, where we present two polynomial time algorithms for the centralized version of F3M. For further details, we refer the reader to \cref{sec:streamingMod,sec:centralized_modular}.
        \begin{restatable}{theorem}{StreamingModular}
            There exists a one-pass streaming algorithm for {F3M}, which finds an optimal solution, uses $O(k \cdot C)$ memory, and runs in polynomial time.
        \end{restatable}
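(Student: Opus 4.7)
The plan is to give a single-pass algorithm that, for each color $c$, maintains a set $I_c \subseteq V_c$ via the standard streaming matroid greedy with swaps: when a color-$c$ element $x$ arrives, add it if $I_c + x \in \cI$; otherwise locate the unique fundamental circuit $C \subseteq I_c + x$ of $\cI$, find its lightest element $y \in C - x$, and replace $y$ by $x$ whenever $f(y) < f(x)$. After the pass, run the polynomial-time centralized F3M solver from \cref{sec:centralized_modular} on the ground set $\bigcup_c I_c$ and return its output. Since $I_c$ is always independent in $\cI$, we have $|I_c| \le k$, so the total memory is $O(k C)$; each per-element update is polynomial via standard independence/circuit queries, and the final centralized call is polynomial by the cited result. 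A standard property of the swap-greedy is that at the end of the stream $I_c$ is the maximum-weight basis of the restriction of $\cI$ to $V_c$.

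The non-trivial part is to show that some F3M-optimal solution is contained in $\bigcup_c I_c$, which then implies that the centralized solver recovers a global optimum. I would prove this by an exchange argument. Let $\OPT \in \cF$ be any optimal solution; if $\OPT \not\subseteq \bigcup_c I_c$, pick $x \in \OPT \setminus \bigcup_c I_c$ of color $c$. Since $I_c$ is a maximal independent subset of $V_c$, the set $I_c + x$ contains a unique fundamental circuit $C \subseteq V_c$. Two observations drive the swap: \emph{(i)} because $I_c$ is the maximum-weight basis of $\cI$ restricted to $V_c$, every $y \in C - x$ satisfies $f(y) \ge f(x)$, as otherwise $I_c - y + x$ would be a strictly heavier basis; and \emph{(ii)} because $C$ is a circuit of $\cI$, the element $x$ lies in the span of $C - x$, so adjoining $x$ to $(\OPT - x) \cup (C - x)$ does not raise the rank. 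Combined with rank monotonicity, this yields
\[
r\bigl((\OPT - x) \cup (C - x)\bigr) \;\ge\; r(\OPT) \;=\; |\OPT|,
\]
where $r$ denotes the rank function of $\cI$. Matroid augmentation applied to the independent set $\OPT - x$ of size $|\OPT| - 1$ inside a set of rank at least $|\OPT|$ yields some $y \in (C - x) \setminus (\OPT - x)$ with $(\OPT - x) + y \in \cI$. Such a $y$ lies in $I_c$, has color $c$, is distinct from $x$ and hence not in $\OPT$, and satisfies $f(y) \ge f(x)$. The swap $\OPT' := (\OPT - x) + y$ therefore stays in $\cI$, preserves every color count (so $\OPT' \in \cF$), and has $f(\OPT') \ge f(\OPT)$. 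Optimality of $\OPT$ forces equality, while the measure $|\OPT' \setminus \bigcup_c I_c|$ strictly decreases. Iterating terminates with an optimal solution contained in $\bigcup_c I_c$, proving the claim.

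The main obstacle is precisely the passage from a local within-$V_c$ exchange to a globally feasible one. The max-weight basis property of $I_c$ only guarantees a weight-non-decreasing replacement for $x$ inside the restriction of $\cI$ to $V_c$, but a priori such a replacement could destroy matroid independence once the other colors of $\OPT$ are taken into account. The rank computation above is what bridges this gap: because the entire fundamental circuit $C$ sits inside $V_c$ while $x$ is spanned by $C - x$ in the global matroid, at least one element of $C - x$ remains addable to $\OPT - x$, making the locally greedy swap simultaneously globally admissible.
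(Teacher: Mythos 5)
Your proposal is correct and follows the same architecture as the paper's proof: a per-color swap-greedy maintaining independent sets $I_c$, an offline F3M solver (from \cref{sec:centralized_modular}) applied to $\bigcup_c I_c$, and an exchange argument showing that some optimal solution survives inside $\bigcup_c I_c$. The one substantive difference is where the key weight inequality comes from. You obtain ``every $y$ in the fundamental circuit $C-x$ satisfies $f(y)\ge f(x)$'' by invoking, as a black box, the classical invariant that the swap-greedy maintains a maximum-weight basis of the elements of $V_c$ seen so far. The paper does not assume this; it \emph{proves} exactly this optimality of $I_c$ (and the circuit property) as \cref{fact:greedy}, via the DAG-reachability lemma of Feldman et al.\ (\cref{lem:graph}), since the algorithm as written swaps with the first admissible element in increasing $f$-order rather than literally the circuit minimum. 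So the one ``standard property'' you cite is precisely the part of the argument the paper spends its effort on; a fully self-contained write-up would need either that induction or the DAG argument. Your exchange step is also phrased slightly differently --- you bound $r\bigl((\OPT-x)\cup(C-x)\bigr)\ge|\OPT|$ via the fact that $x$ is spanned by $C-x$, whereas the paper extends $C-x$ to a maximal independent subset of $(C-x)\cup\OPT$ and compares cardinalities --- but these are equivalent, and yours is arguably the cleaner formulation. Everything else (color preservation of the swap, strict decrease of $|\OPT\setminus\bigcup_c I_c|$, the $O(kC)$ memory and polynomial running time) checks out.
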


\section{Empirical Evaluation}\label{sec:exp}
    In this section, we empirically evaluate the performance of our algorithms on three applications: maximum coverage, movie recommendation, and exemplar-based clustering, with various choices of fairness and matroid constraints. In comparing our algorithms against two baselines, we consider {objective values}, as well as violations of fairness constraints, which we define for a given set $S$ as $\err(S) = \sum_{c} \max\{|S \cap V_c| - u_c, \ell_c - |S \cap V_c|, 0\}$. Each term in this sum counts the number of elements by which $S$ violates the lower or upper bound. Note that $\err(S)$ is in the range $[0, 2 k]$.
    We compare the following algorithms:
    \begin{itemize}
        \item \textbf{\OurTwopass}: using \GreedyFirstpass in the first pass, and \bettersecondpass in the second pass with $\cA =$ \Matroid, explained below. 
        \item \textbf{\Greedy}: a one-pass heuristic algorithm based on the ideas of our two-pass algorithm (see \cref{sec:onepassHeuristic} for details).
        \item \textbf{\Matroid}: streaming algorithm for submodular maximization over two matroid constraints \citep{ChakrabartiK15} with $\cI$ and $\cI^C$. 
        \item \textbf{\Random}: randomly selects a base set in $\cI$; no fairness constraints.
    \end{itemize}

    We describe below the setup of our experiments. We select fairness bounds $\ell_c, u_c$ which yield instances with feasible solutions, and enforce either that each color group $V_c$ comprises a similar portion of the solution set $S$ (in examplar-based clustering) or that they have a similar representation in $S$ as in the entire dataset (in maximum coverage and movie recommendation). 
    We report the results in \cref{fig:results}, and discuss them in \cref{sec:results}. Varying the specific values of the bounds yields qualitatively very similar results. 
    The code is available at \url{https://github.com/dj3500/google-research/tree/master/fair_submodular_matroid}.

    \begin{figure*}[t] 
\begin{subfigure}{.33\textwidth}
  \centering
\includegraphics[trim=20 150 0 150, clip, scale=.25]{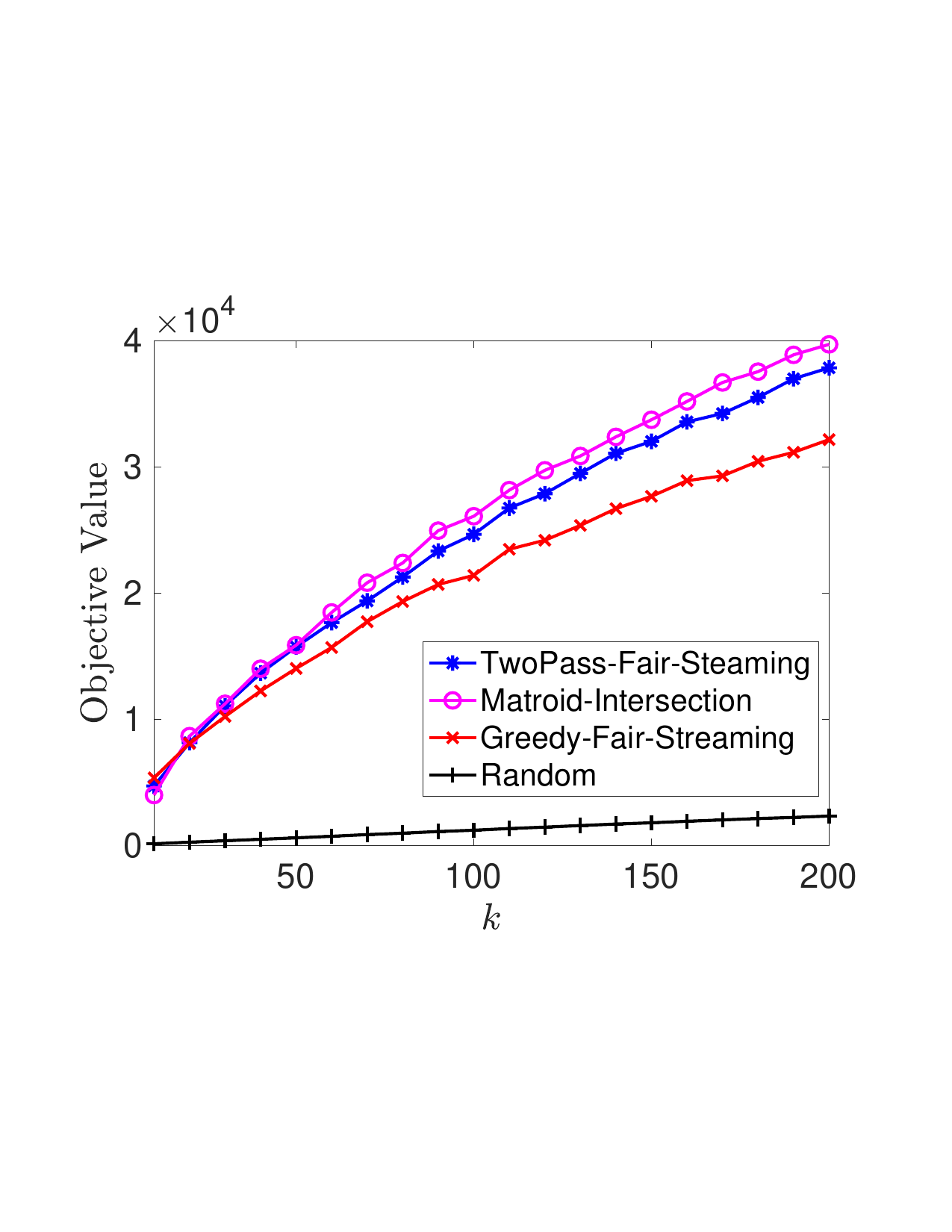} 
\vspace*{-10mm}\caption{\small\label{fig:coverage-obj} Maximum coverage}
\end{subfigure}
\begin{subfigure}{.33\textwidth}
  \centering
\includegraphics[trim=20 150 0 150, clip, scale=.25]{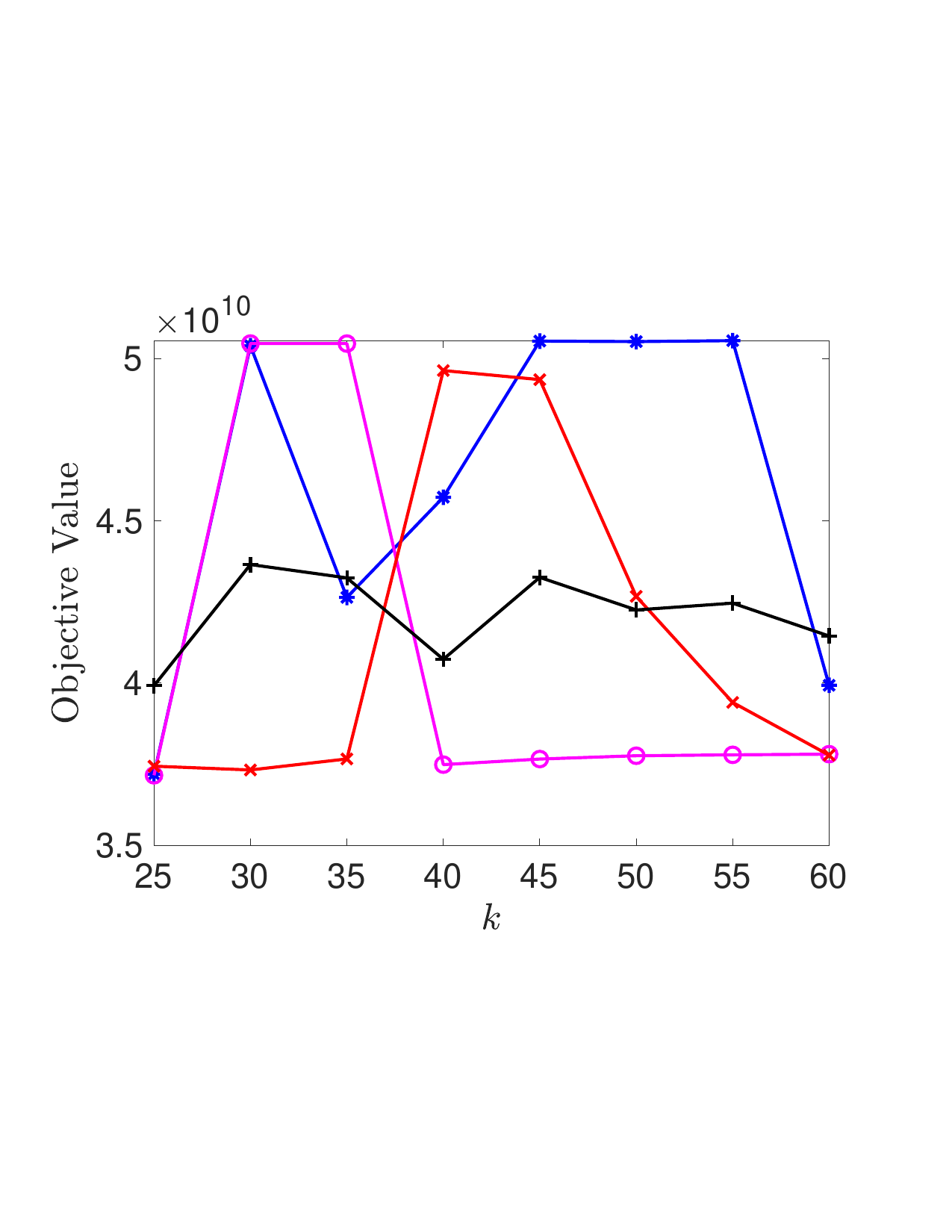} 
\vspace*{-10mm}\caption{\small\label{fig:bank-obj} Exemplar clustering}
\end{subfigure}
\begin{subfigure}{.33\textwidth}
  \centering
\includegraphics[trim=20 150 0 150, clip, scale=.25]{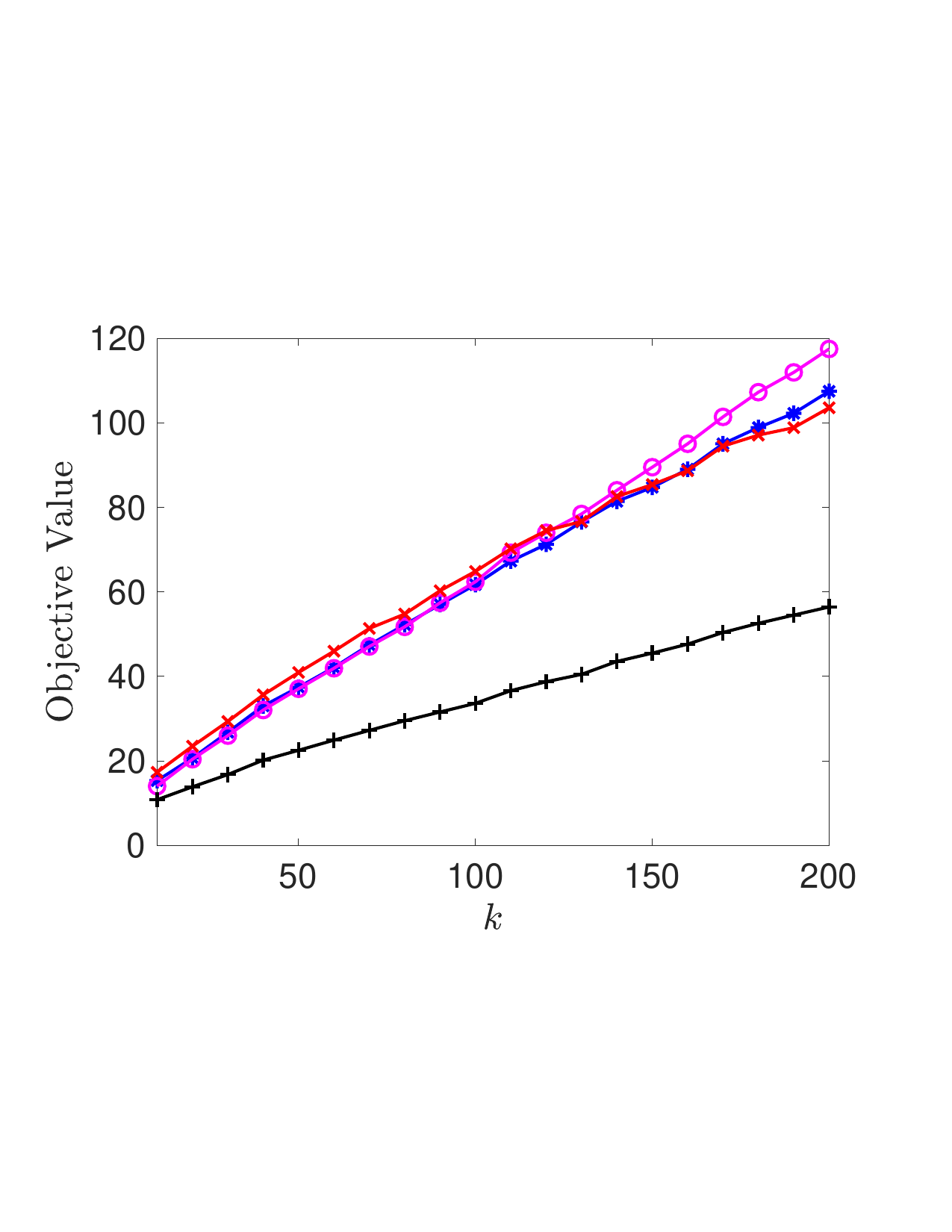} 
\vspace*{-10mm}\caption{\small\label{fig:movies-obj}  Movie recommendation}
\end{subfigure} \vspace{-15pt}\\
\begin{subfigure}{.33\textwidth}
  \centering
\includegraphics[trim=20 150 0 150, clip, scale=.25]{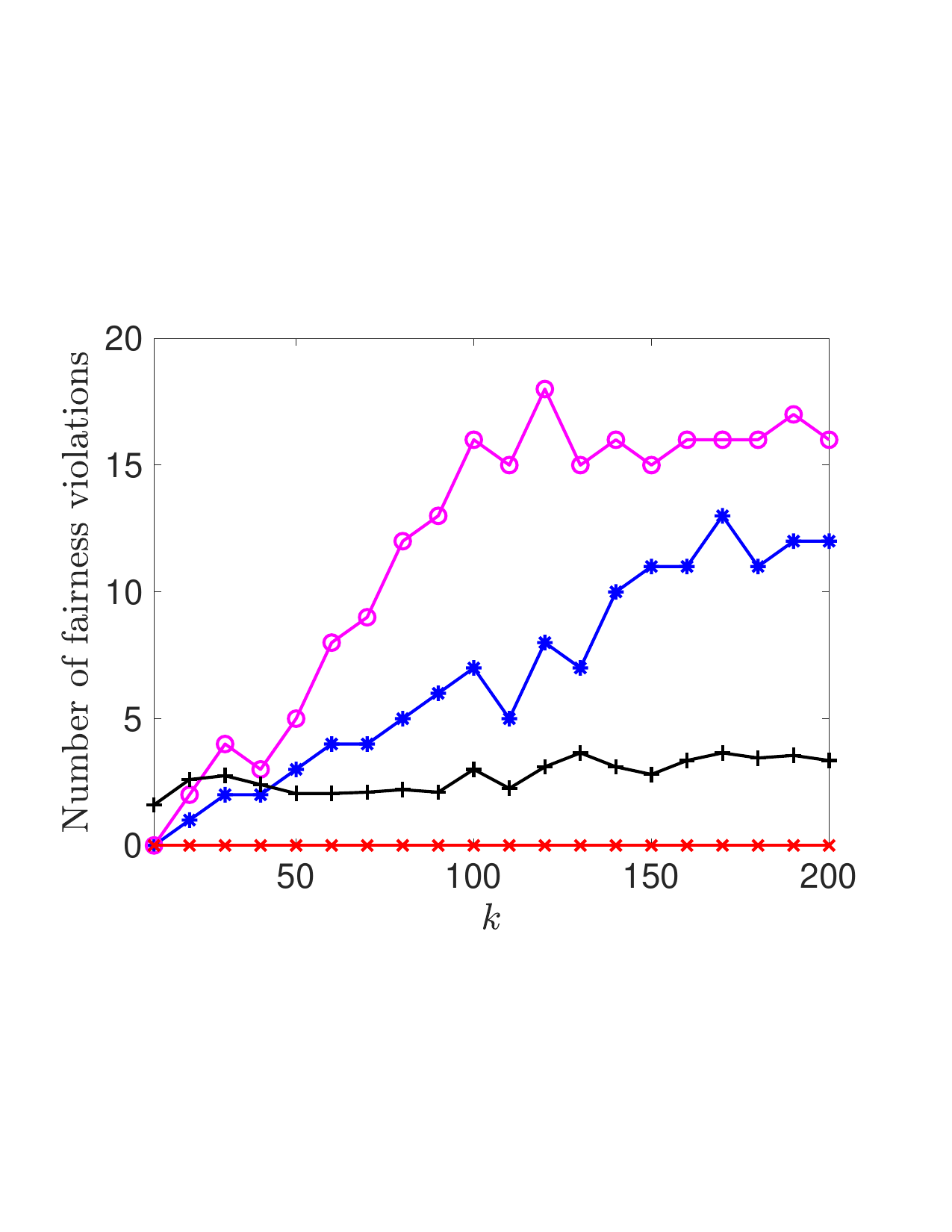}
\vspace*{-10mm}\caption{\small\label{fig:coverage-err} Maximum coverage}
\end{subfigure}
\begin{subfigure}{.33\textwidth}
  \centering
\includegraphics[trim=20 150 0 150, clip, scale=.25]{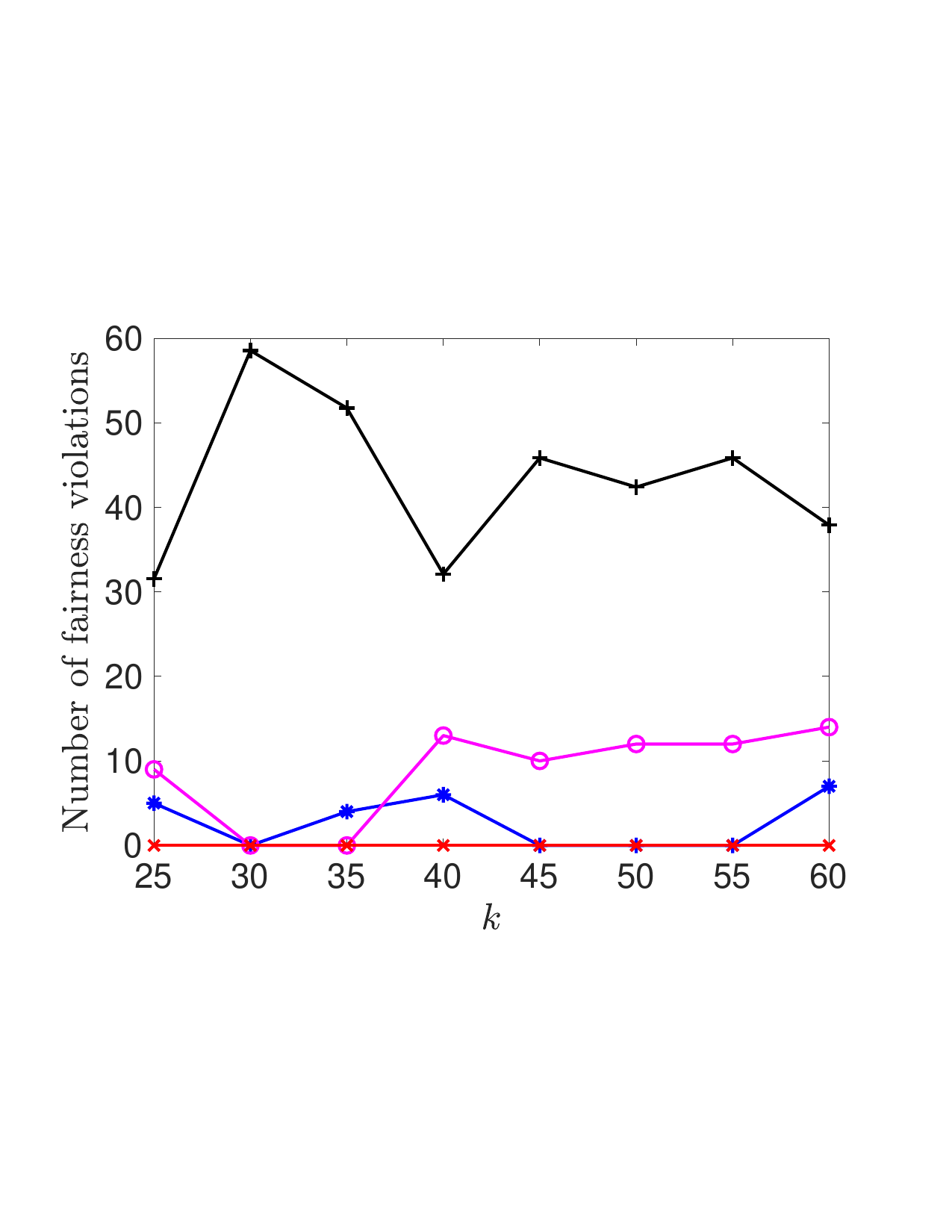}
\vspace*{-10mm}\caption{\small\label{fig:bank-err} Exemplar clustering}
\end{subfigure}
\begin{subfigure}{.33\textwidth}
  \centering
\includegraphics[trim=20 150 0 150, clip, scale=.25]{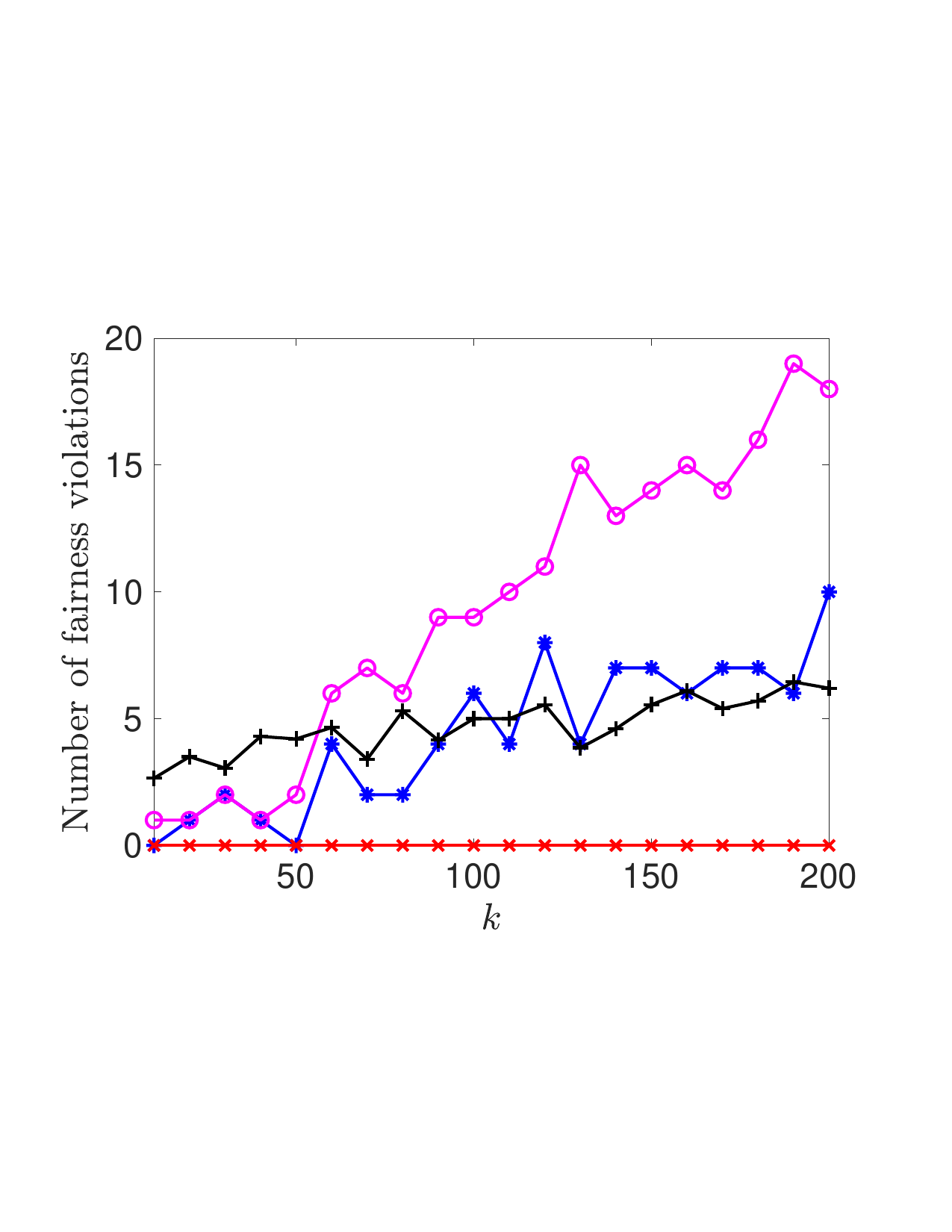}
\vspace*{-10mm}\caption{\small \label{fig:movies-err}  Movie recommendation}
\end{subfigure}
\caption{ \label{fig:results} Objective values (a,b,c) and number of fairness violations (d,e,f) on the three applications.}
\end{figure*}

        \subsection{Maximum coverage}
        \label{sec:maximum-coverage}
            Given a graph $G = (V, E)$, we aim to select a subset of nodes $S \subseteq V$ that maximize the coverage of $S$ in the graph, given by the monotone submodular function $f(S) = \left|\bigcup_{v \in S} N(v)\right|,$ where $N(v) = \{u : (v, u) \in E\}$ denote the set of neighbors of $v$. 
            We use the Pokec social network~\citep{snapnets}, which consists of 1,632,803 nodes, representing users, and 30,622,564 edges, representing friendships. Each user profile contains attributes such as age, gender, height and weight, which can have ``null'' value.  We impose a partition matroid constraint with respect to body mass index (BMI), which is calculated as the squared ratio between weight (in kg) and height (in m). We discard all profiles with no set height or weight (around $60\%$), as well as profiles with clearly fake data (fewer than $2\%$). The resulting graph has 582,289 nodes and 5,834,695 edges. We partition users into four standard BMI categories (underweight, normal weight, overweight and obese). We set the upper bound for each BMI group $V_i$ to $k_i = \ceil*{ \frac{|V_i|}{|V|} k}$. The resulting rank of the matroid is then roughly $k$.
            We also impose a fairness constraint with respect to age, with 7 groups: $[1, 10], [11, 17], [18, 25], [26, 35], [36, 45], [46+]$, and the last group comprised of records with ``null'' age (around $30\%$). We set $\ell_c = \floor*{0.9 \frac{|V_c|}{|V|} k}$
            and
            $u_c = \ceil*{1.5 \frac{|V_c|}{|V|} k}$. We vary $k$ between $10$ and $200$. 
            The results are shown in \cref{fig:coverage-obj} and \ref{fig:coverage-err}.

        \subsection{Exemplar-based clustering}
        \label{sec:exemplar-based-clustering}
            We consider a dataset containing 4,521 records  of phone calls in a marketing campaign ran by a Portuguese banking institution~\citep{MoroCR14}. We aim to find a representative subset of calls $S \subseteq V$ in order to assess the quality of service. We use $7$ features with numeric values (age, account balance, last contact day, duration, number of contacts during the campaign,  number of days that passed by after the client was last contacted from a previous campaign, number of contacts performed before this campaign) to represent each record $e \in V$ in the Euclidean space as $x_e \in \R^7$. We impose a partition matroid constraint with respect to account balance, with $5$ groups: $(-\infty, 0), [0, 2000), [2000, 4000), [4000, 6000),$ $[6000, \infty)$. We choose equal upper bounds $k_i = k/5$ for each age group $V_i$.  The resulting rank of the matroid is then at most $k$. We also impose a fairness constraint with respect to age, with $6$ groups:  $[0,29], [30,39], [40,49], [50,59], [60,69], [70+]$. We set our fairness bounds as $\ell_c = 0.1 k + 2$ and $u_c = 0.4 k$ for each color $1 \leq c \leq 6$. Then we maximize the following monotone submodular function~\citep{krause2010budgeted}:     
            \[
                f(S) = \sum_{e' \in V} \big(d(e',0)  - \min_{e' \in S \cup \{0\}} d(e',e) \big) \, 
            \]
        where $d(e', e) = \| x_{e'} - x_e \|_2^2$ and $x_0$ is a phantom exemplar, which we choose to be the origin. We vary $k$ between $25$ and $60$. The results are shown in \cref{fig:bank-obj} and \ref{fig:bank-err}.

        \subsection{Movie recommendation}
        \label{sec:movie-recommendation}
            
            We emulate a movie recommendation system using the Movielens 1M dataset~\citep{harper2016movielens}, which includes approximately one million ratings for 3,900 movies by 6,040 users. We implement the experimental design of previous research~\citep{mitrovic2017streaming,norouzi2018beyond,HalabiMNTT20} by computing a low-rank completion of the user-movie rating matrix~\citep{troyanskaya2001missing}, resulting in feature vectors $w_u \in \bbR^{20}$ for each user $u$ and $v_m \in \bbR^{20}$ for each movie $m$. The product $w_u^\top v_m$ approximates the rating of movie $m$ by user $u$. The monotone submodular utility function $f_u(S)$ tailored to user $u$ for a set $S$ of movies is defined as:
            \[ 
            \alpha \cdot \sum_{m' \in M}
            \max\rb{
            \max_{m \in S} \rb{v_m^\top v_{m'}},
            0
            }
            +
            (1 - \alpha)
            \cdot
            \sum_{m \in S} w_u^\top v_m
            . \]
            The parameter $\alpha=0.85$ interpolates between optimizing the coverage of the entire movie collection
            and selecting movies that maximize the user's score.
            We enforce a proportional representation in terms of movie release dates using a laminar matroid with $9$ groups for each decade $d$ between $1911$ and $2000$, and three groups for each 30-year period $t$: 1911--1940, 1941--1970, 1971--2000.
            We set an upper bound of $\ceil*{1.2 \frac{|V_d|}{|V|} k}$ for each decade group $V_d$,
            and an upper bound of roughly $\frac{|V_t|}{|V|} k$ for each 30-year period group $V_t$. The resulting rank of the matroid is then roughly $k$.
            We also partition the movies into 18 genres $c$, which we model using colors. 
            As fairness constraint, we set
            $\ell_c = \floor*{0.8 \frac{|V_c|}{|V|} k}$
            and
            $u_c = \ceil*{1.4 \frac{|V_c|}{|V|} k}$. We vary $k$ between $10$ and $200$. The results are shown in \cref{fig:movies-obj} and \ref{fig:movies-err}.
    
        \subsection{Results} \label{sec:results}
    
            We compare the results of our proposed algorithms, \OurTwopass and \Greedy, with the baselines, \Matroid and \Random -- see \cref{fig:results}. We observe that the value of the submodular function for \OurTwopass and \Greedy is lower than \Matroid by at most  $15\%$ and $26\%$, respectively, while the violation in the fairness constraint is significantly higher for \Matroid. Indeed, \Greedy does not violate the fairness constraint in any of the experiments, as guaranteed theoretically (see \cref{sec:onepassHeuristic}). And the violation of \OurTwopass is often $2-3$ times lower than \Matroid. The objective value of \Random is significantly lower than the other three algorithms on maximum coverage and movie recommendation. Surprisingly, on exemplar-based clustering, \Random obtains  objective values better than \Greedy and \Matroid, and similar to \OurTwopass, for several $k$ values. This comes however at the cost of  significant fairness violations. 
    
    \bibliography{biblio}
    \bibliographystyle{plainnat}

    \clearpage 
    
    \appendix

\section{Heuristics}\label{sec:heuristics}
    In this section, we propose two heuristics which can improve the performance of our two-pass algorithm from \cref{sec:twopass}, and a  one-pass heuristic algorithm for FMMSM.
    
    \subsection{Alternative algorithm for finding a feasible set}\label{sec:greedyFirstpass}
        We propose an alternative algorithm, \GreedyFirstpass, to \firstpass (\cref{alg:firstpass}) for finding a feasible solution $S \in \cF$ in a single pass. \GreedyFirstpass is similar to  \firstpass, but instead of collecting a maximal independent set $I_c$ of arbitrary elements for each color $c$, it picks elements greedily. 
        \begin{algorithm}
        \caption{\GreedyFirstpass\label{alg:Greedyfirstpass}}
            \begin{algorithmic}[1]
                \STATE $I_c \leftarrow \emptyset$ for all $c = 1,...,C$
                \FOR{the next element $e$ on the stream}
                  \STATE Let $c$ be the color of $e$
                    \IF{$I_c + e \in \cI $}
                        \STATE $I_c \leftarrow I_c + e$
                    \ELSE
                        \FOR{$e'\in I_c$ in order of increasing $f(e')$}
                            \IF{$I_c+e-e'\in\cI$ and $f(I_c+e-e')\ge f(I_c)$}
                                \STATE $I_c\gets I_c+e-e'$ and
                                \STATE \textbf{break}
                            \ENDIF
                        \ENDFOR
                    \ENDIF
                \ENDFOR
                \STATE \label{l:feasible-1} Consider the partition matroid $\cI_C$ on $V$ defined in (\ref{def:I_C})
                \STATE \label{l:feasible-2} Let $S \subseteq \cup_{c \in 1\ldots C} I_c$ be any max-cardinality set in $\cI \cap \cI_C$  (\Cref{lem:intersection})
                \STATE \textbf{Return} set $S$ 
            \end{algorithmic}
        \end{algorithm}
    
        Note that \cref{thm:firstpass} still holds;  \GreedyFirstpass is guaranteed to return a feasible solution in polynomial time and using $O(k \cdot C)$ memory.
        Using  \GreedyFirstpass  instead of \firstpass in \secondpass yielded better performance in our empirical evaluation (\cref{sec:exp}).
        Furthermore, in \cref{sec:streamingMod} we show that \GreedyFirstpass can be adapted to return an optimal solution to the problem of fair \emph{modular} maximization over a matroid constraint. 
    
    \subsection{Alternative filling-up procedure}\label{sec:betterFilling}
        \begin{algorithm}
            \caption{\bettersecondpass \label{alg:BetterTwo-pass}}
            \begin{algorithmic}[1]
                \STATE \textbf{Input:} Set $S$ from \firstpass and routine $\cA$
                \STATE $S_1 \gets \emptyset$, $S_2 \gets \emptyset$ 
                \FOR{element $e$ in $S$}
                    \STATE Let $c$ be the color of $e$
                    \IF{$|S_1 \cap V_c| < |S_2 \cap V_c|$}
                        \STATE $S_1 \gets S_1 + e$
                    \ELSE
                        \STATE $S_2 \gets S_2 + e$
                    \ENDIF
                \ENDFOR
                \STATE Define matroids $\cI^C$, $\cI_1,\cI_2$ as in \Cref{eq:matroid_S,eq:color_matroid_upper}
                \STATE Run two copies of $\cA$, one for matroids $\cI^C, \cI_1$ and one for matroids $\cI^C, \cI_2$, and let $S'_1$ and $S'_2$ be the two outputs
                \STATE  Run two copies of $\cA$, one for matroids $\cI^C_1, \cI_0$ and objective $f_1$, and one for matroids $\cI^C_2, \cI_0$ and objective $f_2$, and let $S''_1$ and $S''_2$ be the two outputs
                \STATE \textbf{Return} $S' = \arg \max (f(S'_1 \cup S''_1), f(S'_2 \cup S''_2))$
            \end{algorithmic}
        \end{algorithm}
    We propose an alternative way to augment sets $S_1', S_2'$ at the end of \secondpass (\cref{alg:two-pass},  lines \ref{l:postprocessing-start}-\ref{l:postprocessing-end}).
    Instead of adding arbitrary elements from $S_1, S_2$, we again run $\cA$ with objective $f_i(S) = f(S \cup S_i')$ (which is still monotone submodular), matroid $\cI^C_i = \{ X \subseteq S_i \mid X \cup S_i' \in \cI^C \}$, and a second dummy matroid $\cI_0 = \{ X \subseteq S_i \mid |X| \leq |S_i|\}$, for $i=1, 2$. If $\cA$ outputs a base $S''_i \in \cI^C_i$ (which can be easily enforced), then the output set $S' = \arg \max (f(S'_1 \cup S''_1), f(S'_2 \cup S''_2))$ would still satisfy \cref{lem:twopass-feasibility} and \ref{lem:twopass-approximation}. We refer to this algorithm as \bettersecondpass and it is presented in \cref{alg:BetterTwo-pass}.

    \subsection{One-pass heuristic algorithm}\label{sec:onepassHeuristic}
    
        When $f$ is not modular, we can employ a greedy heuristic to augment the set returned by \GreedyFirstpass to obtain a simple one-pass heuristic, \Greedy, for FMMSM.
    
        \begin{algorithm}
            \caption{\Greedy \label{alg:greedy}}
            \begin{algorithmic}[1]
                \STATE Run \GreedyFirstpass, let $S$ be its output and $I_c$ the set collected for each color $c=1, \cdots, C$
                \FOR{$e \in  \cup_{c \in 1\ldots C} I_c$ in order of decreasing $f(e \mid S)$}
                    \IF{$S + e \in\cI$ and $S + e \in \cI^C$}
                        \STATE $S \gets S + e$
                    \ENDIF
                \ENDFOR
                \STATE \textbf{Return} set $S$  
            \end{algorithmic}
        \end{algorithm}

        Since \Greedy only adds elements to the feasible set $S$ as long as they do not violate the matroid and fairness upper bounds, \cref{thm:firstpass} still holds; \Greedy is guaranteed to output a feasible solution in one pass, using $O(k \cdot C)$ memory. 
        It also performs quite well in terms of objective values in our empirical evaluation (\cref{sec:exp}), though in general it does not provide any worst-case guarantee on the objective value.
    
        On one hand, this is due to the fact that the output $S$ of \GreedyFirstpass is an arbitrary maximum-cardinality set in $\cI \cap \cI_C$ (line \ref{l:feasible-2} in \cref{alg:Greedyfirstpass}); thus it may pick only zero-value elements from $\cup_c I_c$.
        On the other hand, \emph{any} algorithm that collects high-value independent sets $I_c$ in each color without considering the objective-value interactions between these sets,
        and then returns some subset of $\cup_c I_c$,
        is doomed to obtain $O(1/C)$-approximation.
        To see this, consider an instance where each color has two elements: $V_c = \{e_c, e'_c\}$,
        with matching lower and upper bounds $\ell_c = u_c = 1$,
        a matroid $\cI$ that encodes the same constraints as the color upper bounds (i.e., $\cI = \cI^C$),
        and a monotone submodular function $f$ that assigns a value $1$ to every element $e_c$ and
        a value $1.01$ to every element $e'_c$, but in such a way that the $1.01$ value is shared between all elements $e'_c$;
        more formally,
        $f(S) = |S \cap \{ e_c : c=1,...,C\}| + 1.01 \cdot \min(1, |S \cap \{ e'_c : c=1,...,C\}|)$.
        The optimal solution, of value $C$, is to pick all elements $e_c$, but
        \Greedy will pick $I_c = \{e'_c\}$.
        Then there is no subset of $\cup_c I_c$ with value higher than $1.01$,
        which gives a multiplicative gap of $O(1/C)$.

\section{Streaming Modular Case}\label{sec:streamingMod}
    In this section, we present a one-pass algorithm, \GreedyMod, for the fair matroid modular maximization  (F3M) problem in the streaming setting.
    In what follows, we assume that $f$ is modular, but not necessarily monotone.
    \GreedyMod collects maximal independent sets $I_c$ for each color $c$ in the same way as in \GreedyFirstpass, but then it returns an optimal feasible solution in $\cup_{c} I_c$.
    
    \begin{algorithm}
        \caption{\GreedyMod \label{alg:greedyMod}}
        \begin{algorithmic}[1]
            \STATE $I_c \leftarrow \emptyset$ for all $c = 1,...,C$
            \FOR{the next element $e$ on the stream}
                \STATE Let $c$ be the color of $e$
                \IF{$I_c + e \in \cI $}
                    \STATE $I_c \leftarrow I_c + e$
                \ELSE
                    \FOR{$e'\in I_c$ in order of increasing $f(e')$}\label{line:order_m}
                        \IF{$I_c+e-e'\in\cI$ and $f(I_c+e-e')\ge f(I_c)$}\label{line:swap_m}
                            \STATE $I_c\gets I_c+e-e'$
                            \STATE \textbf{break}
                        \ENDIF
                    \ENDFOR
                \ENDIF
            \ENDFOR
            \STATE \label{l:optlinear} Let $S \subseteq \cup_{c \in 1\ldots C} I_c$ be any set in $\cF$ which maximizes $f(S)$.
            \STATE \textbf{Return} set $S$  
        \end{algorithmic}
    \end{algorithm}
    
    We start by recalling the following notions for matroids: A \emph{circuit} of a matroid $\cI$ is a dependent set that is minimal with respect to inclusion. We say that an element $u \in V$ is \emph{spanned} by a set $S \in \cI$ if the maximum size independent subsets of $S$ and $S + u$ are of the same size.  It follows from these definitions that every element $u$ of a circuit $S$ is spanned by $S - u$.
    
    Before proving the result, we need the following lemma which relates independence in matroids with reachability on graphs. Given a directed graph $G = (V,E)$, we denote with $\delta^+(u)$ the out-neighborhood of a vertex $u \in V$ , i.e. $\delta^+(u) = \{v \in V \mid (u,v) \in E\}$.
        \begin{lemma}[Lemma 13 of \citep{FeldmanK018}]
        \label{lem:graph}
            Consider an arbitrary directed acyclic graph $G = (V, E)$ whose vertices are elements of some matroid $\cI$. If every non-sink vertex $u$ of $G$ is spanned by $\delta^+(u)$ in $\cI$, then for every set $S$ of vertices of $G$ which is independent in $\cI$ there must exist an injective function $\psi$ such that, for every vertex $u \in S$, $\psi(u)$ is a sink of $G$ which is reachable from $u$.
        \end{lemma}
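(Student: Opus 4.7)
The plan is to prove this as a Hall's theorem argument on an auxiliary bipartite graph. Define $H$ to be the bipartite graph with left part $S$ and right part $T$ equal to the set of sinks of $G$, where $u \in S$ is connected to $t \in T$ if and only if there is a directed path from $u$ to $t$ in $G$. A perfect matching of $S$ into $T$ in $H$ is exactly an injective function $\psi$ of the type required by the statement, so it suffices to verify Hall's condition: for every $U \subseteq S$, the neighborhood $N_H(U)$ (the set of sinks reachable in $G$ from some vertex in $U$) has size at least $|U|$.

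The key intermediate step is a matroid-rank claim: for any vertex set $W \subseteq V$ that is closed under taking out-neighbors in $G$ (i.e. $u \in W$ implies $\delta^+(u) \subseteq W$), the sinks of $G$ contained in $W$ span $W$ in the matroid $\cI$. I would prove this by induction on the reverse topological order of $G$ restricted to $W$: a sink $u \in W$ trivially lies in the span of the sinks of $W$; for a non-sink $u \in W$, the hypothesis of the lemma gives that $u$ lies in the span of $\delta^+(u) \subseteq W$, and each element of $\delta^+(u)$ lies later in the reverse topological order and is therefore in the span of sinks in $W$ by induction; by transitivity of matroid span, so is $u$.

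To finish, fix $U \subseteq S$ and let $W = R(U)$ be the set of vertices reachable in $G$ from $U$ (including $U$ itself). Then $W$ is out-neighbor-closed, so by the claim above its rank is bounded by the number of sinks it contains, which is exactly $|N_H(U)|$. Since $U \subseteq W \cap S$ and $S$ is independent in $\cI$, the set $U$ is also independent, giving $|U| = \mathrm{rank}(U) \le \mathrm{rank}(W) \le |N_H(U)|$. Hall's condition is therefore satisfied, a matching saturating $S$ exists in $H$, and it furnishes the desired injection $\psi$.

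I expect the main technical obstacle to be the rank/closure claim. The induction in reverse topological order works cleanly only once one recognizes that $R(U)$ is out-neighbor-closed in $G$, so that the spanning hypothesis on non-sinks of $G$ can be applied \emph{inside} $R(U)$ without ever leaving it; without this closure property, the sinks reachable from $U$ need not span $R(U)$. Once this is set up, the rest is a routine application of transitivity of matroid span and of Hall's theorem.
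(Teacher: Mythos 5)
The paper does not actually prove this lemma: it is imported verbatim as Lemma~13 of \citet{FeldmanK018}, with no argument given in the text, so there is no in-paper proof to compare against. Judged on its own, your proof is correct and self-contained. The key closure claim --- that for any out-neighbor-closed $W$ the sinks in $W$ span $W$, proved by induction from the sinks backwards using idempotence/transitivity of the matroid closure operator --- is exactly the right invariant, and it immediately gives $|U| = r(U) \le r(R(U)) \le |N_H(U)|$ for independent $U$, so Hall's condition holds and the matching furnishes $\psi$. Two small points to tidy up: (i) the phrase ``each element of $\delta^+(u)$ lies later in the reverse topological order'' should read \emph{earlier} (out-neighbors are processed before $u$ when sinks come first), though the induction is clearly sound as intended; and (ii) you should state explicitly that a sink is considered reachable from itself (path of length zero), so that a vertex of $S$ that is itself a sink gets an $H$-neighbor --- this matches the intended reading of the lemma and is needed for Hall's condition when $U$ contains sinks.
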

    
    Using \Cref{lem:graph}, we can make the following key observation about \GreedyMod.
    
    \begin{lemma}\label{fact:greedy}
        For each color $c$, the independent set $I_c$ output by \GreedyMod maximizes $f$ in $V_c$ over the matroid constraint $\cI$. Moreover, if any element $x\in V_c$ is not in $I_c$, then there must exist a subset $I_c'$ of $I_c$ such that
    \begin{itemize}
        \item $I_c'+x\not\in\cI$,
        \item for all $y\in I_c'$, $f(y)\ge f(x)$.
    \end{itemize}
    \end{lemma}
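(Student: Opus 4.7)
The plan is to mimic the standard analysis of greedy matroid algorithms based on \cref{lem:graph}. I will construct a directed acyclic graph $G$ whose vertex set is all elements of $V_c$ seen on the stream, arranged so that its sinks are exactly the elements of the final $I_c$, every non-sink vertex is spanned by its out-neighborhood, and every directed edge $x \to y$ satisfies $f(y) \ge f(x)$. Once this graph is in place, both parts of the lemma drop out: Part~1 from a single application of \cref{lem:graph} to any independent subset of $V_c$, and Part~2 by taking $I_c'$ to be the set of sinks reachable from $x$ in $G$.

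To build $G$, I iterate through the stream and, for each element $x$ that is either rejected upon arrival or later swapped out of $I_c$, consider the unique circuit $C_x \subseteq I_c^{(t^-)} + z$, where $t$ is the time of $x$'s removal, $I_c^{(t^-)}$ is the state of $I_c$ just before $t$, and $z$ is the arriving element causing the event (with $z = x$ in the rejection case). Since $x \in C_x$, setting $\delta^+(x) := C_x \setminus \{x\}$ makes $x$ spanned by its out-neighborhood by the circuit property; moreover $\delta^+(x) \subseteq I_c^{(t)}$. Acyclicity follows by ordering vertices according to the time they leave $I_c$ (with elements of the final $I_c$ getting time $+\infty$): every edge points strictly forward in this order, and the sinks coincide with the final $I_c$.

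The delicate step is establishing $f$-monotonicity along edges. In the rejection case, the algorithm refused to swap, so no $e' \in I_c^{(t^-)}$ with $I_c^{(t^-)} + x - e' \in \cI$ satisfies $f(e') \le f(x)$; by the unique-circuit characterization, the set of such $e'$ is exactly $C_x \setminus \{x\}$, so $f(y) > f(x)$ for every $y \in \delta^+(x)$. In the swap case, the algorithm scans $I_c^{(t^-)}$ in increasing $f$-order and swaps at $x$ as the first $e'$ for which both $I_c^{(t^-)} + z - e' \in \cI$ and $f(z) \ge f(e')$ hold; since $\{e' : I_c^{(t^-)} + z - e' \in \cI\} = C_x \setminus \{z\}$, this forces $x$ to be the minimum-$f$ element of $C_x \setminus \{z\}$, and together with $f(z) \ge f(x)$ it yields $f(y) \ge f(x)$ for every $y \in C_x \setminus \{x\}$. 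Given this property, Part~1 is immediate: applying \cref{lem:graph} to an arbitrary $\cI$-independent $S \subseteq V_c$ yields an injection $\psi : S \to I_c$ with $\psi(u)$ reachable from $u$, whence $f(\psi(u)) \ge f(u)$ along the path and $f(I_c) \ge \sum_{u \in S} f(\psi(u)) \ge f(S)$. For Part~2, I set $I_c' := R(x)$ where $R(x)$ is the set of sinks reachable from $x$ in $G$; an induction on depth, using the idempotence and monotonicity of the matroid closure operator to substitute each non-sink $y$ in a spanning set by $\delta^+(y)$, shows that $x$ lies in the closure of $R(x)$, so $I_c' + x \notin \cI$, while the $f$-inequalities follow from $f$-monotonicity along each path from $x$.

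I expect the main obstacle to be the $f$-monotonicity step: one must identify the unique circuit correctly, verify that the algorithm's interleaved iteration (which silently skips every $e'$ outside $C_x$) indeed picks out the minimum-$f$ element of $C_x \setminus \{z\}$, and carefully handle the boundary between the $\ge$ used in the algorithm's acceptance rule and the strict inequality that appears in the rejection analysis. The remaining ingredients, namely acyclicity, the characterization of sinks, and the closure-based induction for Part~2, are routine once the invariant is phrased correctly.
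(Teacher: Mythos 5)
Your proof is correct and follows essentially the same route as the paper's: the same swap DAG with circuit-based out-neighborhoods, the same $f$-monotonicity argument along edges (including the observation that a swap removes the minimum-$f$ element of the circuit), and the same application of \cref{lem:graph} for the optimality claim. The only divergence is in Part 2, where the paper takes $I_c' = \{y \in I_c \mid I_c + x - y \in \cI\}$ and derives $f(y)\ge f(x)$ from the already-established optimality of $I_c$ together with a circuit argument for dependence, whereas you take $I_c'$ to be the set of sinks reachable from $x$ and run a closure induction; both choices are valid.
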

    \begin{proof}
    For each color $c$,
    let $\OPT_c$ be any independent set in $V_c$ which maximizes $f$ in $V_c$, we show that $f(I_c) \ge f(\OPT_c)$, thus proving the optimality of $I_c$. 
    Consider the following directed graph $G_c = (V_c, E_c)$, whose nodes are the elements in $V_c$. For $t=1, \cdots, |V_c|$, let $x^t$ be the $t$-th element of color $c$ to arrive on the stream, $I_c^t$ the set $I_c$ at time $t$, $Y^t$ the set of elements in $Y^t$ which can be swapped with $x^t$, i.e.,  $Y^t:=\{y \in I^t_c  \mid I^t_c + x^t - y \in \cI\}$. If $x^t$ was swapped with $y^t \in Y^t$, then we add directed edges from $y^t$ to all the elements in $Y^t - y^t + x^t$. If $x^t$ was not added, then we add directed edges from $x^t$ to each element in $Y^t$. If $x^t$ was added without any swap, then its out-neighborhood is empty. 
    Note that every edge in the graph $G_c$ points from a vertex dropped or swapped out at some time to a vertex that is either never deleted or removed at a later time. This time component makes the underlying graph a DAG. Note also that because of the design of the algorithm (lines \ref{line:order_m}-\ref{line:swap_m}), the value of $f$ is non-increasing along every edge $(u, v) \in E_c$, i.e., $f(u) \leq f(y)$.
    
    We want to apply \Cref{lem:graph} on the graph $G_c$. To that end, we argue that for any $t$ where $x^t$ is not a sink, and thus $I_c^t + x^t \not \in \cI$, the set $Y^t + x^t$ is a circuit in $I_c^t + x^t$. First, we show that any proper subset of $Y^t + x^t$ is independent. For any $y \in I_c^t + x^t$, we have that $Y^t + x^t - y \subseteq I_c^t + x^t - y \in \cI$ by the definition of $Y^t$ and since $I_c^t \in \cI$. Hence, $Y^t + x^t - y \in \cI$. Next, we show that $Y^t + x^t$ is a dependent set.
     To see this, assume towards contradiction that this is not the case, i.e. that $Y^t + x^t \in \cI$; then we could repeatedly apply the augmentation property and add to $Y^t$ some elements $\{y_1, y_2, \dots, y_{j}\} \subseteq I_c \setminus Y^t$ until $|Y^t + x^t + y_1 + \dots + y_j| = |I_c|$, while maintaining independence. We get a contradiction: the remaining element $z \in I_c \setminus (Y^t + x^t + y_1 + \dots + y_j)$ satisfies $Y^t + x^t + y_1 + \dots + y_j = I_c + x - z \in \cI$, while on the other hand $z \notin Y^t$, which implies $I_c + x - z \notin \cI$ by definition of $Y^t$. It follows then that for ever non-sink vertex $u$ of $G_c$, its out-neighborhood $\delta^+(u) = Y^t + x^t$ is a circuit, hence $u$ is spanned by $\delta^+(u)$ in $\cI$.
    
     By \Cref{lem:graph}, there exists an injective function $\psi$ which associates each element $u$ in $\OPT_c$ to an element $\psi(u)$ in $I_c$, which is reachable from $u$. As discussed earlier, the value of $f$ is non-increasing along every edge in the graph, and in particular along each $u$-$\psi(u)$ path. Hence $f(u) \le f(\psi(u))$ for all $u \in \OPT_c$, and $f(I_c) \ge f(\OPT_c)$. 
    
    Next, we prove the remaining statement of the lemma. For any $x \in V_c \setminus I_c$,  we define $I_c':= \{y \in I_c \mid I_c + x - y \in \cI\}$. We show that $I_c'$ satisfies the two required properties. First, it is clear that for all $y \in I'_c$ it holds that $f(y) \ge f(x)$, because otherwise the independent set $I_c + x - y$ would have value strictly larger than $I_c$, violating the optimality of $I_c$. Second, using a similar argument as above, we can show that $I_c' + x$ is a circuit, and hence $I_c' + x \notin \cI$.
    \end{proof}
    
    We are now ready to prove the optimality of \GreedyMod.
    \StreamingModular*
    
    \begin{proof}
    For every color $c$, the set $I_c$ collected by \GreedyMod is a maximal independent set in $V_c$; therefore, by \cref{lem:existence} there always exists a feasible set in $\bigcup_cI_c$. We need to prove that when $f$ is modular, $\bigcup_cI_c$ also contains an optimal feasible set. The proof proceeds similarly to that of \cref{lem:existence}. Let $R\in \cF$ be the \emph{optimal} feasible set such that $|R \setminus \bigcup_c I_c|$ is minimal. We will prove that $|R \setminus \bigcup_c I_c|$ is actually 0.
    Assume towards contradiction $|R \setminus \bigcup_c I_c| > 0$. We will show how to exchange an element $x\in R\setminus\cup_cI_c$ for an element $y \in \bigcup_cI_c \setminus R$.  Without loss of generality, assume that $(R \cap V_1) \setminus I_1 \neq \emptyset$, and let $x$ be any of its elements. It is enough to show that there exists an element $y\in I_1 \setminus R$ such that $R-x+y\in\cI$ and $f(R-x+y)\ge f(R)$.
    
    Let $I_1'$ be the set guaranteed by \cref{fact:greedy}. Further let $I_1''$ be a maximal set with $I_1' \subseteq I_1'' \subseteq I_1' \cup R$
    that is independent.
    By maximality of $I_1''$, and since $R, I_1'' \in \cI$ we have
    $|R| \le |I_1''|$ and $|R - x| < |I_1''|$.
    By the matroid augmentation property, there is $y \in I_1'' \setminus (R-x)$ such that $R-x+y \in \cI$.
    Because \[ I_1'' \setminus (R-x) \subseteq (I_1' \cup R) \setminus (R-x) \subseteq I_1' + x , \]
    we must have $y \in I_1' \setminus R$ or $y = x$. The latter is impossible, since this would imply that $x\in I_1''$; however, this is impossible because $I_1'+x$ is not independent by \cref{fact:greedy}. So we have found an element $y\in I_1' \setminus R$ such that $R-x+y\in\cI$ and $f(R-x+y)\ge f(R)$ (by \cref{fact:greedy}). This contradicts the original assumption, and concludes the proof that the output of  \GreedyMod is optimal.
    
    Finally, in the following section, we show that {F3M} can be solved in polynomial time in the offline setting. Hence, line \ref{l:optlinear} in \GreedyMod can be done in polynomial time when $f$ is modular, and hence \GreedyMod runs in polynomial time in this case.
    \end{proof}

\section{Centralized Modular Case}
\label{sec:centralized_modular}
    In this section, we present two polynomial-time algorithms for {F3M}, in the \emph{centralized} setting. One is based on linear programming  
    (Section \ref{sec:LP}) and the other reduces the problem 
    to modular maximization over two matroid constraints (Section \ref{sec:reduction}). Both do not assume monotonicity. 
    
    \subsection{Linear programming algorithm}\label{sec:LP}
    
    Given a modular function $f$, we show that the F3M problem 
    \begin{equation}\label{eq:IP}
     \max_{S \subseteq V} \left\{ f(S) = \sum_{e \in S} f(e) : S \in \cF \right\}
    \end{equation}
can be solved in polynomial time. Let $\1_S \in \R^n$ denote the vector whose $i$-th entry is $1$ if $i \in S$ and $0$ otherwise.
    We consider the exact linear program relaxation of \eqref{eq:IP} given by
    \begin{equation}\label{eq:LP} 
    \max_{x \in [0,1]^n} \left\{ \sum_{e \in V}  x_e f(e) : x \in \conv(\{ \1_S : S \in \cF\} ) \right\}.
    \end{equation} 
    Solving Problem \eqref{eq:LP} is equivalent to solving Problem \eqref{eq:IP}. 
   
  Let $\cI_F$ denote the family of fair sets, i.e.,
    \[
    \cI_F = \{ S \subseteq V : \ell_c \leq |S \cap V_c| \leq  u_c \; \forall c = 1,...,C \} \,. 
    \] 
    Recall that $\cF = \cI \cap \cI_F$. Let $P$ be the matroid polytope of $\cI$ defined as $P_M = \{x \in \R^n_+ : x(A) \leq r(A), \forall A \subseteq V\}$, where $x(A) = \sum_{e \in A} x_e$ , and $r$ is the rank function of $\cI$. The matroid polytope $P$ corresponds to the convex-hull of indicator vectors of independent sets, i.e., $P = \conv(\{ \1_S : S \in \cI\})$.
    
    The following lemma provides the convex-hull of indicator vectors of fair sets.
    \begin{lemma}
    Let $$P_F = \{x \in [0,1]^n : x(V_c) \in [\ell_c, u_c], \forall c= 1,...,C\},$$ then $P_F = \conv(\{ \1_S : S \in \cI_F\})$.
    \end{lemma}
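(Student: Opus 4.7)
The plan is to prove integrality of the polytope $P_F$ directly, using the fact that the color classes $V_1,\dots,V_C$ partition $V$.

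First, I would exploit the partition structure to reduce to a per-color statement. Since the $V_c$ are pairwise disjoint, the polytope $P_F$ factors as a Cartesian product
\[
P_F \;=\; \prod_{c=1}^{C} P_F^{c}, \qquad P_F^{c} \;=\; \{x \in [0,1]^{V_c} : \ell_c \le x(V_c) \le u_c\},
\]
and similarly $\cI_F$ factors as a product of the per-color families $\cI_F^{c} = \{T \subseteq V_c : \ell_c \le |T| \le u_c\}$. Since the convex hull of a Cartesian product of finite sets equals the product of the convex hulls, it suffices to show that $P_F^{c} = \conv(\{\1_T : T \in \cI_F^{c}\})$ for each fixed $c$.

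Second, I would establish the per-color identity. The inclusion $\conv(\{\1_T : T \in \cI_F^{c}\}) \subseteq P_F^{c}$ is immediate, since each $\1_T$ with $T \in \cI_F^{c}$ lies in $P_F^{c}$ and $P_F^{c}$ is convex. For the reverse inclusion, since $P_F^{c}$ is a bounded polyhedron it equals the convex hull of its vertices, so it is enough to show every vertex of $P_F^{c}$ is a 0/1 vector. A vertex is determined by $|V_c|$ linearly independent tight constraints drawn from $\{x_i \ge 0, \; x_i \le 1 : i \in V_c\} \cup \{x(V_c) \ge \ell_c, \; x(V_c) \le u_c\}$. The two sum constraints share the same normal vector $\1_{V_c}$, so any set of linearly independent tight constraints contains at most one of them. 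Hence at least $|V_c|-1$ of the box constraints are tight, fixing all but at most one coordinate to $\{0,1\}$. If all $|V_c|$ tight constraints are box constraints, the vertex is integral. Otherwise the remaining free coordinate $x_i$ is forced by a tight sum constraint to take the value $\ell_c - \sum_{j \ne i} x_j$ or $u_c - \sum_{j \ne i} x_j$, both integers since $\ell_c, u_c \in \mathbb{N}$ and the other $x_j \in \{0,1\}$. Thus every vertex lies in $\{0,1\}^{V_c} \cap P_F^{c} = \{\1_T : T \in \cI_F^{c}\}$, and the reverse inclusion follows.

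Taking Cartesian products across colors yields $P_F = \conv(\{\1_S : S \in \cI_F\})$, as desired. The only subtle point, which I would flag explicitly in the write-up, is the linear dependence argument for the two sum constraints: even when $\ell_c = u_c$ they degenerate to the same hyperplane, which only strengthens the bound of "at most one independent sum constraint in the tight set," so no separate case analysis is needed.
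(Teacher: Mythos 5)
Your proof is correct, but it takes a genuinely different route from the paper's. The paper proves integrality of $P_F$ via the linear-programming characterization: for an arbitrary objective $\theta \in \R^n$ it exhibits an explicit integral optimizer of $\max_{x \in P_F} \theta^\top x$ (greedily taking the top $\ell_c$ coefficients in each color class and then filling up with positive coefficients up to $u_c$), and then invokes the standard fact that a polytope all of whose linear optima are attained at integral points is integral. You instead argue directly on the vertices: you first factor $P_F$ as a Cartesian product over the (disjoint) color classes, reducing to a single cardinality-interval polytope per color, and then show each vertex of that polytope is $0/1$ by counting linearly independent tight constraints --- the two sum constraints have parallel normals, so at least $|V_c|-1$ box constraints are tight and the one remaining coordinate is forced to an integer. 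Both arguments are sound; yours is more self-contained (it needs no external integrality criterion, only the fact that a bounded polyhedron is the convex hull of its vertices and that $\conv$ commutes with Cartesian products of finite sets), while the paper's explicit-optimizer construction avoids the product decomposition and the basis-counting bookkeeping. The only detail worth adding to your write-up is the degenerate case where some $P_F^{c}$ is empty (e.g.\ $\ell_c > |V_c|$), in which both sides of the per-color identity are empty and the claim holds vacuously.
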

    \begin{proof}
    Since $\{ \1_S : S \in \cI_F\} \subseteq P_F$, then $\conv(\{ \1_S : S \in \cI_F\}) \subseteq P_F$. To prove the other direction, we show that for any $\theta \in \R^n$, the linear program $\max_{x \in P_F} \theta^\top x$ is integral, hence $P_F$ is integral  \citep[Proposition 1.3 in Part III.1, Section 1]{Nemhauser1999}.\\
    Let $V_+$ be the set of indices $i \in V$ where $\theta_i > 0$.
    For each color $c$, let $J_c$ be the set of indices corresponding to the largest $\ell_c$ coefficients $\theta_i$ for $i \in V_c$, and $\bar{J}^+_c$ be the set of indices corresponding to the largest $\min\{u_c - \ell_c, |(V_c \setminus J_c) \cap V_+|\}$ coefficients $\theta_i$ for $i \in V_c \setminus J_c \cap V_+$. Then it is easy to see that the integral vector $x^* = \bigcup_c \1_{J_c} \cup \1_{\bar{J}^+_c}$ is an optimal solution of $\max_{x \in P_F} \theta^\top x$.  Hence, $P_F \subseteq \conv(\{ \1_S : S \in \cI_F\} )$, which concludes the proof.
    \end{proof}

    We show that the convex hull of feasible sets' indicator vectors $\conv(\{ \1_S : S \in \cF\}$ corresponds to the intersection of $P$ and $P_F$. This result generalizes the one given in \citet[Theorems 35 and 45]{edmonds1970} for the intersection of two matroids. Our proof follows a similar structure to the proof given in \citet[Section 41.4]{Schrijver03} of this result. 
    
    We start by showing that the linear system corresponding to $P_F \cap P$ is totally dual integral (TDI), and hence $P_F \cap P$ is integral. We recall first the definitions of TDI and box-TDI.
    
    \begin{definition}[Sections 5.17 and 5.20 in \citep{Schrijver03}]
    A system $M x \leq b$ is called totally dual integral (TDI) if $M$ and $b$ are rational, and the dual of $\max \{ c^\top x : M x \leq b \}$ has an integer optimal solution (if finite), for each $c \in \mathbb{Z}^n_+$.
    Furthermore, a system $M x \leq b$ is called box-totally dual integral (box-TDI) if the system  $M x \leq b, d_1 \leq x \leq  d_2$ is TDI for each $d_1, d_2 \in \mathbb{Z}^n_+$.
    \end{definition}
    
    \begin{theorem}\label{them:box-TDI}
    The linear system $\{\mathbf{0} \leq x \leq \1, x(A) \leq r(A), \forall A \subseteq V, -x(V_c) \leq -\ell_c, x(V_c) \leq u_c, \forall c= 1,...,C\}$ is TDI. Hence, $P_F \cap P$ is integral.
    \end{theorem}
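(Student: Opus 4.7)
The plan is to prove TDI of the given linear system by an uncrossing argument in the style of the classical proof for the matroid intersection polytope \citep[Section 41.4]{Schrijver03}; integrality of $P_F \cap P$ then follows from the standard fact that a TDI system with integral right-hand side defines an integral polyhedron.

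I fix an arbitrary integer objective $c \in \bbZ^n$ and write the dual of $\max\{c^\top x : \text{the system holds}\}$. The dual variables are $\lambda_A \ge 0$ for every $A \subseteq V$ (matroid rank constraints), $\nu_c, \mu_c \ge 0$ for each color $c$ (upper and lower fairness bounds), and $\alpha_i, \beta_i \ge 0$ for the box constraints $0 \le x_i \le 1$. I then select an optimal dual solution that lexicographically minimizes $\sum_A \lambda_A \cdot |A| \cdot |V \setminus A|$. The standard uncrossing step shows that whenever two crossing sets $A,B$ both lie in the support of $\lambda$, transferring mass $\varepsilon > 0$ from $(\lambda_A, \lambda_B)$ to $(\lambda_{A \cap B}, \lambda_{A \cup B})$ preserves every primal-variable coefficient (so dual feasibility is maintained), weakly improves the dual objective by submodularity of the rank function $r$, and strictly decreases the chosen potential. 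Crucially, this operation does not modify the fairness or box multipliers, so it carries over verbatim from the pure matroid setting. Iterating, I may assume the support $\mathcal{L}$ of $\lambda$ is laminar; since the color classes $V_c$ are pairwise disjoint, $\mathcal{L} \cup \{V_1,\dots,V_C\}$ can be refined into a laminar family (any residual crossing between a $V_c$ and some $A \in \mathcal{L}$ is again resolvable by a submodularity-based uncrossing on $r$).

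The final step is to show that the reduced dual has an integer optimum. Restrict the dual to this laminar support together with the box rows; the fairness lower-bound rows correspond to $-\1_{V_c}$, i.e., the negation of a laminar row. The incidence matrix of a laminar family is totally unimodular, negating any row preserves TU, and augmenting by signed unit rows (for the box constraints) also preserves TU. Hence the reduced dual LP has an integer optimal vertex, and because $c$ is integral, this yields an integer optimal dual solution to the original LP, establishing TDI. The main obstacle I anticipate is verifying that the sign pattern introduced by the lower-bound rows does not create hidden crossings or spoil total unimodularity when combined with the upper-bound rows on the same $V_c$ and the laminar support on $V$; I would handle this by checking explicitly that adjoining both $\1_{V_c}$ and $-\1_{V_c}$ to the TU incidence matrix of a laminar family preserves TU (using that TU is preserved under multiplying rows by $\pm 1$ and under adjoining signed unit rows), and by ensuring that the dual objective contributions of $\mu_c$ and $\nu_c$ remain untouched throughout the uncrossing on the $\lambda_A$'s.
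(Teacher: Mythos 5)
Your overall strategy --- uncross the dual variables on the matroid rank constraints, then argue total unimodularity of the resulting active submatrix to extract an integral dual optimum --- is the same as the paper's. The paper packages it slightly differently: it first proves box-TDI of the system \emph{without} the $0 \le x \le \1$ rows (uncrossing the $\lambda_A$'s into a chain and invoking Schrijver's Theorem 5.35), and then gets TDI of the full system for free from the definition of box-TDI, whereas you carry the box-constraint multipliers through the dual explicitly. That difference is cosmetic.

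There is, however, one step in your argument that would fail as written. You claim that ``any residual crossing between a $V_c$ and some $A \in \mathcal{L}$ is again resolvable by a submodularity-based uncrossing on $r$.'' It is not: the row $x(V_c) \le u_c$ is not a rank constraint, and the system contains no constraints indexed by $V_c \cap A$ or $V_c \cup A$ to which you could transfer the dual mass of $\nu_c$ and $\lambda_A$, nor does $r$ give you the needed inequality between $u_c + r(A)$ and right-hand sides of the would-be uncrossed sets. So you cannot make $\mathcal{L} \cup \{V_1,\dots,V_C\}$ into a single laminar family, and your appeal to ``the incidence matrix of a laminar family is TU'' does not apply to the combined matrix. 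The correct (and easy) repair is the one the paper uses: $\mathcal{L}$ is laminar and $\{V_1,\dots,V_C\}$ is laminar (the colors are disjoint), and the incidence matrix of the \emph{union of two laminar families} is TU \citep[Theorem 41.11]{Schrijver03}; negating the $\1_{V_c}$ rows for the lower bounds and adjoining signed unit rows for the box constraints preserves TU, exactly as you note in your final paragraph. With that substitution your proof goes through and coincides with the paper's. (Minor note: your uncrossing only needs to produce a laminar support for $\lambda$, which suffices; the paper pushes on to a chain, but that extra strength is not used.)
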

    \begin{proof}
    We first show that the linear system $\{x(A) \leq r(A), \forall A \subseteq V, -x(V_c) \leq -\ell_c, x(V_c) \leq u_c, \forall c= 1,...,C\}$ is box-TDI.
    We can write the linear system as $M x \leq b$. Given any $\theta \in \R^n$, the dual of $\max_x \{\theta^\top x : M x \leq b\}$, is given by:
    \begin{align*}
    \min_{\lambda \geq 0, \alpha \geq 0, \beta \geq 0} \left\{  \sum_{A \subseteq V} \lambda_A r(A) + \sum_{c=1}^C (\alpha_c u_c - \beta_c \ell_c) : \sum_{A \subseteq V} \lambda_A \1_A + \sum_{c=1}^C (\alpha_c  - \beta_c) \1_{V_c} = \theta \right\}
    \end{align*}
    We argue that the dual has an optimal solution $\lambda^*, \alpha^*, \beta^*$ for which the collection of sets $\cC = \{ A \subseteq V: \lambda^*_A >0\}$ form a chain, i.e., if $A, B \in \cC$ then $A \subseteq B$ or $A \subseteq B$. Given any optimal dual solution, let $\delta = \min\{\lambda^*_A, \lambda^*_B\}$, then decrease $\lambda^*_A, \lambda^*_B$ by $\delta$, and increase $\lambda^*_{A \cup B}, \lambda^*_{A \cap B}$ by $\delta$. The modified solution is still feasible since $\ \1_A +  \1_B = \1_{A \cup B} + \1_{A \cap B}$, and it has an equal or lower cost since $r(A) + r(B) \geq r(A \cup B) + r(A \cap B)$. Applying this uncrossing operation for all pairs of sets in $\cC$, results in a chain.
    The submatrix of $M$ with rows corresponding to the constraints $x(A) \leq r(A), \forall A \in \cC$, and $x(V_c) \leq u_c, \forall c= 1,...,C$ is the incidence matrix of the union of two laminar families, hence it is totally unimodular (TU) \citep[Theorem 41.11]{Schrijver03}. Adding the rows corresponding to the constraints $-x(V_c) \leq -\ell_c, \forall c= 1,...,C$ preserves the TU property  \citep[Proposition 2.1 in Part III.1, Section 2]{Nemhauser1999}. It follows then by \citet[Theorem 5.35]{Schrijver03} that the linear system $M x \leq b$ is box-TDI.\\
    By definition of box-TDI, we then have that the linear system corresponding to  $P_F \cap P$ is TDI, which implies that $P_F \cap P$ is integral by \citep[Theorem 5.22]{Schrijver03}.
    \end{proof}

    \begin{corollary}
    We have $\conv(\{ \1_S : S \in \cF\} ) = P_F \cap P$. 
    \end{corollary}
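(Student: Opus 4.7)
The plan is to prove the two set inclusions separately, using the integrality result established in \Cref{them:box-TDI} as the main tool for the nontrivial direction.

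For the forward inclusion $\conv(\{ \1_S : S \in \cF\}) \subseteq P_F \cap P$, I would simply note that any $S \in \cF = \cI \cap \cI_F$ satisfies $\1_S \in P$ (since $P = \conv(\{\1_S : S \in \cI\})$ contains all indicators of independent sets) and $\1_S \in P_F$ (by the previous lemma). Hence $\{\1_S : S \in \cF\} \subseteq P_F \cap P$, and since $P_F \cap P$ is convex as an intersection of polytopes, taking the convex hull of the left side preserves the inclusion.

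For the reverse inclusion $P_F \cap P \subseteq \conv(\{\1_S : S \in \cF\})$, I would invoke \Cref{them:box-TDI}, which says that $P_F \cap P$ is an integral polytope. Since $P_F \cap P$ is also bounded (it is contained in $[0,1]^n$ via the $P_F$ constraints), it equals the convex hull of its vertices, all of which are $\{0,1\}$-vectors. Any such vertex $\1_S$ satisfies $\1_S \in P$, hence $S \in \cI$ (as the matroid polytope's integral points are exactly independent sets), and $\1_S \in P_F$, hence $S \in \cI_F$ (as the integral points of $P_F$ are exactly indicator vectors of fair sets). Therefore $S \in \cI \cap \cI_F = \cF$, so every vertex of $P_F \cap P$ lies in $\{\1_S : S \in \cF\}$, and we conclude $P_F \cap P \subseteq \conv(\{\1_S : S \in \cF\})$.

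There is essentially no obstacle here, as all the heavy lifting is done by \Cref{them:box-TDI}; the corollary is just a clean packaging of that integrality result together with the previously established descriptions of $P$ and $P_F$. The only minor care needed is to verify that integral points of the matroid polytope $P$ are exactly indicators of independent sets (a standard fact from matroid theory, following from $P = \conv(\{\1_S : S \in \cI\})$), and similarly for $P_F$ via the preceding lemma.
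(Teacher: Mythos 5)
Your proposal is correct and follows essentially the same route as the paper: the forward inclusion is the trivial observation that indicators of feasible sets lie in both $P$ and $P_F$, and the reverse inclusion uses the integrality of $P_F \cap P$ from \cref{them:box-TDI} to identify its vertices with indicators of sets in $\cF$. Your write-up is just a slightly more explicit version of the paper's argument.
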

    \begin{proof}
    We note that any integral vector in $P_F \cap P$ must also belong to $\{ \1_S : S \in \cF\} $. Since $P_F \cap P$ is integral (\cref{them:box-TDI}), all its vertices are integral.  Hence $P_F \cap P \subseteq \conv(\{ \1_S : S \in \cF\} )$, and since $P_F = \conv(\{ \1_S : S \in \cI_F\})$ and $P = \conv(\{ \1_S : S \in \cI\})$, we also have $\conv(\{ \1_S : S \in \cF\}) \subseteq P_F \cap P$.
    \end{proof}

    \begin{restatable}{theorem}{linearOffline} \label{thm:linearOffline}
    There is an exact polynomial-time algorithm for {F3M}.
    \end{restatable}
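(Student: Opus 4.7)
The plan is to argue that the problem can be solved by simply optimizing the linear relaxation \eqref{eq:LP} and reading off an integral optimum. By the corollary just established, the feasible region of \eqref{eq:LP} coincides with $\conv(\{\1_S : S \in \cF\})$, so any optimal vertex of \eqref{eq:LP} is the indicator vector of an optimal feasible set $S^\star \in \cF$, and conversely the LP optimum equals the optimum of \eqref{eq:IP}. Hence it suffices to exhibit a polynomial-time algorithm that computes an optimal vertex of $P_F \cap P$.

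I would solve \eqref{eq:LP} using the ellipsoid method combined with a polynomial-time separation oracle for $P_F \cap P$, and then round to a vertex using standard techniques. Separation decomposes into two independent oracles. For $P_F$, the defining system has only $2C + 2n$ explicit linear inequalities ($0 \leq x_e \leq 1$ together with $\ell_c \leq x(V_c) \leq u_c$ for each color $c$), so separation is done by checking each inequality in $O(n + C)$ time. For the matroid polytope $P$, given a candidate $x \in \R^n_+$, separation reduces to deciding whether $\max_{A \subseteq V} (x(A) - r(A)) \leq 0$; this is the minimization of the submodular function $g(A) = r(A) - x(A)$, solvable in strongly polynomial time using the standard matroid rank oracle. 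Together these provide a polynomial-time separation oracle for $P_F \cap P$, so by the equivalence of separation and optimization \citep{Schrijver03} we can compute an optimal extreme point of $P_F \cap P$ in polynomial time.

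By \Cref{them:box-TDI}, every vertex of $P_F \cap P$ is integral, and by the corollary every integral point of $P_F \cap P$ is the indicator vector of some $S \in \cF$. Therefore the vertex returned by the LP solver is $\1_{S^\star}$ for some $S^\star \in \cF$ with $f(S^\star) = \sum_{e \in S^\star} f(e)$ equal to the LP optimum, which in turn equals $\max_{S \in \cF} f(S)$. Returning $S^\star$ completes the algorithm.

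The only step requiring nontrivial care is the separation oracle for the matroid polytope and the invocation of the equivalence between separation and optimization; both are classical. The integrality of $P_F \cap P$, which is what makes the rounding step trivial, has already been settled in \Cref{them:box-TDI}, so no further polyhedral work is needed here.
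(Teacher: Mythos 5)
Your proposal is correct and follows essentially the same route as the paper's first proof of \cref{thm:linearOffline}: solve the exact LP relaxation \eqref{eq:LP} over $P_F \cap P$ via the ellipsoid method with separation oracles for $P_F$ (explicit inequalities) and for the matroid polytope $P$, then use the integrality established in \cref{them:box-TDI} to read off an optimal feasible set; you merely spell out the matroid-polytope separation step (via submodular minimization) in more detail than the paper does. Note that the paper also supplies a second, independent proof by reducing F3M to weighted matroid intersection (\cref{prop:reduction_smomib}), but your argument matches the LP-based one.
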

    \begin{proof}
    Since $\conv(\{ \1_S : S \in \cF\} )$ is integral, 
    we can solve problem \eqref{eq:IP} by solving its exact LP relaxation \eqref{eq:LP}.
    The latter can be solved in polynomial time using the ellipsoid method, since $\conv(\cI_M \cap \cI_F)$ admits a polynomial time separation oracle, which simply queries the separation oracles of $P_F$ and $P$.
    \end{proof}
    
    \subsection{Reduction to submodular (modular) maximization over matroid intersection bases}\label{sec:reduction}
    
    In this section we show that fair matroid submodular maximization \textbf{(FMSM)}
    reduces to a version of submodular maximization over an intersection of two matroids (with an extra ``full-rank'' constraint) when $f$ is monotone or modular. This will imply  another polynomial-time exact algorithm for {F3M}. 
    
    Let us define the \textbf{submodular maximization over matroid intersection bases (SMOMIB)} problem as follows:
    \begin{itemize}
    	\item input: two matroids $\cI_1$ and $\cI_2$ on the same ground set $V$, with equal ranks $k_1 = k_2$, and a submodular objective function $f : 2^V \to \mathbb{R}_+$,
    	\item output: a set $S \subseteq V$ that is independent and full-rank in both matroids: $S \in \cI_1 \cap \cI_2$, $|S| = k_1 = k_2$,
    	\item objective: maximize $f(S)$.
    \end{itemize}
    
    \begin{proposition} \label{prop:reduction_smomib}
    Let $\cA$ be an $\alpha$-approximation algorithm to SMOMIB for monotone objectives. Then there is an $\alpha$-approximation to FMMSM.  
    Similarly, if $\cA$ is an $\alpha$-approximation to SMOMIB for modular objectives, then there is an $\alpha$-approximation to FMSM for modular $f$.\footnote{\textbf{Erratum:} In the previous version of this paper, \cref{prop:reduction_smomib} was incorrectly stated for \emph{any} submodular function $f$, while the provided proof only holds for monotone or modular $f$. This revision does not impact any other result in the paper.}
    \end{proposition}
    \begin{proof}
    Let $V = \bigcup_c V_c$, $\cI$, $(\ell_c,u_c)_{c \in C}$, $f$ be an instance of FMSM.
    For every guess $x \in [\sum_c \ell_c, \sum_c u_c]$ we will try to find a good solution of size exactly $x$ using $\cA$.
    
    Let us first sketch the idea: we clone every element $v \in V$ into two elements $v_\ell$ and $v_u$ that are copies (in the sense of the matroids and the function), so that only one of the two can be in a solution, the intuition being that $v_\ell$ is used to satisfy the lower bound on $v$'s color and $v_u$ is used to take elements beyond the lower bound. We can enforce the necessary constraints using a second (laminar) matroid, which will be defined so that a solution of size $x$ must have all its bounds satisfied with equality. We also truncate the first matroid to cardinality $x$, so that the ranks are equal.
    
    Now we formalize the above.
    Fix $x$, and
    let $V' = \{v_\ell, v_u : v \in V\}$
    be the new universe.
    For a set $S' \subseteq V'$ denote its projection $\pi(S')$ to $V$ as
    \[ \pi(S') = \{ v \in V : v_\ell \in S' \text{ or } v_u \in S' \} . \]
    We will define two matroids $\cI_1$ and $\cI_2$ on $V'$.
    Let \[ \cI_1 = \{ S' \subseteq V' : \pi(S') \in \cI \text{ and } (\forall v \in V) \ \{v_\ell, v_u\} \not \subseteq S' \text{ and } |S'| \le x \} . \]
    It is easy to see that $\cI_1$ is a matroid.
    Next, we define $\cI_2$ to be the following laminar matroid:
    \begin{itemize}
    	\item for each color $c$, set $\{v_\ell : v \in V_c\}$ with bound $\ell_c$,
    	\item for each color $c$, set $\{v_u : v \in V_c\}$ with bound $u_c - \ell_c$,
    	\item the union of the latter sets, that is $\{v_u : v \in V\}$, with bound $x - \sum_c \ell_c$.
    \end{itemize}
    Having those two matroids, we verify if each has rank $x$; if not, we skip this guess of $x$.
    
    Finally,
    for the case when $f$ is monotone submodular,
    we define $f' : 2^{V'} \to \mathbb{R}_+$ in the natural way: $f'(S') := f(\pi(S'))$.
    Note that $f'$ is then also monotone submodular.
    Indeed, for any $S' \subseteq V', v' \in V'$, we have $\pi(S') \subseteq \pi(S' \cup \{v'\})$. So
    \begin{align*}
        f'(v' | S') = f(\pi(S' \cup \{v'\}))  - f(\pi(S')) \geq f(\pi(S')) - f(\pi(S')) = 0,
    \end{align*}
    thus $f'$ is monotone.
 Also,
 for any $S' \subseteq T' \subseteq V', v' \in V'$,
  we either have
  $\pi(v') \in \pi(T')$,
  in which case $f'(v' \mid T') = f(\pi(v') \mid \pi(T')) = 0 \le f'(v' \mid S')$ because $f'$ is monotone;
  or $\pi(v') \not \in \pi(T')$,
  in which case $\pi(v') \not \in \pi(S')$ and
  $f'(v' \mid T') = f(\pi(v') \mid \pi(T')) \le f(\pi(v') \mid \pi(S')) = f'(v' \mid S')$ because $f$ is submodular;
    thus $f'$ is submodular too.
    
    Now call $\cA$ on instance $V', \cI_1, \cI_2, f'$. To verify that the reduction works, we need to check:
    \begin{itemize}
    	\item If $S$ is a feasible solution to FMSM, then for guess $x = |S|$, the following ``lift'' $S'$ of $S$ is feasible for SMOMIB: from each color $c$, pick some $\ell_c$ elements $v \in V_c \cap S$ and take $v_\ell$ into $S'$, while taking $v_u$ for the other $|V_c \cap S| - \ell_c$ many elements. Then $\pi(S') = S$ so $S' \in \cI_1$, we also have $S' \in \cI_2$ by construction, and $|S'| = |S| = x = k_1 = k_2$. Also $f(S) = f'(S')$.
    	\item Conversely, for any $x$ and any $S' \in \cI_1 \cap \cI_2$ with $|S'| = x$, we have that $S := \pi(S')$ has $|S| = |S'|$ and one can check that $S$ is feasible for the fair problem (in particular, we must then have $|S' \cap \{v_\ell : v \in V_c\}| = \ell_c$ for all $c$). Also $f(S) = f'(S')$.
    \end{itemize}
    This concludes the proof for the case where $f$ is monotone submodular. 

    If $f$ is a modular function (not necessarily monotone), then we can instead define $f'$ as $f'(S') = \sum_{v : v_\ell \in S'} f(v) + \sum_{v : v_u \in S'} f(v)$, which is also modular. Note that $f'(S') = f(\pi(S'))$ for all sets $S' \in \cI_1$, which is sufficient for the  reduction to hold.
    \end{proof}

    Now we are ready to give another proof of \cref{thm:linearOffline}, which we restate for convenience.

    \linearOffline*
    \begin{proof}
    By \cref{prop:reduction_smomib}, 
    it is enough to give a polynomial-time algorithm for SMOMIB in the special case of modular objective. To that end, we define an objective function $f''(S') = \lambda |S'|  + f'(S')$, where $\lambda$ is sufficiently large.
    This is still a modular function. Now we run any exact weighted matroid intersection algorithm (see \cref{lem:intersection}) to maximize $f''$ over $\cI_1 \cap \cI_2$.  Taking $\lambda > \max_{S' \subseteq V'} f'(S')$ ensures that the returned solution $\hat{S}$ is an optimal solution for modular SMOMIB. 
    Indeed, 
    let $S^*$ be an optimal solution to modular SMOMIB; if $|\hat{S}|$ is not full rank, we have:
    \begin{align*}
     \lambda (k_1-1)  + f'(\hat{S})   \geq \lambda |\hat{S}|  + f'(\hat{S})   \geq \lambda k_1  + f'(S^*). 
    \end{align*}
    Hence, $f'(\hat{S}) \geq f'(S^*) + \lambda > f'(S^*) + f'(\hat{S})$, leading to a contradiction. We thus have that $|\hat{S}| = k_1 = k_2$ and $f'(\hat{S}) \geq f'(S^*)$.
    \end{proof}

\section[Proof of the Theorem]{Proof of \cref{thm:kapralov}} \label{sec:kapralov-proof}
    In this section, we prove \cref{thm:kapralov} which is based on a reduction to the hardness result of \citet[Theorem 1]{DBLP:journals/corr/abs-2103-11669}.
    
    \thmperfectkapralov*
    \begin{proof}
    The main result (Theorem 1) of  \citet{DBLP:journals/corr/abs-2103-11669} states that no single-pass semi-streaming algorithm can find a $((1 / (1 + \ln{2}) + \eta)$-approximate maximum matching in a bipartite graph, for any absolute constant $\eta>0$, with probability at least $1/2$. 
    This differs from the statement of the theorem in two ways: 
    i)  \cref{thm:kapralov} requires the existence of a perfect matching in the input graph which it not the case in \citet[Theorem 1]{DBLP:journals/corr/abs-2103-11669}; ii) the approximation factors are different.
    
    The lower bound of \citet[Theorem 1]{DBLP:journals/corr/abs-2103-11669} is achieved using a hard input distribution on graphs which contain a nearly-perfect matching with high probability. In particular, let $\hat G = (P \cup  Q, \hat E)$ be the random bipartite input graph of the hard distribution and $\hat n = |P|+|Q|$. The definition of $\hat G$ (see Equations (239)-(241) in Section 7.1) and the parameter settings (see (p0)-(p7) in Section 5.2 and Lemma 85) imply that $|P| = N \cdot\Theta(\lfloor L/2 \rfloor + 1), |Q| = N 
    \cdot\Theta(\lceil L/2 \rceil + 1/2),$ where $L$ is an arbitrary, sufficiently large, absolute constant, satisfying $\eta = o(1/L)$, and $N$ is a sufficiently large constant as a function of $L$. We thus have $|P| = (1\pm O(1/L))|Q|$.
     Lemma 150 of \citet{DBLP:journals/corr/abs-2103-11669} states that with probability at least $1-O(1/N)$, $\hat G$ contains a matching of size at least $(1-O(1/L))|P|$.

    Choosing $N, L$ sufficiently large, and $\eta$ sufficiently small, we can ensure that there exists a random distribution of bipartite, $\hat n$-vertex graphs, such that
    \begin{enumerate}
        \item the random graph $\hat G$ has a matching of size at least $0.999 \cdot \hat n/2$ with probability at least $0.999$,
        \item no semi-streaming algorithm can find a $0.6$-approximate maximum matching with probability more than $1/2$.
    \end{enumerate}

    From here, we can exclude the possibility that a semi-streaming algorithm exists that can find a $2/3$-approximate matching, given that the input graph contains a perfect matching. Suppose for contradiction that such an algorithm $\mathcal A$ exists, with $2/3$ success probability \footnote{Assuming any constant success probability here is equivalent, as such an algorithms can always be ran in parallel multiple times, with independent sources of randomness, to boost its success probability}. We can use $\mathcal A$ to solve the hard instance distribution of \citet{DBLP:journals/corr/abs-2103-11669}. We simply augment $\hat G$ with a small number of additional vertices and edges:
    \begin{enumerate}
        \item $\hat n/100$ new vertices added to $P$, called $P^+$;
        \item $|P|+|P^+|-|Q|$ new vertices added to $Q$, called $Q^+$;
        \item a complete bipartite graph between $P$ and $Q^+$;
        \item a complete bipartite graph between $P^+$ and $Q$;
        \item a complete bipartite graph between $P^+$ and $Q^+$.   
    \end{enumerate}
    We call the added edges
    $$E^+=\left(P^+\times Q\right)\cup\left(P\times Q^+\right)\cup\left(P^+\times Q^+\right).$$
    We call the augmented graph
    $$\hat G^+=\left(P\cup P^+,Q\cup Q^+,E\cup E^+\right).$$
    We show that $\hat G^+$ is guaranteed to have a perfect matching with probability at least $0.999$. Let $M_{OPT}$ be the maximum matching in $\hat G$, $P_0$ and $Q_0$ the corresponding unmatched vertices of $P$ and $Q$, respectively. Note that $|P_0| = |P| - |M_{OPT}|$ and $|Q_0| = |Q| - |M_{OPT}|$. We can augment $M_{OPT}$ with edges connecting vertices of $P_0$ to $Q^+$, $Q_0$ to $P^+$, and all the remaining unmatched vertices in $P^+$ to the ones in $Q^+$. To do so we need $|Q^+| = |P^+| - |Q_0| + |P_0| = |P^+| + |P| - |Q|$, which is satisfied. We also need $|P^+| \geq |Q_0| = |Q| - |M_{OPT}|$, which holds with probability at least $0.999$, since  $|M_{OPT}| \geq 0.999 \cdot \hat n/2$ with probability at least $0.999$.
    
    Hence, running $\mathcal A$ on $\hat G^+$ is guaranteed to find a matching of size at least $2/3\cdot |P\cup P^+|$ with probability at least $2/3\cdot 0.999 > 1/2$. We can simply discard edges of $E^+$ from this matching, and still retain a better-than-$0.6$-approximate matching in $\hat G$, leading to a contradiction. To see this note that the number of edges in $E^+$ in the matching returned by $\mathcal A$ is at most $\max\{|P^+|, |Q^+|\}$. So the size of the matching after removing these edges is at least $2/3\cdot |P\cup P^+| - |P^+| - (|P| - |Q|)_+$ which is larger than $0.6 \cdot |P|$.
    \end{proof}

\section{Exponential-Memory Algorithm} \label{app-expo-time}

    In this section we present an algorithm for achieving a nearly $1/2$-approximate solution for FMMSM in the streaming setting, albeit with exponential memory in $k$ and $C$. Our algorithm and proof closely follow the result of \citep{HuangKMY22}.
    
    \algexp*
    
    As is standard technique with exponential-memory streaming algorithms, we will first consider our algorithm to have access to hidden information about some optimal solution. We will then replace decisions based on hidden information with random guessing, and show that our algorithm succeeds with positive probability while consuming a bounded amount of randomness. Finally, we run our algorithm in parallel using all possible sequences of random bits, and conclude that at least one instance of the algorithm succeeds.
    
    Let $\text{OPT}$ be a canonical optimal feasible solution, which appears in the stream in the order $o_1, o_2, \ldots, o_\ell$. We will first present an algorithm that assumes approximate knowledge of $f(\text{OPT})$; specifically we assume that our algorithm receives as input some $\gamma$ where $f(\text{OPT})\in[(1-\eta)\cdot\gamma, \gamma]$ is guaranteed. In the, we will show how to get rid of this assumption at a small cost to memory complexity in terms of the so-called aspect ratio, $\Delta$.
    
    Initially our algorithm will also rely on the following pieces of hidden information:
    \begin{enumerate}
        \item The cardinality $\ell$ of $\text{OPT}$; we call this the \textbf{cardinality oracle}.
        \item The color of any opt element, $c(o_i)$; we call this the \textbf{color oracle}.
        \item The $f$-value of any opt element, conditioned on a set $S$ (that we fix later on), $f(o_i|S)$; we call this the \textbf{function oracle}. Here we need only that the oracle returns the value up to an additive error of $f(\text{OPT})\cdot\eta/\ell$; this will be crucial in bounding the amount of randomness guessing needed to replace the oracle.
        \item The independence in $\mathcal I$ of some set which may contain opt elements, as well as elements form the algorithm's memory, $S\cup\{o_{i_1},\ldots,o_{i_m}\}\overset{?}{\in}\mathcal I$; we call this the \textbf{matroid oracle}.
    \end{enumerate}
    
    With this in mind, the algorithm is presented in \cref{alg:exp}.
    
    \begin{algorithm}
    \caption{Exponential algorithm\label{alg:exp}}
        \begin{algorithmic}[1]
            \STATE \textbf{Input:} Cardinality, color, function, and matroid oracles, and $\gamma$.
            \STATE $\ell\gets|\text{OPT}|$
            \COMMENT{Query the cardinality oracle.}
            \STATE $S\gets\emptyset$
            \FOR{$i\gets1\ldots\ell$}\label{line:main-loop}
                \STATE $c\gets$ color of $o_i$ \COMMENT{Query color oracle.}
                \STATE Set $\theta\in\mathbb Z$ such that $f(o_i|S)\in[\theta\eta\gamma/\ell,(\theta+1)\eta\gamma/\ell)$\label{line:function-oracle} \COMMENT{Query function oracle.}
                \STATE $T\gets\emptyset$
                \FOR{$e$ element in the stream}
                    \IF{$e$ is not color $c$}\label{line:color-check}
                        \STATE \textbf{continue}
                    \ENDIF
                    \IF{$f(e|S)\not\in[\theta\eta\gamma/\ell,(\theta+1)\eta\gamma/\ell)$}\label{line:function-check}
                        \STATE \textbf{continue}
                    \ENDIF
                    \IF{$S\cup T+e\not\in\mathcal I$}\label{line:matroid-precheck}
                        \STATE \textbf{continue}
                    \ENDIF
                    \IF[Query matroid oracle.]{$S\cup\{o_{i+1},\ldots,o_\ell\}+e\not\in\mathcal I$}\label{line:matroid-check}
                        \STATE $T\gets T+e$
                    \ELSE
                        \STATE $s_i\gets e$
                        \STATE $S\gets S+s_i$
                    \ENDIF
                \ENDFOR
                \STATE \textbf{Return} $S$
            \ENDFOR
        \end{algorithmic}
    \end{algorithm}

    Note the invariant that
    \begin{equation}\label{eq:exp-alg}
    \forall i:\ \ \{s_1,\ldots,s_i,o_{i+1},\ldots,o_{\ell}\}\in\mathcal I
    \end{equation}
    is guaranteed by the matroid oracle call on Line~\ref{line:matroid-check}. 
    
    \begin{lemma}\label{lemma:precheck-correct}
    In \cref{alg:exp}, the \textbf{if} clause on Line~\ref{line:matroid-precheck} is satisfied (and thus the algorithm does not proceed to Line~\ref{line:matroid-check}) only if $\{s_1,\ldots,s_{i-1},e,o_{i+1},\ldots,o_\ell\}\not\in\mathcal I$.
    \end{lemma}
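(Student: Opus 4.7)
I will prove the contrapositive: assuming that $\{s_1,\ldots,s_{i-1},e,o_{i+1},\ldots,o_\ell\}\in\mathcal{I}$, I will show that $S\cup T+e\in\mathcal{I}$, so that the \textbf{if} clause on Line~\ref{line:matroid-precheck} is not satisfied. The idea is to work in the contraction matroid $\mathcal{M}=\mathcal{I}/S$ (well-defined because $S=\{s_1,\ldots,s_{i-1}\}$ is independent by invariant~(\ref{eq:exp-alg})) and reduce the statement to a simple span-containment argument.

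First I would establish the loop invariant $S\cup T\in\mathcal{I}$ throughout the inner loop. This holds at initialization ($T=\emptyset$) and is preserved at each addition $t\mapsto T+t$, because the precheck on Line~\ref{line:matroid-precheck} guarantees $S\cup T+t\in\mathcal{I}$ before the update. Equivalently, $T\in\mathcal{M}$ at all times.

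Next I would characterize the elements of $T$ via the matroid oracle. Every element $t$ that was added to $T$ reached Line~\ref{line:matroid-check}, where the check $S\cup\{o_{i+1},\ldots,o_\ell\}+t\notin\mathcal{I}$ evaluated to true (otherwise it would have been selected as some $s_j$ rather than added to $T$). Writing $O=\{o_{i+1},\ldots,o_\ell\}$, this translates in $\mathcal{M}$ to $O+t\notin\mathcal{M}$. On the other hand, the hypothesis $\{s_1,\ldots,s_{i-1},e,o_{i+1},\ldots,o_\ell\}\in\mathcal{I}$ translates to $O+e\in\mathcal{M}$, and by downward-closedness $O\in\mathcal{M}$. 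Hence each $t\in T$ lies in $\mathrm{span}_\mathcal{M}(O)$, while $e\notin\mathrm{span}_\mathcal{M}(O)$.

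The key step is then a one-line span containment: since $T\subseteq\mathrm{span}_\mathcal{M}(O)$, we get $\mathrm{span}_\mathcal{M}(T)\subseteq\mathrm{span}_\mathcal{M}(O)$, so $e\notin\mathrm{span}_\mathcal{M}(T)$. Combined with $T\in\mathcal{M}$ (the invariant), this yields $T+e\in\mathcal{M}$, i.e.\ $S\cup T+e\in\mathcal{I}$, completing the proof of the contrapositive. I do not anticipate a real obstacle here; the only care needed is the bookkeeping between the algorithmic statement (sets in $\mathcal{I}$) and the standard matroid-contraction language used in the span argument.
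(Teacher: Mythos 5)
Your proof is correct, and it takes a genuinely different route from the paper's. The paper argues by contradiction on the \emph{first} element for which the lemma fails: it notes that every $t\in T$ satisfies $S\cup O+t\not\in\cI$ (with $O=\{o_{i+1},\ldots,o_\ell\}$), and then splits into the cases $|T|>|O|$ and $|T|\le|O|$, applying the augmentation axiom (twice in the second case) to manufacture some $t\in T$ with $S\cup O+t\in\cI$, a contradiction. You instead prove the contrapositive directly in the contracted matroid $\cI/S$: every $t\in T$ lies in $\Span(O)$ because $O+t$ is dependent there while $O$ is independent, whereas $e\notin\Span(O)$; since span is a closure operator, $\Span(T)\subseteq\Span(O)$, so $e\notin\Span(T)$ and hence $T+e$ is independent. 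This buys a shorter, case-free argument that also dispenses with the paper's ``first violating element'' bookkeeping --- your characterization of $T$ and the invariant $S\cup T\in\cI$ follow unconditionally from the code, since an element enters $T$ only after passing the precheck and failing the matroid-oracle check. What the paper's version buys is self-containedness: it uses only the augmentation axiom stated in its preliminaries, whereas you additionally invoke contraction and the closure operator (though the paper does introduce the contraction construction elsewhere, in \eqref{eq:matroid_S}). One shared implicit assumption worth making explicit in your write-up: you need $e\notin O$ for the claim $e\notin\Span(O)$ (and the paper needs it for its cardinality counts); this holds because $o_{i+1},\ldots,o_\ell$ arrive after $o_i$ and iteration $i$ of the outer loop terminates no later than $o_i$'s arrival.
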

    
    \begin{proof}
    
    Consider the first element $e$ for which the Lemma's statement is violated. Recall that at this point $S=\{s_1,\ldots,s_{i-1}\}$ and let $O=\{o_{i+1},\ldots,o_\ell\}$ for simplicity of notation. Suppose for contradiction that $S\cup T+e\not\in\mathcal I$ but $S\cup O +e\in\mathcal I$. Notice also that for all elements $t\in T$ it must be the case that $S\cup O+t\not\in\mathcal I$, otherwise $t$ never would have been added to $T$; this is because, by our assumption, the Lemma statement was true for all previous elements.
    
    If $|T|>|O|$ we immediately get a contradiction: Both $S\cup T$ and $S\cup O$ are independent, and $|S\cup T|>|S\cup O|$ so my the augmentation property of matroids there exists a set $S\cup O+t\in\mathcal I$ for $t\in T$.
    
    If, on the other hand, $|T|\le|O|$, $|S\cup T|<|S\cup +e|$ (also independent), so by the augmentation property of matroids, there exists a set $S\cup T\cup O'\in\mathcal I$ where $O'\subseteq O+e$. However, $e$ cannot be in $O'$, since $S\cup T+e\not\in\mathcal I$ by assumption, so $O'\subseteq O$. Furthermore, $|S\cup T+e|$ and $|S\cup T\cup O'|=|S\cup O+e|>|S\cup O|$. Therefore, by applying the augmentation property again, we can get an independent set of the form $S\cup O+t$ where $t\in T$; this is a contradiction.
    
    \end{proof}

    \begin{lemma}
    \cref{alg:exp} will always find an appropriate element $s_i$, and break out of the loop on Line~\ref{line:main-loop}.
    \end{lemma}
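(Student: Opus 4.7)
The plan is to prove the lemma by induction on the loop index $i$ and to show, as a by-product, that the invariant (\ref{eq:exp-alg}) is maintained after every iteration. The base case $i=0$ reduces to $\mathrm{OPT} \in \mathcal I$, which holds since $\mathrm{OPT}$ is feasible. For the inductive step, I would assume that at the start of iteration $i$ we have $\{s_1, \ldots, s_{i-1}\} \cup \{o_i, o_{i+1}, \ldots, o_\ell\} \in \mathcal I$, and argue that $o_i$ itself is a valid candidate that passes every filter during iteration $i$ and, in the worst case, is itself assigned to $s_i$.

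When the inner loop processes $o_i$, the color check on Line~\ref{line:color-check} passes by the choice of $c$, and the function-value check on Line~\ref{line:function-check} passes by the choice of $\theta$ on Line~\ref{line:function-oracle}. For the precheck on Line~\ref{line:matroid-precheck}, suppose for contradiction that $S \cup T + o_i \not\in \mathcal I$; then \cref{lemma:precheck-correct} would yield $\{s_1, \ldots, s_{i-1}, o_i, o_{i+1}, \ldots, o_\ell\} \not\in \mathcal I$, contradicting the inductive hypothesis. Hence $o_i$ survives the precheck. Finally, for the check on Line~\ref{line:matroid-check}, the set $S \cup \{o_{i+1}, \ldots, o_\ell\} + o_i$ equals $\{s_1, \ldots, s_{i-1}\} \cup \{o_i, o_{i+1}, \ldots, o_\ell\}$, which is in $\mathcal I$ by the hypothesis; so the \textbf{if} condition fails, the \textbf{else} branch fires, and $s_i \gets o_i$ is assigned.

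Therefore some element (either an earlier-arriving candidate that passes all checks, or, at the latest, $o_i$) is selected as $s_i$, and the inner loop terminates with a value for $s_i$. Moreover, Line~\ref{line:matroid-check} guarantees $\{s_1, \ldots, s_i\} \cup \{o_{i+1}, \ldots, o_\ell\} \in \mathcal I$ for whichever element is chosen, so the invariant carries over to iteration $i+1$, closing the induction.

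The main obstacle in this plan is the precheck on Line~\ref{line:matroid-precheck}: without an extra argument, it is not evident that $o_i$ survives it, because the transient set $T$ collected during iteration $i$ depends on the stream order and is not directly controlled by the inductive hypothesis. \cref{lemma:precheck-correct} is precisely the bridge that translates a precheck failure into a failure of the invariant, and thus into a contradiction; the remaining checks follow mechanically from the definitions and the hypothesis.
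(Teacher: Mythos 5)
Your proof is correct and follows essentially the same route as the paper: an induction in which $o_i$ itself is shown to pass the color, value, precheck (via \cref{lemma:precheck-correct} and the invariant~\eqref{eq:exp-alg}), and final matroid filters, so that some element is selected as $s_i$ no later than $o_i$'s arrival. The only presentational difference is that you fold the maintenance of invariant~\eqref{eq:exp-alg} into the induction explicitly, whereas the paper states it separately; the substance is identical.
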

    
    \begin{proof}
    
    We can prove the following stronger claim through induction over $i$: The algorithm will break out of the loop on Line~\ref{line:main-loop} no later than $o_i$'s arrival in the stream.
    
    For any $i$ (both base case and inductive step), we know that $o_i$ is still in the stream when the loop at Line~\ref{line:main-loop} begins. Then the inductive statement follows simply due to the fact that $o_i$ itself will pass all the filters on Lines~\ref{line:color-check},~\ref{line:function-check},~\ref{line:matroid-precheck} (due to \cref{lemma:precheck-correct}), and~\ref{line:matroid-check}: It is the right color, the right size, and $e=o_i$ satisfies the condition $\{s_1,\ldots,s_{i-1},e,o_{i+1},\ldots,o_\ell\}$.
    
    \end{proof}

    From this it follows that \cref{alg:exp} will indeed always output a feasible solution of $\ell$ elements, due to \cref{eq:exp-alg} when $i=\ell$.
    
    We now turn to showing a lower bound on the quality in terms of $f(\text{OPT})$ of the solution output by \cref{alg:exp}.
    
    \begin{lemma}
    The solution output by \cref{alg:exp} has value at least $(1/2-\eta)\cdot f(\text{OPT})$.
    \end{lemma}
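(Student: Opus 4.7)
The plan is to argue that each picked element $s_i$ collects nearly the marginal value that $o_i$ would contribute, and then apply the standard telescoping/submodularity argument that gives the $1/2$-approximation.

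First I would set $S_{i-1} = \{s_1, \ldots, s_{i-1}\}$ and observe that, by the construction of $\theta$ on Line~\ref{line:function-oracle} and the filter on Line~\ref{line:function-check}, both $f(o_i \mid S_{i-1})$ and $f(s_i \mid S_{i-1})$ fall in the same bucket $[\theta \eta \gamma / \ell, (\theta+1) \eta \gamma / \ell)$, which yields
\[ f(s_i \mid S_{i-1}) \;\ge\; f(o_i \mid S_{i-1}) - \eta \gamma / \ell. \]
Summing over $i = 1, \ldots, \ell$ and telescoping on the left side gives
\[ f(S) \;=\; \sum_{i=1}^{\ell} f(s_i \mid S_{i-1}) \;\ge\; \sum_{i=1}^{\ell} f(o_i \mid S_{i-1}) - \eta \gamma. \]

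Next I would bound the right-hand sum from below by $f(\OPT) - f(S)$. Writing $\OPT_i = \{o_1, \ldots, o_i\}$, since $S_{i-1} \subseteq S \cup \OPT_{i-1}$, submodularity gives $f(o_i \mid S_{i-1}) \ge f(o_i \mid S \cup \OPT_{i-1})$, and the latter sum telescopes:
\[ \sum_{i=1}^{\ell} f(o_i \mid S \cup \OPT_{i-1}) \;=\; f(S \cup \OPT) - f(S) \;\ge\; f(\OPT) - f(S), \]
where the last step uses monotonicity of $f$. Combining with the previous inequality yields $2 f(S) \ge f(\OPT) - \eta \gamma$.

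Finally, I would use the assumption $f(\OPT) \in [(1-\eta)\gamma, \gamma]$, which gives $\gamma \le f(\OPT)/(1-\eta)$, to convert $\eta \gamma$ into a multiple of $f(\OPT)$:
\[ f(S) \;\ge\; \frac{1}{2}\bigl(f(\OPT) - \eta \gamma\bigr) \;\ge\; \frac{1-2\eta}{2(1-\eta)}\, f(\OPT) \;\ge\; \left(\tfrac{1}{2} - \eta\right) f(\OPT), \]
where the last step uses $\eta/(2(1-\eta)) \le \eta$ for $\eta \in (0, 1/2)$. I do not anticipate any real obstacle: this is the textbook half-approximation argument, with the only subtlety being the bucket slack $\eta \gamma / \ell$ and the slight mismatch between $\gamma$ and $f(\OPT)$, both of which are absorbed cleanly into the $\eta$ loss.
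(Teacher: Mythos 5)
Your proof is correct and follows essentially the same route as the paper: the key inequality $f(s_i \mid S_{i-1}) \ge f(o_i \mid S_{i-1}) - \eta\gamma/\ell$ from the shared bucket, combined with the standard submodularity/monotonicity telescoping that yields $2f(S) \ge f(\OPT) - \eta\gamma$. The paper merely packages this as an induction on the invariant $2 f(S_i) + f(\{o_{i+1},\dots,o_\ell\} \mid S_i) + i\eta\gamma/\ell \ge f(\OPT)$, which unrolls to exactly your two telescoping sums, and both arguments absorb the $\gamma$-versus-$f(\OPT)$ slack identically.
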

    
    \begin{proof}
    We prove the following statement by induction over $i$:
    \begin{equation}\label{eq:exp-alg-induction}
    2\cdot f(\{s_1,\ldots,s_i\})+f(\{o_{i+1},\ldots,o_\ell\}|\{s_1,\ldots,s_i\})+\frac{i\eta\gamma}{\ell}\ge f(\text{OPT}).
    \end{equation}
    
    The base case of $i=0$ holds trivially, and by substituting in $i=\ell$, we get the statement of the Lemma.
    
    For simplicity denote $S=\{s_1,\ldots,s_{i-1}\}$ and $O=\{o_{i+1},\ldots,o_\ell\}$. To prove the inductive step, it suffices to show that the change in the left hand size of \cref{eq:exp-alg-induction} is positive when moving form $i-1$ to $i$:
    \begin{align*}
        \text{LHS}_i-\text{LHS}_{i-1}&=2\cdot f(S+s_i)-2f(S)+f(O|S+s_i)-f(O+o_i|S)+\frac{\eta\gamma}\ell\\
        &\ge2\cdot f(s_i|S)-f(o_i|S)-f(s_i|S)+\frac{\eta\gamma}\ell\\
        &\ge f(s_i|S)+\frac\eta\ell-f(o_i|S)\\
        &\ge0,
    \end{align*}
    since we know that $f(o_i|S)$ and $f(e|S)$ are both in $[\theta\eta\gamma/\ell,(\theta+1)\eta\gamma/\ell)$ from Lines~\ref{line:function-oracle} and~\ref{line:function-check}.
    
    Finally, taking \cref{eq:exp-alg-induction} with $i=\ell$ gives us
    \begin{align*}
    2\cdot f(\{s_1,\ldots,s_\ell\})+\eta\gamma \ge f(\text{OPT}),
    \end{align*}
    and therefore
    \begin{align*}
    f(\{s_1,\ldots,s_\ell\})\ge f(\text{OPT})\cdot(1/2-\eta\cdot(1+\eta)/2)\ge f(\text{OPT})\cdot(1/2-\eta).
    \end{align*}

    \end{proof}

    This concludes the proof of correctness of \cref{alg:exp} in the presence of the four oracles (cardinality oracle, color oracle, function oracle, and matroid oracle). We are now ready to prove \cref{thm:alg-exp}.

    \begin{lemma}\label{lemma:pos-prob}
    There exists a \emph{randomized} single-pass streaming algorithm using $O(k)$ memory, and outputting a $1/2-\eta$-approximately optimal feasible solution with positive probability, while consuming $O(k^2 + k \log C)$ bits of randomness.
    \end{lemma}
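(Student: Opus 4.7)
The plan is to convert \cref{alg:exp} from an oracle-dependent procedure into a standard randomized streaming algorithm by replacing every oracle call with a uniformly random guess drawn from an appropriately small set. Whenever the random guesses all happen to coincide with the true oracle answers, the algorithm executes exactly the trace analyzed above, and therefore outputs a $(1/2-\eta)$-approximate feasible solution. It thus suffices to bound the total number of random bits $Q$ consumed; the success probability is then at least $2^{-Q}>0$. The memory bound is immediate: the algorithm only ever stores $S$ and $T$, each of size at most $k$, plus a constant number of scalars.

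I would bound each oracle separately. The cardinality oracle returns $\ell \in \{1,\ldots,k\}$, costing $O(\log k)$ bits. The color oracle is invoked once per outer iteration, hence at most $\ell \le k$ times, each returning one of $C$ colors, for $O(k \log C)$ bits in total. The function oracle returns the bucket index $\theta$ on Line~\ref{line:function-oracle}; by monotonicity and submodularity, $f(o_i \mid S) \le f(o_i) \le f(\OPT) \le \gamma$, so $\theta$ ranges over at most $O(\ell/\eta)$ non-negative integers, contributing $O(k \log k)$ bits across all iterations (treating $\eta$ as constant).

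The delicate piece is the matroid oracle, since it is triggered inside the inner loop over the stream. The key observation that tames its cost is that the oracle is consulted only when the pre-check $S \cup T + e \in \cI$ passes; if in addition the oracle answers ``not independent'', then $e$ is appended to $T$, and independence of $S \cup T$ forces $|T| \le k - |S|$. As soon as the oracle answers ``independent'', we commit $e$ as $s_i$ and exit the iteration. Hence at most $k+1$ matroid queries are performed per outer iteration, each a single bit, giving $O(k^2)$ bits across all $\ell \le k$ iterations.

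Summing the four contributions gives $Q = O(\log k + k \log C + k \log k + k^2) = O(k^2 + k \log C)$, as required. The main obstacle I expect is exactly this per-iteration bound on matroid-oracle calls, together with the bound on the range of $\theta$; both reduce to invariants already established for \cref{alg:exp} combined with monotonicity and submodularity of $f$, but they are what makes the $O(k^2 + k \log C)$ accounting go through rather than a naive bound that scales with the stream length.
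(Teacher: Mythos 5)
Your proposal is correct and follows essentially the same route as the paper's proof: replace each oracle with uniform random guesses, then account for the four oracles separately, with the matroid oracle tamed by observing that a ``true'' answer ends the iteration while each ``false'' answer grows the independent set $T$, bounding the calls per iteration by $k+1$. The accounting $O(\log k + k\log C + k\log(k/\eta) + k^2) = O(k^2 + k\log C)$ matches the paper's.
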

    
    \begin{proof}
    \cref{alg:exp} is such an algorithm when replacing the three oracles with uniformly random choices. Indeed, it produces the correct output with positive probability (when all random choices happen to be correct).
    
    The cardinality oracle is called only once and chooses between $k$ options, so a random implementation consumes $\log k$ random bits. The color oracle is called at most $k$ times and chooses between $C$ options, so a random implementation consumes $O(k\log C)$ random bits. The function oracle is called at most $k$ times and chooses between $O(k/\eta)$ options. This is because $\theta$ is at least $0$ and at most $\ell/\eta\le k/\eta$, since $\gamma\ge f(\text{OPT})\ge f(o_i)\ge f(o_i|S)$. Therefore, a random implementation consumes $O(k\log(k/\eta))$ random bits. The matroid oracle is called at most $k^2 + k$ times; at most $k + 1$ times every iteration of the for loop in Line~\ref{line:main-loop}. This is because every time it is called and returns true (that is $\{s_1,\ldots,s_{i-1},e,o_{i+1},\ldots,o_\ell\}\in\mathcal I$), the current iteration of the loop is terminated; every time it is called and returns false, $T$ is incremented, and since $T\in\mathcal I$, it can have size at most $k$. Therefore, a random implementation of this consumes $k^2 +k$ random bits.
    
    In total this is $O(\log k+ k^2+k\log(k/\eta)+k\log C)=O(k^2 + k\log C)$ random bits.
    \end{proof}
    
    \begin{proof}[Proof of \cref{thm:alg-exp}]
    We simply run $2^{O(k^2+k \log C)}$ parallel copies of the algorithm guaranteed by \cref{lemma:pos-prob}, each with a different stream of bits as randomness. At least one is guaranteed to succeed. We can then find and return the highest valued feasible set output by the algorithms.
    
    However, all versions of \cref{alg:exp} assume access to $\gamma$ with the guarantee that $f(\text{OPT})\in[(1-\eta)\cdot\gamma, \gamma]$. For the purposes of this proof, we call the above algorithm the $\gamma$-dependent algorithm; it satisfies the requirements postulated by \cref{thm:alg-exp}, but only under the condition that $\gamma$ is set correctly. We will now show how to drop this requirement while losing a $O(\log\Delta)$ factor in the memory complexity. (What follows is standard technique often used in the literature in the context of streaming submodular maximization.) We again run multiple copies of the $\gamma$-dependent algorithm, with different guesses of $\gamma$. In fact, we run a copy for $\gamma=(1-\eta)^t$ for every $t\in\mathbb Z$, thus guaranteeing that in at least one of the cases $f(\text{OPT})\in[(1-\eta)\cdot\gamma, \gamma]$ is satisfied.
    
    Although this is potentially an infinite number of parallel copies, we show that all but $O(\log(k\Delta))$ of them fall into one of two classes, such that copies within the same class look identical to each other; thus we require only $2^{O(k^2)}\cdot\log\Delta$ memory in the end. Recall that $\Delta$ is the ratio between the value of the larges and smallest (non-zero) elements on the stream, that is $\Delta=\max f(e)/\min_{f(e)\neq0}f(e)$. Let the largest and smallest elements be $e_\text{max}$ and $e_\text{min}$ respectively, such that $\Delta = f(e_\text{max})/f(e_\text{min})$.
    
    For values of $\gamma$ that are less than $f(e_\text{min})/2$, an element $e$ can only pass the filter at Line~\ref{line:function-check} if $f(e)=0$. This can be proven by induction. As long as $S$ contains only elements with $f$-value $0$, $f(S)=0$, and $\forall e:\ f(e|S)=f(e)$. For any $e$ such that $f(e)>0$ it follows that $f(e|S)\ge f(e_\text{min})>(\theta+1)\eta\gamma/\ell$ so $e$ does not pass the filter at Line~\ref{line:function-check}. As a result all copies of the $\gamma$-dependent algorithm with $\gamma\le f(e_\text{min})/2$ look identical to each other and can be stored as one.
    
    For values of $\gamma$ that are greater than $f(e_\text{max})\cdot k/\eta$ it is also true that all copies of the $\gamma$-dependent algorithm look identical. In this case, if $\theta$ is anything other than $0$ on Line~\ref{line:function-oracle}, all elements will be filtered out on Line~\ref{line:function-check}, since $\forall e:\ f(e)<\gamma\eta/\ell$. On the other hand, if $\theta$ is $0$ on Line~\ref{line:function-oracle}, \emph{all} elements $e$ pass the filter on Line~\ref{line:function-check} for the same reason. Therefore, when $\gamma>f(e_\text{max})\cdot k/\eta$, the exact value of $\gamma$ is irrelevant to the execution of the $\gamma$-dependent algorithm, and all such copies can be stored as one.
    
    In summary, copies of the $\gamma$-dependent algorithm for $\gamma$'s over $f(e_\text{max})\cdot k/\eta$ as well as $\gamma$'s under $f(e_\text{min})/2$ are stored as though they constituted only two total copies of the $\gamma$-dependent algorithm. (This can be done  without foreknowledge of $f(e_\text{max})$, $f(e_\text{min})$ or even $\Delta$.) All other copies of the algorithm are stored explicitly --- a total of $O(\log(k\Delta))$ copies. Once again, the correct solution among all possibilities can be selected by simply picking the largest $f$-valued \emph{feasible} set. The total memory complexity is $2^{O(k^2)}\cdot\log\Delta$.

    \end{proof}

\section[Proof of the other Theorem]{Proof of \cref{hardness-streaming-main}} \label{sec:proof_of_hardness-streaming-main}
        Towards a contradiction assume that $\mathcal{A}$ is an algorithm as in the statement of \cref{hardness-streaming-main}.  We then describe how to construct an algorithm $\mathcal{B}$ to find a perfect matching. Consider any instance of the perfect bipartite matching streaming problem, and let $\left<(l_1, r_1), (l_2, r_2), \ldots (l_{|E|}, r_{|E|}) \right>$ denote the stream of edges of a bipartite graph $G(L\cup R, E)$. 
        Define the following matroid constraint on $E$: a subset of edges is independent if it has at most one edge incident to any left vertex $l \in L$. Note that this is a partition matroid, and its rank $k = n$, as we can assume that each left vertex has at least one edge (otherwise there is no perfect matching). Moreover, we use the fairness constraint to ensure that {\em exactly} one edge incident to each vertex $r \in R$ is selected; we have $C = n$. 
        With these two constraints on the set of edges, we have that any solution $S\subseteq E$ is feasible if and only if $S$ is a perfect matching. The submodular function $f$ does not play a role in this reduction and can be defined arbitrarily. $\mathcal{B}$ can simulate the behavior of algorithm $\mathcal{A}$ on the edge set with the constraints defined above and returns that there exists a perfect matching if and only $\mathcal{A}$ return that there exists a feasible solution. This contradicts \cref{matching-hardness} and concludes the proof.

\end{document}